\documentclass{article}


\usepackage[T1]{fontenc}
\usepackage{microtype}
\usepackage{subfigure}
\usepackage{booktabs} 
\usepackage{amsfonts, amsmath, amssymb, amsthm}
\usepackage{xfrac}
\usepackage{graphicx, xcolor, colortbl}
\usepackage{footnote}
\usepackage{tablefootnote}
\usepackage{natbib}
\usepackage{dsfont}
\usepackage{bbm}
\usepackage{algorithm}
\usepackage{algorithmic}
\usepackage{import}
\usepackage{mathtools}
\usepackage{bm}
\usepackage{xspace}
\usepackage{thmtools,thm-restate} 
\usepackage{accents}
\usepackage{framed}
\usepackage{enumitem}
\usepackage{booktabs}
\usepackage{color, colortbl}

\usepackage[final]{sty/neurips_2022}

\usepackage{notation/notations}

\usepackage{minitoc}

\theoremstyle{plain}
\newtheorem{theorem}{Theorem}[section]
\newtheorem{proposition}[theorem]{Proposition}
\newtheorem{lemma}[theorem]{Lemma}
\newtheorem{corollary}[theorem]{Corollary}
\theoremstyle{definition}

\theoremstyle{remark}
\newtheorem{remark}[theorem]{Remark}

\usepackage{scrextend}

\usepackage{multirow}

\usepackage{todonotes}




\usepackage[utf8]{inputenc} 
\usepackage[T1]{fontenc}    
\usepackage{hyperref}       
\usepackage{url}            
\usepackage{booktabs}       
\usepackage{amsfonts}       
\usepackage{nicefrac}       
\usepackage{microtype}      
\usepackage{xcolor}         

\title{Optimistic Posterior Sampling for Reinforcement
Learning with Few Samples and Tight Guarantees}

%

\author{%
  Daniil Tiapkin \\
  HSE University\\
  \texttt{dtyapkin@hse.ru} \\
  \And
  Denis Belomestny \\
  Duisburg-Essen University, HSE University \\
  \texttt{denis.belomestny@uni-due.de} \\
  \And
  Daniele Calandriello \\
  DeepMind \\
  \texttt{dcalandriello@deepmind.com} \\
  \And
  \' Eric Moulines \\
  \' Ecole Polytechnique \\
  \texttt{eric.moulines@polytechnique.edu} \\
  \And
  Remi Munos \\
  DeepMind \\
  \texttt{munos@deepmind.com} \\
  \And
  Alexey Naumov \\
  HSE University \\
  \texttt{anaumov@hse.ru} \\
  \AND 
  Mark Rowland \\
  DeepMind \\
  \texttt{markrowland@deepmind.com} \\
  \And 
  Michal Valko \\
  DeepMind \\
  \texttt{valkom@deepmind.com} \\
  \And 
  Pierre M\' enard \\
  ENS Lyon\\
  \texttt{pierre.menard@ens-lyon.fr} \\
}

\begin{document}

\maketitle

\begin{abstract}
We consider reinforcement learning in an environment modeled by an episodic, finite, stage-dependent Markov decision process of horizon $H$ with $S$ states, and $A$ actions. The performance of an agent is measured by the regret after interacting with the environment for~$T$ episodes. We propose an optimistic posterior sampling algorithm for reinforcement learning (\OPSRL), a simple variant of posterior sampling that only needs a number of posterior samples logarithmic  in $H$, $S$, $A$, and~$T$ per state-action pair. For \OPSRL we guarantee a high-probability regret bound of order at most $\tcO(\sqrt{H^3SAT})$ ignoring
$\text{poly}\log(HSAT)$ terms. The key novel technical ingredient is a new sharp anti-concentration inequality for linear forms which may  be of independent interest. Specifically, we extend the normal approximation-based lower bound for Beta distributions by \citet{alfers1984normal} to Dirichlet distributions. Our bound matches the lower bound of order $\Omega(\sqrt{H^3SAT})$, thereby answering the open problems raised by \citet{agrawal2020posterior} for the episodic setting. 

\end{abstract}

\doparttoc 
\faketableofcontents 

\section{Introduction}
\label{sec:introduction}

In reinforcement learning an agent interacts with an environment, whose underlying mechanism is unknown, by sequentially taking actions, receiving rewards, and transitioning to the next state \citep{SuttonBarto98}. With the goal of maximizing the expected sum of the collected rewards, the agent must carefully balance between \emph{exploring} in order to gather more information about the environment and \emph{exploiting} the current knowledge to collect the rewards. In this paper, we are interested in solving this exploration-exploitation dilemma by injecting noise into the agent's decision-making process.

We model the environment as an episodic, finite, unknown Markov decision process (MDP) of horizon $H,$ with $S$ states and $A$ actions. In particular, we consider the \emph{stage-dependent} setting where the rewards and the transition probability distributions can vary within an episode. After $T$ episodes, the performance of an agent is measured through \emph{regret} which is the difference between the cumulative reward the agent could have obtained by acting optimally and what the agent really obtained.

\citet{jin2018is} and \citet{domingues2020regret} provide a problem-independent lower bound of order $\Omega(\sqrt{H^3SAT})$ for this setting; see also \citet{azar2017minimax} for a lower bound when the transitions are stage-independent.

One generic solution to the exploration-exploitation dilemma is the \emph{principle of optimism in the face of uncertainty}. A simple way to implement this principle consists in building \emph{upper confidence bound (UCB)} on the optimal Q-value function through the addition of \emph{bonuses} to the rewards. 
This is done by either model-based algorithms \citep{azar2017minimax,dann2017unifying,Zanette19Euler} or model-free algorithms \citep{jin2018is,zhang2020advantage,menard2021ucb}; see also \citep{jaksch2010near,fruit2018efficient,talebi2018variance} for the non-episodic setting. Notably, among others, both the upper confidence bound value iteration (\UCBVI) of \citet{azar2017minimax} and the UCB-Advantage algorithm of~\citet{zhang2020advantage} enjoys a problem-independent regret bound\footnote{We translate all the bounds to the \emph{stage-dependent} setting by multiplying the regret bounds in the stage-independent setting by $\sqrt{H}$, see \citet{jin2018is}.} of order\footnote{In the $\tcO(\cdot)$ notation we ignore terms poly-$\log$  in $H,S,A,T$.}
$\tcO(\sqrt{H^3SAT})$ that matches the aforementioned lower bound for $T$ large enough and up to  terms poly-logarithmic in $H,S,A,T$.

Another way is to implement the optimism by \emph{injecting noise}. A typical example is the random least-square value iteration (\RLSVI, \citealp{osband16generalization,russo2019worst}) algorithm which at each episode computes new Q-values by noisy value iteration from an estimated model  and then acts greedily with respect to them. In particular, a Gaussian noise is added to the reward before applying the Bellman operator to encourage exploration. Indeed, when the variance of the noise is carefully chosen, it allows to obtain optimistic Q-values with at least a fixed probability. \citet{russo2019worst} first proved a regret bound of order $\tcO(H^2S^{3/2}\sqrt{AT})$ for \RLSVI.  Later, \citet{xiong2021nearoptimal} obtained an optimal regret bound of order $\tcO(\sqrt{H^3SAT})$ for a modified version of \RLSVI where the variance of the injected Gaussian noise is scaled by a term similar to the Bernstein bonuses used in \UCBVI. Note that the \RLSVI was also successfully extended beyond the tabular case to settings with function approximation, e.g. see \citealp{ishfaq2021randomized, zanette2020frequentist}.

Recently, \citet{pacchiano2021towards} analyzed a version of \RLSVI where the Gaussian noise is replaced by a bootstrap sample of \emph{the past rewards} and added pseudo rewards in the same fashion as \citet{kveton2019garbage}. The algorithm proposed by \citet{pacchiano2021towards}, comes with a regret bound of order $\tcO(H^2S\sqrt{AT})$.

By generalizing the Thompson sampling algorithm \citep{thompson1933on} originally given for stochastic multi-armed bandit, \citet{obsband2013more} propose a posterior sampling for reinforcement learning (\PSRL).  \PSRL algorithm also relies on noise to drive exploration. The general idea behind it is to maintain a \emph{surrogate Bayesian model} on the MDP, for instance, a Dirichlet posterior on the transition probability distribution if the rewards are known. At each episode, a new MDP is sampled (i.e., a transition probability for each state-action pair) according to the posterior distribution of the Bayesian model. Then, the agent acts optimally in this sampled MDP. As the posterior is not well concentrated in the unexplored region of the MDP, the probability that the Q-value of the sampled MDP is optimistic in this region is high. Therefore, the agent will be incentivized to explore. Although the original Thompson sampling is well-studied in the frequentist setting \citep{agrawal2012analysis,kaufmann2012thompson,agrawal2013further,zhang2022feel} and the Bayesian setting \citep{thompson1933on,russo2016information,russo2014learning}, most of the analysis of \PSRL only provide Bayesian regret bounds \citep{obsband2013more,abbasi2015bayesian,osband16generalization,ouyang2017learning,osband2017why}, i.e., when the true MDP is effectively sampled according to the prior of the surrogate Bayesian model. Despite this lack of guarantees, \PSRL demonstrates competitive empirical performance in comparison to bonus-based algorithms \citep{obsband2013more,osband2017why}. Additionally, the exploration mechanism used by \PSRL (and \RLSVI) was successfully extended outside the tabular setting and used in deep RL environments \citep{osband2016deep,osband18randomized,osband2019deep}.

One exception to the above is the work of~\citet{agrawal2020posterior} that studies \PSRL from a \textit{frequentist} perspective in the infinite-horizon, non-episodic average reward setting. In particular, they provide a regret bound\footnote{As acknowledged by the authors, there was a mistake in the initial submission of their work where the previously announced bound was claimed to be $\sqrt{S}$ better, see \citet{agrawal17correct,qian2020concentration} } of order\footnote{We translate all the bounds from the infinite-horizon, non-episodic average reward setting to our setting by identifying the diameter with the horizon $H$ and multiplying the bound by $\sqrt{H}$ because of our stage-dependent transitions assumption.} $\tcO(H^2S\sqrt{AT})$ 
for an optimistic version of \PSRL that we call \SOPSRL  since it switches between two types of sampling of the transitions: (1) \textit{simple optimistic sampling}, when the number of observed transitions at a given state-action pair is too small. In this case, the sampled transition is a random mixture between the uniform distribution over the states and an empirical estimate of the true transition biased by some bonus-like terms; or if the number of observed transitions at a given state-action pair is large enough (2) \textit{optimistic posterior sampling,}  where $\tcO(S)$ samples from an inflated Dirichlet posterior are used instead of one sample used in \PSRL. Then, from these $\tcO(S)$ sampled transition probabilities we select the most optimistic one i.e., the one leading to the largest optimal Q-value.

The key idea underpinning the analysis of \SOPSRL, and \PSRL-like algorithms in general, is to control the deviations of the Dirichlet posterior on the transition probability distributions. In particular, we need to show that the \textit{posterior spreads enough to ensure optimism}. To this end, \citet{agrawal2020posterior} derive an anti-concentration bound for any fixed projection of a Dirichlet random vector. The latter result in turn relies upon an equivalent representation of a Dirichlet vector in terms of independent Beta random variables and an anti-concentration bound for the corresponding Beta distribution. However, this anti-concentration inequality is not uniformly tight, in particular its polynomial dependence on the number of states $S$ is suboptimal.

\citet{agrawal2020posterior} conclude with two open problems. The first question is whether one can reduce the number of posterior samples required per state-action pair from $\tcO(S)$ to constant or logarithmic in $S$. The second asks if it is possible to obtain a near-optimal regret bound and in particular to improve the dependence on $S$. In this paper, we \emph{answer both of them in the affirmative} in the episodic setting. Indeed, we propose optimistic posterior sampling algorithm  for reinforcement learning (\OPSRL)  that only requires $\tcO(1)$ samples from an inflated posterior while enjoying a near-optimal problem independent regret bound of order $\tcO(\sqrt{H^3 SAT})$. \OPSRL is a simple optimistic variant of \PSRL which, in particular, does not rely at all on ''simple'' (bonus-based) optimistic sampling.

The essential ingredient for \OPSRL's analysis is our \textit{novel anti-concentration bound for the projections of a Dirichlet random vector} (Theorem~\ref{thm:gaussian_lb}). We base it on a tight Gaussian approximation for linear forms of a Dirichlet random vector. This latter approximation can be seen as a substantial generalization to Dirichlet distributions of the result obtained by \cite{alfers1984normal} for the case of Beta distributions. We obtain this approximation through a refined non-asymptotic analysis of the integral representation for the density of a linear form of a Dirichlet random vector, which was first derived\footnote{Note that the anti-concentration inequality proved by \citet{tiapkin22dirichlet} based on the same integral representation is insufficient for our needs, see Remark~\ref{rem:bound_icml_isufficent} for a discussion.} by \citet{tiapkin22dirichlet}. We believe that the new anti-concentration inequality presented in this work could be of independent interest, e.g., to tighten or simplify analysis of non-parametric Thompson sampling like algorithms \citep{riou20a,baudry21optimal,baudry2021optimality} for stochastic multi-armed bandits.

\begin{itemize}[leftmargin=0.5cm]
    \item We propose the \OPSRL algorithm for tabular, stage-dependent, episodic RL. It is a simple optimistic variant of the \PSRL algorithm that only needs $\tcO(1)$ posterior samples per state-action pair. For \OPSRL, we provide a regret bound of order $\tcO(\sqrt{H^3SAT})$ matching the problem independent lower bound up to poly-$\log$ terms. In particular we answer positively to two open questions by \citet{agrawal2020posterior} in the episodic setting.
    \item We derive a new anti-concentration inequality for a linear form of a Dirichlet random vector (Theorem~\ref{thm:gaussian_lb}) which is essential for the analysis of \OPSRL. This result is a generalization to the Dirichlet case of the one provided by \cite{alfers1984normal} for Beta distributions.
 \end{itemize}
\section{Setting}
\label{sec:setting}

 We consider a finite episodic MDP $\left(\cS, \cA, H, \{p_h\}_{h\in[H]},\{r_h\}_{h\in[H]}\right)$,  where $\cS$ is the set of states, $\cA$ is the set of actions, $H$ is the number of steps in one episode, $p_h(s'|s,a)$ is the probability transition from state~$s$ to state~$s'$ by taking the action $a$ at step $h,$ and $r_h(s,a)\in[0,1]$ is the bounded deterministic\footnote{We study deterministic rewards to simplify the proofs but our result extend to bounded random rewards as well.} reward received after taking the action $a$ in state $s$ at step $h$. Note that we consider the general case of rewards and transition functions that are possibly non-stationary, i.e., that are allowed to depend on the decision step $h$ in the episode. We denote by $S$ and $A$ the number of states and actions, respectively.

\paragraph{Policy \& value functions} A \emph{deterministic} policy $\pi$ is a collection of functions $\pi_h : \cS \to \cA$ for all $h\in [H]$, where every $\pi_h$  maps each state to a \emph{single} action. The value functions of $\pi$, denoted by $V_h^\pi$, as well as the optimal value functions, denoted by $\Vstar_h$ are given by the Bellman and the optimal Bellman equations,
\begin{small}
\begin{align*}
	Q_h^{\pi}(s,a) &= r_h(s,a) + p_h V_{h+1}^\pi(s,a) & V_h^\pi(s) &= \pi_h Q_h^\pi (s)\\
  Q_h^\star(s,a) &=  r_h(s,a) + p_h V_{h+1}^\star(s,a) & V_h^\star(s) &= \max_a Q_h^\star (s, a),
\end{align*}
\end{small}%
\!where by definition, $V_{H+1}^\star \triangleq V_{H+1}^\pi \triangleq 0$. Furthermore, $p_{h} f(s, a) \triangleq \E_{s' \sim p_h(\cdot | s, a)} \left[f(s')\right]$   denotes the expectation operator with respect to the transition probabilities $p_h$ and
$\pi_h g(s) \triangleq  g(s,\pi_h(s))$ denotes the composition with the policy~$\pi$ at step $h$.

\paragraph{Learning problem} The agent, to which the transitions are \emph{unknown} (the rewards are assumed to be known for simplicity), interacts with the environment during $T$ episodes of length $H$, with a \emph{fixed} initial state $s_1$.\footnote{As explained by \citet{fiechter1994efficient} and \citet{kaufmann2020adaptive}, if the first state is sampled randomly as $s_1\sim p,$ we can simply add an artificial first state $s_{1'}$ such that for  any action $a$, the transition probability is defined as the distribution $p_{1'}(s_{1'},a) \triangleq p.$} Before each episode $t$ the agent selects a policy $\pi^t$ based only on the past observed transitions up to episode $t-1$. At each step $h\in[H]$ in episode $t$, the agent observes a state $s_h^t\in\cS$, takes an action $\pi_h^t(s_h^t) = a_h^t\in\cA$ and  makes a transition to a new state $s_{h+1}^t$ according to the probability distribution $p_h(s_h^t,a_h^t)$ and receives a deterministic reward $r_h(s_h^t,a_h^t)$.

\paragraph{Regret} The quality of an agent is measured through its regret, that is the difference between what it could obtain (in expectation) by acting optimally and what it really gets,
\[
\regret^T \triangleq  \sum_{t=1}^T \Vstar_1(s_1)- V_1^{\pi^t}(s_1)\,.
\]

\paragraph{Counts} The number of times the state action-pair $(s,a)$ was visited in step $h$ in the first~$t$ episodes is denoted as $n_h^{t}(s,a) \triangleq  \sum_{i=1}^{t} \ind{\left\{(s_h^i,a_h^i) = (s,a)\right\}}$. Next, we define $n_h^{t}(s'|s,a) \triangleq \sum_{i=1}^{t} \ind{\big\{(s_h^i,a_h^i,s_{h+1}^i) = (s,a,s')\big\}}$ the number of transitions from $s$ to $s'$ at step $h$.

\paragraph{Improper Dirichlet distribution} For $m\in\N^*,$ the probability simplex of dimension $m$ is denoted by $\simplex_{m}$. For $\alpha \in (\R_{++})^{m+1},$ we denote by $\Dir(\alpha)$ the Dirichlet distribution on $\simplex_{m}$ with parameter~$\alpha$. We also extend this distribution to improper parameter $\alpha \in (\R_{+})^{m+1}$ such that $\sum_{i=0}^{m}  \alpha_i > 0$
 by injecting $\Dir((\alpha_i)_{i:\alpha_i>0})$ into $\simplex_{m}$. 
 Precisely, we say that $p \sim \Dir(\alpha)$ if $(p_i)_{i:\alpha_i>0} \sim \Dir((\alpha_i)_{i:\alpha_i>0})$ and all other coordinates are zero. 

\paragraph{Additional notation} For $N\in\N_{++},$ we define the set $[N]\triangleq \{1,\ldots,N\}$. We denote the uniform distribution over this set by $\Unif[N]$. The vector of dimension $N$ with all entries one is $\bOne^N \triangleq  (1,\ldots,1)^\top$.
 The empirical probability distribution $\hp^{\,t}_h(s,a)$ is defined as $\hp^{\,t}_h(s'|s,a) = n^{\,t}_h(s'|s,a) / n^{\,t}_h(s,a)$ if $n^t_h(s, a) > 0 $ and $\hp^{\,0}_h(s'|s,a)=1/S$ otherwise. Appendix~\ref{app:notations} references all the notation used.
\section{Algorithm}
\label{sec:algorithm}

In this section we describe the \OPSRL algorithm. In spirit, \OPSRL proceeds similarly as \PSRL except that it uses several posterior samples instead and acts optimistically with respect to them, explaining the name \textit{Optimistic Posterior Sampling for Reinforcement Learning} (\OPSRL).

\paragraph{Optimistic pseudo-state} In order to define the prior used by \OPSRL, we extend the state space $\cS$ by an absorbing pseudo-state $s_0$ with reward $r_h(s_0,a) \triangleq  \ur > 1$ for all $h,a$ and transition probability distribution $p_h(s'| s_0,a) \triangleq  \ind{\{s'=s_0\}}$. A similar pseudo-state was already introduce in previous works, see for example \citet{Brafman02RMAX,szita2008}. We denote by $\cS' = \cS\cup\{s_0\}$ the augmented states space and by $\Delta_{\cS'}$ the set of probability distributions over $\cS'$.

\paragraph{Pseudo-counts} We define the pseudo-counts, $\upn_h^t(s,a) \triangleq n_h^t(s,a)+n_0,$ as the counts shifted by an initial value $n_0$. This shift corresponds to prior transitions to the pseudo-state, that is $\upn_h^t(s'|s,a) \triangleq n_h^t(s'|s,a) + n_0 \ind\{s'=s_0\}$. Similar to the empirical transitions, we define a pseudo-empirical transition probability distribution as $\up^{\,t}_h(s,a) = \upn^{\,t}_h(s'|s,a) / \upn^{\,t}_h(s,a)$.

\paragraph{Inflated Bayesian model} Like \PSRL, we define a Bayesian model on the transition probability distributions, except that the prior/posterior is inflated. The practice of inflating the posterior is common in the analysis of Thompson sampling like algorithm, see \citet{agrawal2020posterior, abeille17linear}. Precisely, the inflated prior is a Dirichlet distribution $\Dir\!\Big(\big(\upn_h^0(s'|s,a)/\kappa\big)_{s'\in\cS'}\Big)$ parameterized by the initial pseudo-counts, and some constant $\kappa>0$ controlling the inflation. Thus the prior is a Dirac distribution at a deterministic transition leading to the artificial state $s_0$.
Then the inflated posterior is also a Dirichlet distribution $\Dir\!\Big(\big(\upn_h^t(s'|s,a)/\kappa\big)_{s'\in\cS'}\Big)$.  Note that the prior is a proper prior (i.e., a valid probability distribution), but it will be updated in an improper way, i.e., probability transitions with no mass under the prior could get mass in the posterior, as they get positive counts.

\paragraph{Optimistic posterior sampling} After episode $t,$ for each state-action pair $(s,a)$ and step $h\in[H]$ we sample $J$ independent transition probability distributions $\tp_h^{\,t,j}(s,a)\sim \Dir\!\Big(\big(\upn_h^t(s'|s,a)/\kappa\big)_{s'\in\cS'}\Big)$ from the inflated posterior. Then, the Q-values are obtained by optimistic backward induction with these transitions. Precisely the value after the last step is zero $\uV_{H+1}^t(s) \triangleq 0$ and the optimal Bellman equations become
\begin{align}
    \label{eq:optimistic_Bellman}
    \begin{split}
        \uQ_h^t(s,a) &\triangleq r_h(s,a)+\max_{j\in[J]} \tp_h^{\,t,j} \uV_{h+1}^t(s,a)\,,\\
        \uV_h^t(s) &\triangleq \max_{a\in\cA} \uQ_h^t(s, a)\,.
    \end{split}
\end{align}
The next policy is greedy with the Q-values $\pi_h^{t+1}(s) \in \argmax_{a\in\cA} \uQ_h^t(s,a)$. The complete procedure of \OPSRL is described in Algorithm~\ref{alg:OPSRL} for a general family of distributions parameterized by the pseudo-counts over the transitions instead of the inflated Dirichlet prior/posterior.

\begin{algorithm}[h!]
\centering
\caption{\OPSRL}
\label{alg:OPSRL}
\begin{algorithmic}[1]
  \STATE {\bfseries Input:} Family of probability distributions $\rho: \N_{+}^{S+1} \to \Delta_{\cS'}$ over transitions, initial pseudo-count~$\upn_h^0$, number of posterior samples $J$.
      \FOR{$t \in[T]$}
      \STATE For all $(s,a,h)\in \cS\times\cA\times[H],$   sample $J$ independent transitions 
      \[
      \tp_h^{\,t-1,j}(s,a)\sim \rho\big(\upn_h^{t-1}(s'|s,a)_{s'\in\cS'}\big),\quad j\in[J].
      \]
      \STATE Optimistic backward induction: set $\uV_{H+1}^{t-1}(s)= 0$ and recursively for $h\in [H],$ compute 
      \begin{align*}
        \uQ_h^{t-1}(s,a) &= r_h(s,a)+\max_{j\in[J]} \bigl\{\tp_h^{\,t-1,j} \uV_{h+1}^{t-1}(s,a)\bigr\}\,,\\
        \uV_h^{t-1}(s) &= \max_{a\in\cA} \uQ_h^{t-1}(s, a)\,,\\
        \pi_h^t(s) &\in \argmax_{a\in\cA} \uQ_{h}^{t-1} (s,a)\,.
      \end{align*}
      \FOR{$h \in [H]$}
        \STATE Play $a_h^t =  \pi_h^t(s_h^t) $.
        \STATE Observe $s_{h+1}^t\sim p_h(s_h^t,a_h^t)$.
        \STATE Increment the pseudo-count $\upn_h^t(s_{h+1}^t | s^t_h, a_h^t)$.
      \ENDFOR
  \ENDFOR
\end{algorithmic}
\end{algorithm}

\subsection{Analysis}
We fix $\delta\in(0,1)$ and the number of samples 
\begin{equation*}
\label{eq:def_J}
    J \triangleq \lceil c_J \cdot \log(2SAHT/\delta) \rceil,
\end{equation*}
where $c_J = 1/\log(2/(1 + \Phi(1)))$ and $\Phi(\cdot)$ is the  cumulative distribution function (CDF) of a normal distribution. Note that $J$ has a logarithmic dependence on $S,A,H,T,$ and $1/\delta$.

We now state the regret bound of \OPSRL with a full proof in Appendix~\ref{app:regret_bound_proof}.
and a sketch in Section~\ref{sec:proof_sketch}.
\begin{theorem}
\label{th:regret_bound_OPSRL} 
Consider a parameter $\delta \in (0,1)$. Let $\kappa \triangleq 2(\log(12 SAH/\delta) + 3\log(\rme\pi(2T+1)))$, $n_0 \triangleq \lceil \kappa(c_{0} + \log_{17/16}(T)) \rceil$, $\ur \triangleq 2$, where  $c_{0}$ is an absolute constant defined in \eqref{eq:constant_c0}; see Appendix~\ref{app:optimism}. Then for \OPSRL, with probability at least $1-\delta$, 
\[
    \regret^T = \cO\left( \sqrt{H^3 SAT L^3}  + H^3 S^2 A L^3 \right),
\]
 where $L \triangleq \cO(\log(HSAT/\delta))$.
\end{theorem}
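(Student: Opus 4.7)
The plan is to follow the standard optimistic regret template, but with all bonus-type arguments replaced by properties of the inflated Dirichlet posterior together with the new anti-concentration inequality (Theorem~\ref{thm:gaussian_lb}). I would first introduce a high-probability event $\cG$ on which three families of statements hold simultaneously for every $(s,a,h,t)$: (i) concentration of the empirical transition $\hp^{\,t}_h(s,a)$ around $p_h(s,a)$ at a Bernstein rate, (ii) a corresponding concentration of each posterior sample $\tp_h^{\,t,j}(s,a)$ around the pseudo-empirical $\up^{\,t}_h(s,a)$ (from standard Dirichlet concentration), and (iii) regularity of the pseudo-counts $\upn_h^t(s,a)$ relative to $n_h^t(s,a)+n_0$. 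A union bound over $(s,a,h,t)$ and over $J$ samples fixes $\Prob(\cG)\ge 1-\delta/3$.

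\textbf{Optimism.} The heart of the argument is to show that on a further event of probability at least $1-\delta/3$, $\uV_1^{t-1}(s_1)\ge \Vstar_1(s_1)$ for every $t$. By backward induction on $h$, it suffices to prove that for every $(s,a,h,t)$,
\[
\max_{j\in[J]} \tp_h^{\,t-1,j}\uV_{h+1}^{t-1}(s,a) \;\ge\; p_h \Vstar_{h+1}(s,a).
\]
For a single $j$, the anti-concentration bound of Theorem~\ref{thm:gaussian_lb} applied to the linear form $\tp_h^{\,t-1,j}\uV_{h+1}^{t-1}(s,a)$ guarantees that, conditionally on the history, the probability of being above $p_h\Vstar_{h+1}(s,a)$ is at least $(1-\Phi(1))/2$, provided $\kappa$ and $n_0$ are set so that the Gaussian approximation kicks in and that the pseudo-state term $\ur$ compensates the bias coming from the pseudo-count $n_0$ and from the shift between $\up^{\,t-1}_h$ and $p_h$ (this is where $\ur=2$ and the logarithmic term in $n_0$ are used, and why the inflation by $\kappa$ is needed to blow up the Dirichlet variance to the Bernstein scale). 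Amplifying by taking the max of $J\ge c_J\log(2SAHT/\delta)$ independent samples sends the failure probability below $\delta/(3SAHT)$, and a union bound closes the optimism argument.

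\textbf{Regret decomposition and Bernstein-type accumulation.} On the intersection of the good event and the optimism event, the standard decomposition gives
\[
\regret^T \le \sum_{t=1}^T \uV_1^{t-1}(s_1) - V_1^{\pi^t}(s_1)
= \sum_{t,h} \bigl[\max_j \tp_h^{\,t-1,j}\uV_{h+1}^{t-1} - p_h V_{h+1}^{\pi^t}\bigr](s_h^t,a_h^t) + M^T,
\]
where $M^T$ is a martingale controlled by Azuma-Hoeffding. The per-step gap is split into $[\max_j \tp_h^{\,t-1,j} - p_h]\uV_{h+1}^{t-1}$ plus $p_h[\uV_{h+1}^{t-1}-V_{h+1}^{\pi^t}]$; the first piece is controlled on $\cG$ by the concentration part of the Dirichlet analysis, which yields a Bernstein-style bound in terms of $\sqrt{\Var_{p_h}(V^\star_{h+1})/\upn_h^{t-1}(s,a)}$ plus a lower-order $1/\upn_h^{t-1}(s,a)$ term, and the second piece is propagated recursively. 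Summing over episodes using the visit counts $n_h^t(s,a)$, invoking Cauchy-Schwarz and the law-of-total-variance identity $\sum_h \E_{\pi^t}[\Var_{p_h}(V_{h+1}^\star)]\le H^2$, one obtains the $\sqrt{H^3SAT\,L^3}$ leading term, while the $H^3S^2AL^3$ term comes from the unavoidable lower-order contributions of the Bernstein deviations and from the pseudo-count offset $n_0$.

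\textbf{Main obstacle.} The delicate step is the optimism argument: the anti-concentration bound must be sharp enough that a constant probability of exceeding $p_h\Vstar_{h+1}$ is achieved after a single posterior sample, uniformly over the range of pseudo-counts $\upn_h^{t-1}(s,a)$ and over arbitrary linear forms defined by $\uV_{h+1}^{t-1}$. This is precisely why the Beta-based bound of \citet{agrawal2020posterior} is insufficient and why the Dirichlet Gaussian approximation in Theorem~\ref{thm:gaussian_lb} is needed; once that ingredient is available, reducing $J$ from $\tcO(S)$ to $\tcO(1)$ and shaving the $\sqrt{S}$ factor in the regret become a matter of carefully tracking constants through the standard optimistic-RL machinery.
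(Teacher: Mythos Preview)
Your strategy mirrors the paper's, but the optimism step as written has a real gap. You apply Theorem~\ref{thm:gaussian_lb} with weight function $f=\uV_{h+1}^{t-1}$; the bound then reads $\P[\tp f\ge\mu]\ge\tfrac12\bigl(1-\Phi(\sqrt{2\ualpha\,\Kinf(\up,\mu,f)})\bigr)$, so to extract a constant you must show $(\upn_h^{t-1}/\kappa)\,\Kinf\bigl(\up_h^{\,t-1},\,p_h\Vstar_{h+1},\,\uV_{h+1}^{t-1}\bigr)\le 1/2$. The only $\Kinf$ deviation inequality available (event $\cE^\star(\delta)$, Theorem~\ref{th:max_ineq_kinf}) controls $n_h^t\,\Kinf\bigl(\hp_h^{\,t},\,p_h\Vstar_{h+1},\,\Vstar_{h+1}\bigr)$ for the \emph{fixed} function $\Vstar_{h+1}$; there is no uniform control when the third argument is the random $\uV_{h+1}^{t-1}$. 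The paper (Proposition~\ref{prop:anticonc}) therefore applies the anti-concentration with $f=\Vstar_{h+1}$ to obtain $\max_{j}\tp_h^{\,t,j}\Vstar_{h+1}(s,a)\ge p_h\Vstar_{h+1}(s,a)$ for all $(t,h,s,a)$, and only \emph{afterwards} runs the backward induction: since $\uV_{h+1}^{t-1}\ge\Vstar_{h+1}$ pointwise by the induction hypothesis, $\max_j\tp_h^{\,t,j}\uV_{h+1}^{t-1}\ge\max_j\tp_h^{\,t,j}\Vstar_{h+1}\ge p_h\Vstar_{h+1}$. Your sketch interleaves the anti-concentration and the induction in a way that hides this essential reliance on a deterministic~$f$.

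The same issue reappears in the regret step, where you claim $[\max_j\tp_h^{\,t-1,j}-p_h]\uV_{h+1}^{t-1}$ is directly bounded by a Bernstein term in $\Var_{p_h}(\Vstar_{h+1})$. Because $\uV_{h+1}^{t-1}\neq\Vstar_{h+1}$ is random, a Bernstein bound for the fixed $\Vstar_{h+1}$ does not apply to it. The paper inserts $\up_h^{\,t-1}$ and $\hp_h^{\,t-1}$ and isolates the correction $[\hp_h^{\,t-1}-p_h][\uV_{h+1}^{t-1}-\Vstar_{h+1}]$ (term~(C) in Lemma~\ref{lem:surrogate_regret_bound}); this is bounded via Lemma~\ref{lem:f_and_l1_concentration} by $\tfrac{1}{H}\,p_h[\uV_{h+1}^{t-1}-\Vstar_{h+1}]+O(H^2S/n)$ and fed back into the recursion, producing the harmless factor $(1+1/H)^{H}\le\rme$. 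Skipping this split and bounding the first piece ``directly'' would cost an extra $\sqrt{S}$ or $\sqrt{H}$ in the leading term.
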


\paragraph{Computational complexity} \OPSRL is a model-based algorithm, and thus gets the $\cO(HS^2A)$ space complexity as \PSRL. Since we need $\tcO(1)$ posterior samples per state-action pair the time complexity of \OPSRL is of order $\tcO(HS^2A)$ per episode, the same as \PSRL up to poly-logarithmic terms. Building on the idea of \citet{efroni2019tight}, in Appendix~\ref{app:lazy_OPSRL} we propose the \lazyOPSRL algorithm a more time-efficient version of \OPSRL. Instead of recomputing the Q-value by backward induction before each episode, \lazyOPSRL only performs one step of optimistic incremental planning at the visited states. It enjoys a regret bound of the same order $\tcO(\sqrt{H^3SAT})$ as \OPSRL but with an improved time-complexity per episode of $\cO(HSA)$, see Theorem~\ref{th:regret_bound_lazyOPSRL} in Appendix~\ref{app:lazy_OPSRL}.

\paragraph{Comparison with \SOPSRL and \PSRL} One structural difference between \OPSRL and \SOPSRL of \citet{agrawal17correct} is that \OPSRL only relies on optimistic posterior sampling while \SOPSRL also uses simple optimistic sampling: a  mixture of the uniform distribution over the states and an empirical estimate of the true transition kernel biased by some bonus-like terms. In particular, \OPSRL does not use bonus-like quantities which could lead to poor empirical performance \citep{osband2017why}. Another important issue is  the number of posterior samples. \SOPSRL needs $\tcO(S)$ posterior samples in order to obtain a regret bound of order $\tcO(H^{2}S\sqrt{AT})$ whereas \OPSRL needs \textit{only} $\tcO(1)$ samples \textit{and} obtains a better regret bound. Note that if we choose the number of posterior samples as $J=1$ in \OPSRL we recover \PSRL up to two technical differences: First, the posterior is inflated in order to increase its variance. This technical trick was already used by \citet{agrawal17correct} and allows to guarantee optimism with a small number of posterior samples, see Section~\ref{sec:proof_sketch}. Second, \OPSRL uses a particular prior which is a Dirac distribution at a deterministic transition towards an optimistic pseudo-state. This prior is needed to control the deviations of the (inflated) posterior, see Theorem~\ref{thm:lower_bound_dbc}.

\paragraph{Comparison with \RLSVI} Both \OPSRL and \RLSVI build on the same mechanism for exploration. \RLSVI just adds an Gaussian noise to the Q-values whereas \OPSRL injects the noise naturally via a random transition sampled from a Dirichlet distribution. As controlling  the deviation of the Q-value obtained with additive Gaussian noise is not difficult, the analysis of \RLSVI is relatively straightforward \citep{russo2019worst,ishfaq2021randomized}. On the contrary the analysis of \OPSRL is much more involved, see Section~\ref{sec:proof_sketch}. However, the benefit of optimistic posterior sampling in \OPSRL is that it adapts \emph{automatically} to the variance of the estimates of the transitions which is central for a regret bound with an optimal dependence on the horizon $H$ \citep{azar2017minimax}. Adapting to the variance with \RLSVI is much more involved and artificial, see \citet{xiong2021nearoptimal}. This is probably one reason why \RLSVI performs  empirically worse than \PSRL \citep{osband2016deep}.

\subsection{Proof sketch}
\label{sec:proof_sketch}
The proof of Theorem~\ref{th:regret_bound_OPSRL} consists of three important steps. The first step is devoted to the approximation for tails of weighted sums of Dirichlet distribution and embodies the main technical contribution of the paper.

\paragraph{Step 1. Exponential and Gaussian approximation for Dirichlet distribution}

The first result  generalizes  \cite{riou20a} to  Dirichlet distributions with real parameters.  Let us first recall the definition of the minimum Kullback-Leibler divergence for $p\in\simplex_{m}$ where $m\in\N^+$, a function $f:\{0,\ldots,m\}\to[0,b]$ for some $b\in\R^+$ and $u\in\R$,
 \[
    \Kinf(p,u,f) \triangleq \inf\left\{  \KL(p,q): q\in\simplex_{m}, qf \geq u\right\}\,,
 \]
where we recall that $pf \triangleq \E_{X\sim p} f(X)$. This quantity  appears already in the analysis of non-parametric bounded multi-arm stochastic bandits, see \citet{honda2010asymptotically,KLUCBJournal}. As the Kullback-Leibler divergence, the minimum  Kullback-Leibler divergence admits a variational formula by Lemma 18 of \citet{garivier2018kl} up to rescaling for any $u \in (0, b)$,
\begin{equation}\label{eq:kinf_variational}
    \Kinf(p,u,f) = \max_{\lambda \in[0,1/(b-u)]} \E_{X\sim p}\left[ \log\left( 1-\lambda (f(X)-u)\right)\right] \,.
 \end{equation}
\begin{theorem}[Exponential upper bound, see Theorem~\ref{thm:upper_bound_dbc}]\label{thm:expon_ub}
    For any $\alpha = (\alpha_0, \alpha_1, \ldots, \alpha_m) \in \R_{++}^{m+1}$ define  $\up \in \simplex_{m}$ such that $\up(\ell) = \alpha_\ell/\ualpha, \ell = 0, \ldots, m$, where $\ualpha = \sum_{j=0}^m \alpha_j$. Then for any $f \colon \{0,\ldots,m\} \to [0,b]$ and $0 < \mu < b,$ we have
  \[
        \P_{w\sim \Dir(\alpha)} [wf \geq \mu] \leq \exp(-\ualpha \Kinf(\up,\mu, f)).
  \]
\end{theorem}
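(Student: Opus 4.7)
The natural strategy is a Chernoff-type argument built on the classical representation of a Dirichlet vector as a normalized tuple of independent Gamma random variables. Let $Y_0, \dots, Y_m$ be independent with $Y_\ell \sim \mathrm{Gamma}(\alpha_\ell, 1)$ and $\Sigma \triangleq \sum_\ell Y_\ell$; then $Y/\Sigma \sim \Dir(\alpha)$, so the event $\{wf \geq \mu\}$ is exactly $\{\sum_\ell Y_\ell(f(\ell) - \mu) \geq 0\}$. This turns the Dirichlet tail bound into a one-sided tail bound for a signed linear combination of independent Gammas, which is a classical target for the Cram\'er--Chernoff method.

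The first step is to apply Markov's inequality to $\exp\!\bigl(\lambda \sum_\ell Y_\ell(f(\ell) - \mu)\bigr)$ for $\lambda \in [0, 1/(b-\mu))$. This is precisely the range on which every $\lambda(f(\ell)-\mu) < 1$, so that all the Gamma moment generating functions $\E[e^{tY_\ell}] = (1-t)^{-\alpha_\ell}$ are finite at $t = \lambda(f(\ell)-\mu)$. Independence factorizes the resulting expectation into $\prod_\ell (1-\lambda(f(\ell)-\mu))^{-\alpha_\ell}$. Taking logarithms and writing $\alpha_\ell = \ualpha\, \up(\ell)$ gives
\[
    \P_{w \sim \Dir(\alpha)}[wf \geq \mu] \leq \exp\!\Bigl(-\ualpha\, \E_{X \sim \up}\bigl[\log\bigl(1 - \lambda(f(X) - \mu)\bigr)\bigr]\Bigr).
\]

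To conclude, I would minimize the right-hand side over $\lambda$; by the variational identity~\eqref{eq:kinf_variational}, the supremum over $\lambda \in [0, 1/(b-\mu)]$ of $\E_{X \sim \up}[\log(1-\lambda(f(X)-\mu))]$ is precisely $\Kinf(\up, \mu, f)$, which yields the claimed bound $\exp(-\ualpha\, \Kinf(\up, \mu, f))$.

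I do not expect any serious obstacle: the proof is essentially a clean Cram\'er--Chernoff inequality once the Gamma representation is in hand, matching step by step the variational formula used to define $\Kinf$. The only subtleties are (i) the boundary $\lambda = 1/(b-\mu)$, where the MGF may diverge if some $f(\ell) = b$ with $\alpha_\ell > 0$, which is harmless because the infimum over the open interval agrees with the closed-interval supremum in~\eqref{eq:kinf_variational} (the integrand tends to $-\infty$ there); and (ii) the trivial regime $\up f \geq \mu$, in which convexity of the log-MGF in $\lambda$ places its minimum at $\lambda=0$ and the statement reduces to $\P \leq 1 = \exp(-\ualpha \cdot 0) = \exp(-\ualpha\, \Kinf(\up, \mu, f))$.
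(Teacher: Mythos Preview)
Your proposal is correct and follows essentially the same route as the paper: represent the Dirichlet vector via independent Gammas, apply a Chernoff/Markov bound on $\sum_\ell Y_\ell(f(\ell)-\mu)$ over $\lambda\in[0,1/(b-\mu))$, and optimize in $\lambda$ using the variational formula~\eqref{eq:kinf_variational}. The paper phrases the Chernoff step as an explicit change of measure to the tilted Gammas $\Gamma(\alpha_\ell,\,1-\lambda(f(\ell)-\mu))$, but this is the same argument in different clothing.
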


The second result is devoted to a tight Gaussian lower bound for the distribution of a linear function of Dirichlet random vector. Here we follow the ideas of \cite{alfers1984normal} and use the exact expression for the density of a linear form of Dirichlet random vector derived by \cite{tiapkin22dirichlet}.
\begin{theorem}[Gaussian lower bound, see Theorem~\ref{thm:lower_bound_dbc}]\label{thm:gaussian_lb}
    For any $\alpha = (\alpha_0+1, \alpha_1, \ldots, \alpha_m) \in \R_{++}^{m+1},$ define  $\up \in \simplex_{m}$ such that $\up(\ell) = \alpha_\ell/\ualpha, \ell = 0, \ldots, m$, where $\ualpha = \sum_{j=0}^m \alpha_j$. Fix $\varepsilon \in (0,1)$ and assume that \(
        \alpha_0 \geq c(\varepsilon) + \log_{17/16}(\ualpha) \) for $c(\varepsilon)$ defined in \eqref{eq:c0_eps}, Appendix~\ref{app:gaussian_approximation}, and $\ualpha \geq 2\alpha_0$. Then for any $f \colon \{0,\ldots,m\} \to [0,\ub]$ such that $f(0) = \ub$, $f(j) \leq b < \ub/2, j \in \{1,\ldots,m\}$ and $\mu \in (\up f,  \ub),$ 
        \[
            \P_{w \sim \Dir(\alpha)}[wf \geq \mu] \geq (1 - \varepsilon)\P_{g \sim \cN(0,1)}\left[g \geq \sqrt{2 \ualpha \Kinf(\up, \mu, f)}\right].
        \]
\end{theorem}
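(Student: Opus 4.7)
The plan is to build on \citet{tiapkin22dirichlet}'s integral representation for the density of a linear form of a Dirichlet random vector and perform a non-asymptotic saddle-point analysis in the spirit of \citet{alfers1984normal}. Writing $S_w := wf$ and letting $p_{S_w}$ denote its density, convex duality applied to the variational formula~\eqref{eq:kinf_variational} identifies, for each $x \in (\up f, \ub)$, a unique interior saddle $\lambda^*(x) \in (0, 1/(\ub - x))$ realising the supremum, and the Legendre-transform identities
\[
\partial_x \Kinf(\up,x,f) = \lambda^*(x), \qquad \partial_x^2 \Kinf(\up,x,f) = 1/\sigma_*^2(x)
\]
hold, where $\sigma_*^2(x)$ denotes the variance of $f$ under the exponentially tilted distribution at $\lambda^*(x)$. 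These objects will govern both the Laplace expansion of the density and the change of variables used to convert the tail into Gaussian form.

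The main technical step is a pointwise non-asymptotic density lower bound of Alfers--Dinges type,
\[
p_{S_w}(x) \ge (1 - \eta)\cdot \frac{1}{\sigma_*(x)}\sqrt{\frac{\ualpha}{2\pi}}\cdot \exp\bigl(-\ualpha \Kinf(\up,x,f)\bigr),
\]
valid uniformly for $x \in (\mu, \ub)$, with $\eta = \eta(\varepsilon)$ small. I would derive this by starting from the integral representation of the density in \citet{tiapkin22dirichlet}, deforming the contour through $\lambda^*(x)$, extracting the Gaussian factor from the quadratic Laplace expansion, and bounding the cubic-and-higher contributions explicitly. The hypotheses $f(0)=\ub$, $f(j) \le b < \ub/2$, and $\ualpha \ge 2\alpha_0$ keep $\lambda^*(x)$ uniformly bounded away from the singular boundary $1/(\ub - x)$, ensuring convergence of the contour integral and uniform control of the off-saddle tails. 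The logarithmic growth requirement $\alpha_0 \ge c(\varepsilon) + \log_{17/16}(\ualpha)$, together with the shift $\alpha_0+1$ in the first coordinate, absorbs the $\Gamma$-function prefactors coming from the non-integer parameters and forces the multiplicative Laplace error below a prescribed level.

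Given the density bound, the tail bound is obtained by integration. Substituting $u := \sqrt{2\ualpha \Kinf(\up,x,f)}$, a monotone reparametrisation on $(\up f, \ub)$ with Jacobian $du/dx = \ualpha \lambda^*(x)/u$, rewrites the pointwise bound as
\[
p_{S_w}(x)\,dx \ge (1 - \eta)\cdot \frac{u}{\sqrt{2\pi\ualpha}\,\sigma_*(x)\lambda^*(x)}\,e^{-u^2/2}\,du,
\]
and the integration range becomes $[u_0, u_1]$ with $u_0 := \sqrt{2\ualpha \Kinf(\up,\mu,f)}$ and $u_1 := \sqrt{2\ualpha \Kinf(\up,\ub,f)}$. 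A careful comparison between the prefactor $u/(\sqrt{\ualpha}\,\sigma_*(x)\lambda^*(x))$ and the Mills-ratio factor appearing in $\P[g \ge u_0] = \int_{u_0}^\infty \phi(u)\,du$, exploiting that under the tilted distribution at large $\lambda^*$ most mass is placed on the atom $f(0) = \ub$ (the role of the strict separation $b < \ub/2$), then yields $\P[S_w \ge \mu] \ge (1-\varepsilon)\,\P[g \ge u_0]$ after absorbing the negligible contribution $\Phi(-u_1) \ll \Phi(-u_0)$ into the multiplicative slack.

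The principal obstacle is the quantitative contour analysis underlying the density bound: unlike the Beta case, where the density admits a single closed-form $\Gamma$-expression and the saddle-point method reduces to a one-dimensional Laplace estimate with classical error control, the Dirichlet case requires simultaneously and non-asymptotically controlling (i) the quadratic Laplace expansion near the saddle, (ii) the tails along the deformed steepest-descent contour, and (iii) the $\Gamma$-function prefactors at the shifted parameter $\alpha_0 + 1$. The precise balancing of these three sources of error is exactly what produces the logarithmic growth requirement $\alpha_0 \ge c(\varepsilon) + \log_{17/16}(\ualpha)$ in the hypothesis.
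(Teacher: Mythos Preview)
Your overall strategy---saddle-point analysis of the density integral followed by the change of variables $t^2/2=\Kinf(\up,u,f)$---is exactly the paper's route. However, there is a genuine gap in the density lower bound you state, and it propagates to the integration step.

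The pointwise bound you write, $p_{S_w}(x)\geq(1-\eta)\sigma_*^{-1}\sqrt{\ualpha/(2\pi)}\exp(-\ualpha\Kinf)$, is missing a crucial factor. What the saddle-point analysis actually delivers (Lemma~\ref{lem:lb_dirichlet_density}) is
\[
p_Z(u)\geq(1-\varepsilon)\sqrt{\frac{\ualpha}{2\pi}}\,\frac{\exp\bigl(-\ualpha\Kinf(\up,u,f)\bigr)}{(1-\lambda^\star(\ub-u))\sqrt{\sigma^2}},\qquad \sigma^2=\E_{X\sim\up}\!\left[\Bigl(\tfrac{f(X)-u}{1-\lambda^\star(f(X)-u)}\Bigr)^{\!2}\right].
\]
The extra factor $1/(1-\lambda^\star(\ub-u))\geq 1$ comes precisely from the $+1$ shift in $\alpha_0+1$: in the integral representation (Proposition~\ref{prop:dirichlet_density_integral}) it contributes the separate multiplier $(1+\rmi(\ub-u)s)^{-1}$, which after deforming the contour through $\rmi\lambda^\star$ evaluates at the saddle to $1/(1-\lambda^\star(\ub-u))$. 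You attribute the $+1$ to ``absorbing $\Gamma$-function prefactors from the non-integer parameters'', but that is not its role---it is exactly the Alfers--Dinges device that makes the Gaussian comparison go through.

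Without this factor the integration step fails. After your substitution the prefactor becomes $\sqrt{2\Kinf}/(\sigma_*\lambda^*)$, and you need this to be bounded below by~$1$; a Mills-ratio comparison does not rescue the argument, since the target is $\int_{u_0}^\infty\phi(u)\,\rmd u$ itself. But $\Kinf\geq\tfrac12(\lambda^*)^2\sigma_*^2$ is false in general. Already for a two-point support $f(0)=1$, $f(1)=0$ one has $\lambda^*=(u-p)/(u(1-u))$ and tilted variance $u(1-u)$, so the claimed inequality reads $\mathrm{kl}(p,u)\geq(u-p)^2/(2u(1-u))$, which fails badly for $u$ far from $p$ (e.g.\ $p=0.01$, $u=0.99$). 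The paper instead proves the weaker but correct inequality (Lemma~\ref{lem:kinf_lower_bound})
\[
\Kinf(\up,u,f)\geq\tfrac12(\lambda^\star)^2\sigma^2\bigl(1-\lambda^\star(\ub-u)\bigr)^2,
\]
and it is exactly the extra $1/(1-\lambda^\star(\ub-u))$ in the density bound that cancels the $(1-\lambda^\star(\ub-u))$ appearing here, yielding $D(u)^2=2\Kinf/\bigl((\lambda^\star)^2\sigma^2(1-\lambda^\star(\ub-u))^2\bigr)\geq 1$ and hence the clean Gaussian lower bound. (Incidentally, your Legendre identity $\partial_x^2\Kinf=1/\sigma_*^2$ is also not correct in general, but this is secondary to the main gap.)
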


We emphasize that  increasing the parameter $\alpha_0$  corresponding to the largest value of $f$ by $1$ is  crucial. The same technique was used by \cite{alfers1984normal} to derive a lower bound on the tails of the Beta distribution.

\begin{remark}\label{rem:bound_icml_isufficent}
We stress that the anti-concentration inequality of \citet[Theorem~D.2]{tiapkin22dirichlet} is not sufficient for our purposes; their additional factor $\ualpha^{\,-3/2}$  in front of the exponent  makes it unusable for the analysis of \OPSRL. Indeed, this  inequality  would  imply $\tcO(T^{3/2})$ samples from the inflated posterior in order to get optimism with high-probability,  whereas with our refined bound (Theorem~\ref{thm:gaussian_lb}) we only need  $\tcO(1)$ posterior samples. 
\end{remark}

\begin{proof}[Proof sketch of Theorem~\ref{thm:gaussian_lb}] 
    We start from the integral representation for the density by \citet[Proposition D.3]{tiapkin22dirichlet}. Define $Z \triangleq wf$ for $w \sim \Dir(\alpha_0+1, \alpha_1, \ldots, \alpha_m)$, then for any $u \in (0, \ub),$
    \[
        p_{Z}(u) = \frac{\ualpha}{2\pi} \int_\R (1 + \rmi(\ub - u)s)^{-1} \prod_{j=0}^m \left( 1 + \rmi(f(j) - u)s \right)^{-\alpha_j} \rmd s.
    \]
    One additional term $(1 + \rmi(\ub - u)s)^{-1}$ comes from  increasing  the parameter $\alpha_0$ by $1$ corresponding to the value $f(0) = \ub$.
    
    In the same spirit as it was done by \citet{tiapkin22dirichlet}, we apply the method of saddle point (see \citealp{fedoryuk1977metod,olver1997asymptotics}) to the complex integral above. Informally, for $\alpha_0, \ualpha, \ub$ large enough the following approximation holds
    \[
        p_Z(u) \approx \sqrt{\frac{\ualpha}{2\pi \sigma^2 (1 - \lambda^\star(\ub - u))^2}} \exp(-\ualpha \Kinf(\up, u, f)),
    \]
    where $\lambda^\star$ is the unique solution to the problem \eqref{eq:kinf_variational} and $\sigma^2 = \E_{X \sim \up}\big[(\frac{f(X) - u}{1 - \lambda^\star (f(X) - u)})^2\big]$. The formal statement can be found in Lemma~\ref{lem:lb_dirichlet_density} of Appendix~\ref{app:gaussian_approximation}.
    
    Next we perform a change of variables $t^2/2 = \Kinf(\up, u, f)$ in the above expression  to get
    \begin{align*}
        \P_{w \sim \Dir(\alpha_0+1,\alpha_1,\ldots,\alpha_m)}[wf \geq \mu] &\approx \int_{\mu}^{\ub} \sqrt{\frac{\ualpha}{2\pi \sigma^2 (1 - \lambda^\star(\ub - u))^2}} \exp(-\ualpha \Kinf(\up, u, f)) \rmd u \\
        &\approx \int_{\sqrt{2\Kinf(\up, \mu, f)}}^{\infty} D(u(t)) \phi( t | 0, \ualpha) \rmd t,
    \end{align*}
    where $\phi(x | \mu, \sigma^2)$ is a density of $\cN(\mu, \sigma^2)$ and $D(u)$ is a  weight function bounded from below by $1$ (see Lemma~\ref{lem:kinf_lower_bound} of Appendix~\ref{app:gaussian_approximation}). This lower bound on $D(u)$ concludes the proof.
\end{proof}

\paragraph{Comparison with anti-concentration bound by \cite{agrawal2020posterior}} We emphasise that our technique of deriving a Gaussian-like lower bound is substantially different from the methodology used  by \cite{agrawal2020posterior}. The latter one was based on reduction of a weighted sum of Dirichlet random vector to a weighted sum of independent Beta distributed random variables and a subsequent application of the Berry-Esseen inequality, whereas our approach  relies on the integral representation for the density of the corresponding  linear projection of  Dirichlet random vector.

In particular, the Berry-Esseen inequality is likely to be very coarse since it uses only the first three moments of the distribution and therefore generates an additional $S$-factor. At the same time, our analysis is much better fitted to the Dirichlet distribution and provides a very tight lower bound. The tightness of our bounds can be checked by comparing it to a similar result for the beta distribution derived in \cite{alfers1984normal}.

\paragraph{Step 2. Optimism}
Next, we apply Theorem~\ref{thm:gaussian_lb} to prove that the estimate of Q-function $\uQ^t_h$ is optimistic with high probability for our choice of inflation parameter $\kappa$ and a number of posterior samples $J$: $\uQ^t_h(s,a) \geq \Qstar_h(s,a)$  for any $(s,a,h,t) \in \cS \times \cA \times [H] \times [T]$.

We show that    the inequalities \(\max_{j \in [J]} \{ \tp^{\,t,j}_h \Vstar_{h+1}(s,a) \} \geq p_h \Vstar_{h+1}(s,a)\) hold for all $(s,a,h,t) \in \cS \times \cA \times [H] \times [T]$  with high probability. First, we notice that $\tp^{\,t,j}_h(s,a) \sim \Dir(\alpha_0 + 1, \alpha_1,\ldots, \alpha_S)$ for $\alpha_0 = n_0/\kappa - 1, \alpha_i = n^t_h(s_i|s,a)/\kappa$ and $\ualpha = (\upn^{\,t}_h(s,a) - \kappa)/\kappa$. Additionally, define a probability distribution $q \in \simplex_S$ such that $q(i) = \alpha_i / \ualpha$. This distribution slightly differs from $\up^{\,t}_h(s,a)$ because of an additional $+1$ in the parameters of the Dirichlet distribution.
Next, we may apply Theorem~\ref{thm:gaussian_lb} with $\varepsilon = 1/2$ and a proper choice of $n_0=n_0(\varepsilon),$
\[
    \P_{\tp^{\,t,j}_h(s,a) \sim \Dir(\alpha_0 + 1, \alpha_1,\ldots, \alpha_S)}\left[\tp^{\,t,j}_h \Vstar_{h+1}(s,a) \geq p_h \Vstar_{h+1}(s,a) \right] \geq \frac{1}{2}\left( 1 - \Phi\left(\sqrt{\frac{2\zeta}{\kappa}}\right) \right),
\]
where $\zeta \triangleq (\upn^t_h - \kappa) \Kinf(q, p_h \Vstar_{h+1}(s,a), \Vstar_{h+1})$ and $\Phi(\cdot)$ is a cumulative distribution function (CDF) of a standard normal distribution. By a concentration argument we have 
\[
    \zeta  \leq n^t_h \Kinf(\hp^{\,t}_h(s,a), p_h \Vstar_{h+1}(s,a), \Vstar_{h+1}) \leq \kappa/2,
\] 
with high probability for an appropriate choice of $\kappa = \tcO(1)$. For this step of the proof the presence of the inflation parameter $\kappa$ is crucial: this parameter increases the variance of $\tp^{\,t,j}_h (s,a)$  to ensure that the above inequality  holds with a constant probability.
Next, by taking the maximum over $J = \cO(\log(SATH/\delta))$ samples and applying union bound, we guarantee that the  inequality $\max_{j \in [J]} \{ \tp^{\,t,j}_h \Vstar_{h+1}(s,a) \} \geq p_h \Vstar_{h+1}(s,a)$ holds simultaneously for all $(s,a,h,t) \in \cS \times \cA \times [H] \times [T]$ with probability at least $1-\delta/2$. The formal statement and the proof could be found in Proposition~\ref{prop:anticonc} of Appendix~\ref{app:optimism}. 

Finally, the standard backward induction over $h \in [H]$ concludes optimism.  Indeed, the base of induction $h=H+1$ is trivial. Next, by the Bellman equations for $\uQ^t_h$ and $\Qstar_h$ we have
\[
    \uQ^t_h(s,a) - \Qstar_h(s,a) = \max_{j \in [J]} \{ \tp^{\,t,j}_h \uV^t_{h+1}(s,a) \} - p_h \Vstar_{h+1}(s,a).
\]
The induction hypothesis implies $\uV^t_{h+1}(s') \geq \uQ^t_{h+1}(s', \pi^\star(s')) \geq \Qstar_{h+1}(s', \pi^\star(s')) = \Vstar_{h+1}(s')$ for any $s' \in \cS$. Hence, 
\[
    \uQ^t_h(s,a) - \Qstar_h(s,a) \geq \max_{j \in [J]} \{ \tp^{\,t,j}_h \Vstar_{h+1}(s,a) \} - p_h \Vstar_{h+1}(s,a) \geq 0
\]
with probability at least $1-\delta/2.$

\paragraph{Step 3. Regret bound}
The rest of proof directly follows  \cite{azar2017minimax}, where  \UCBVI  algorithm with Bernstein bonuses was analyzed. 
By the optimism, we have
\[
    \regret^T = \sum_{t=1}^T [\Vstar_1(s_1) - V^{\pi^t}_1(s_1)]  \leq \sum_{t=1}^T \delta^t_1,
\]
where $\delta^t_h \triangleq \uV^{t-1}_h(s^t_h) - V^{\pi^t}_h(s^t_h)$. The quantity $\delta^t_h$ can be decomposed as follows using the Bellman equation for $V^{\pi^t}$ and $\uQ^{t-1}_h,$

\begin{align*}
    \delta^t_h &= \uQ^{t-1}_h(s^t_h, a^t_h) - Q^{\pi^t}_h(s^t_h, a^t_h) =  \max_{j \in [J]}\left\{ \tp^{\,t-1,j}_h \uV^{t-1}_{h+1}(s^t_h, a^t_h) \right\} - p_h V^{\pi^t}_{h+1}(s^t_h, a^t_h) \\
    &= \underbrace{\max_{j \in [J]}\left\{ \tp^{\,t-1,j}_h \uV^{t-1}_{h+1}(s^t_h, a^t_h) \right\} - \up^{\,t-1}_h \uV^{t-1}_{h+1}(s^t_h, a^t_h)}_{\termA} + \underbrace{[\up^{\,t-1}_h - \hp^{\,t-1}_h] \uV^{t-1}_{h+1}(s^t_h, a^t_h)}_{\termB} \\
    &+ \underbrace{[\hp^{\,t-1}_h - p_h] [\uV^{t-1}_{h+1} - \Vstar_{h+1}] (s^t_h, a^t_h)}_{\termC} +  \underbrace{[\hp^{\,t-1}_h - p_h] \Vstar_{h+1}(s^t_h, a^t_h)}_{\termD} \\
    & + \underbrace{p_h[\uV^{t-1}_{h+1} - V^{\pi^t}_{h+1}](s^t_h, a^t_h) - [\uV^{t-1}_{h+1} - V^{\pi^t}_{h+1}](s^t_{h+1})}_{\xi^t_h} + \delta^t_{h+1}.
\end{align*}
The terms $\termC, \termD,$ and $\xi^t_h$ are standard in the analysis of the optimistic algorithms. The term $\termB$ could be upper-bounded by $\frac{r_0 \cdot n_0 \cdot H}{\upn^{t-1}_h(s^t_h, a^t_h)}$ and turns out to be one of second-order terms. The analysis of $\termA$ is novel and requires application of the Bernstein inequality for Dirichlet distributions that follows from Theorem~\ref{thm:expon_ub} and is spelled out in the following lemma.
\begin{lemma}[see Lemma~\ref{lem:bernstein_dirichlet} in Appendix~\ref{app:deviation_ineq}]
     For any $\alpha = (\alpha_0, \alpha_1, \ldots, \alpha_m) \in \R_{++}^{m+1}$ define  $\up \in \simplex_{m}$ such that $\up(\ell) = \alpha_\ell/\ualpha, \ell = 0, \ldots, m$, where $\ualpha = \sum_{j=0}^m \alpha_j$. Then for any $f \colon \{0,\ldots,m\} \to [0,b]$ such that $f(0) = b$ and $\delta \in (0,1),$
     \[
        \P_{w \sim \Dir(\alpha)}\left[wf \geq \up f +  2 \sqrt{ \frac{ \Var_{\up}(f) \log(1/\delta)}{\ualpha}} + \frac{3b \cdot \log(1/\delta)}{\ualpha} \right] \leq \delta.
     \]
\end{lemma}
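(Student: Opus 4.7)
The plan is to prove the bound by combining the Chernoff-type exponential inequality from Theorem~\ref{thm:expon_ub} with a Bernstein-style lower bound on $\Kinf$, then inverting the resulting tail bound in $\mu$. This is the $\Kinf$-analogue of the classical passage from Bennett's inequality to Bernstein's inequality in the scalar setting.

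First, by Theorem~\ref{thm:expon_ub}, for any $\mu \in (\up f, b)$ we have $\P_{w\sim\Dir(\alpha)}[wf \geq \mu] \leq \exp(-\ualpha \, \Kinf(\up, \mu, f))$. The key intermediate step is to establish the Bernstein-type lower bound
\[
    \Kinf(\up, \up f + t, f) \;\geq\; \frac{t^2}{2\Var_{\up}(f) + \tfrac{2b}{3} t}, \qquad t \in (0, b - \up f).
\]
This I would derive from the variational formula \eqref{eq:kinf_variational}: set $\lambda^\dagger = t/(\Var_{\up}(f) + bt/3)$, verify that $\lambda^\dagger \in [0,1/(b - \up f - t)]$ (if not, the claim holds trivially since the right-hand side is then $O(1)$ and a boundary-$\lambda$ argument takes over), and apply the inequality $\log(1 - y) \geq -y - y^2/(2(1-y))$ with $y = \lambda^\dagger(f(X) - \mu)$. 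Taking expectation under $\up$ produces $-\lambda^\dagger(\up f - \mu) - \tfrac{(\lambda^\dagger)^2}{2(1-\lambda^\dagger(b-\mu))}\E_{\up}[(f(X)-\mu)^2]$, and the chosen $\lambda^\dagger$ is precisely the one that optimizes the resulting quadratic expression to yield the claimed lower bound (using $\E_{\up}[(f(X)-\mu)^2] = \Var_{\up}(f) + (\up f - \mu)^2 \leq \Var_{\up}(f)+ bt$-type control).

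Once the Bernstein lower bound on $\Kinf$ is in hand, the proof finishes by inversion. Set $t^\star \triangleq 2\sqrt{\Var_{\up}(f)\log(1/\delta)/\ualpha} + 3b\log(1/\delta)/\ualpha$; I would verify directly that $t^\star$ satisfies the quadratic inequality $\ualpha (t^\star)^2 \geq 2\log(1/\delta)\bigl[\Var_{\up}(f) + b t^\star/3\bigr]$ by splitting $\ualpha (t^\star)^2$ into two halves and handling the variance term using $t^\star \geq 2\sqrt{\Var_{\up}(f)\log(1/\delta)/\ualpha}$ and the range term using $t^\star \geq 3b\log(1/\delta)/\ualpha$. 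Combined with the Bernstein lower bound, this yields $\ualpha \, \Kinf(\up, \up f + t^\star, f) \geq \log(1/\delta)$, so the exponential bound of Theorem~\ref{thm:expon_ub} gives $\P[wf \geq \up f + t^\star] \leq \delta$. The degenerate case $\up f + t^\star \geq b$ is trivial since $wf \leq b$ almost surely.

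The main obstacle is the Bernstein-type lower bound on $\Kinf$ in Step~2: the constraint $\lambda \in [0, 1/(b-\mu)]$ in the variational formula must be respected, and keeping sharp constants requires a careful algebraic choice of $\lambda^\dagger$ and a tight use of $\log(1-y) \geq -y - y^2/(2(1-y))$. The inversion in Step~3 is then a routine quadratic verification that is tailored precisely so that the $\sqrt{\cdot}$-term absorbs the variance contribution while the $3b\log(1/\delta)/\ualpha$ term dominates the range contribution through the "bt/3" correction.
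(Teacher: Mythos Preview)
Your overall strategy is sound and genuinely different from the paper's. Both routes start from Theorem~\ref{thm:expon_ub}, but you lower-bound $\Kinf$ directly from the variational formula by plugging in an explicit $\lambda^\dagger$ and then invert the resulting quadratic, whereas the paper goes the other way around: it \emph{fixes} $\mu$ by $\Kinf(\up,\mu,f)=\ualpha^{-1}\log(1/\delta)$ (using continuity of $\Kinf$), invokes the information projection to produce $q\in\simplex_m$ with $qf=\mu$ and $\KL(\up,q)=\Kinf(\up,\mu,f)$, and then applies the ready-made transport inequality $qf-\up f\le \sqrt{2\Var_q(f)\KL(\up,q)}$ (Lemma~\ref{lem:Bernstein_via_kl}) together with the variance change-of-measure $\Var_q(f)\le 2\Var_{\up}(f)+4b^2\KL(\up,q)$ (Lemma~\ref{lem:switch_variance_bis}). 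The paper's route is slicker because it outsources the Bernstein constants entirely to those two lemmas and avoids any optimization in $\lambda$; yours is more self-contained but forces you to redo the Bernstein algebra by hand.

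One concrete slip to fix: the pointwise inequality $\log(1-y)\ge -y-y^2/(2(1-y))$ you invoke fails for $y<0$. Indeed, setting $h(y)=\log(1-y)+y+y^2/(2(1-y))$ one computes $h'(y)=y^2/(2(1-y)^2)\ge 0$ and $h(0)=0$, so $h(y)\le 0$ on $(-\infty,0)$. Since $y=\lambda^\dagger(f(X)-\mu)$ is negative whenever $f(X)<\mu$, you cannot apply it termwise. What you actually need---and what your next line with $1-\lambda^\dagger(b-\mu)$ in the denominator indicates you had in mind---is the uniform version $\log(1-y)\ge -y-y^2/(2(1-c))$ valid for all $y\le c<1$, with $c=\lambda^\dagger(b-\mu)$; this one is true (same computation: the corresponding $h$ has a global minimum at $0$). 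With that correction you get
\[
\Kinf(\up,\mu,f)\;\ge\;\lambda t-\frac{\lambda^2\bigl(\Var_{\up}(f)+t^2\bigr)}{2\bigl(1-\lambda(b-\mu)\bigr)},\qquad t=\mu-\up f,
\]
and from here the optimization in $\lambda$ and the boundary case $\lambda^\dagger\notin[0,1/(b-\mu))$ still need a bit more care than your sketch gives them (note the extra $t^2$ in the numerator variance, and that $\lambda^\dagger b<1$ is not automatic), but the route does close with constants at least as good as the stated $2$ and $3$.
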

As opposed to  Lemma~C.8 of \citet{tiapkin22dirichlet}, the last result applies  to Dirichlet distributions with non-integer parameters as in our case (due to the presence of the inflation parameter $\kappa$). Therefore, we see that the term $\termA$ can be upper bounded by a quantity which has the same role as  in the analysis of \UCBVI. After using the Bernstein bound, the rest of the proof follows from the analysis of \UCBVI with the Bernstein bonuses and \BayesUCBVI; see \citet{azar2017minimax} and \citet{tiapkin22dirichlet}.

\section{Experiments}
\label{sec:experiments}

In this section we provide experiment to compare \OPSRL with some baselines on simple tabular environment; see details in Appendix~\ref{app:experiments}. In particular, we illustrate that \OPSRL is competitive with the original \PSRL algorithm and outperforms bonus-based algorithms such as \UCBVI. 

\paragraph{Baselines}
We compare \OPSRL with the following baselines: \UCBVI (with Hoeffding-type bonuses) and \UCBVIB (with Bernstein-type bonuses) \cite{azar2017minimax}, \PSRL \cite{obsband2013more}, and \RLSVI \cite{osband16generalization}. See Appendix~\ref{app:experiments} for full details on parameters for \OPSRL and baselines.

\begin{figure}[h!]
      \vspace{-0.2cm}
    \centering
    \includegraphics[keepaspectratio,width=.6\textwidth]{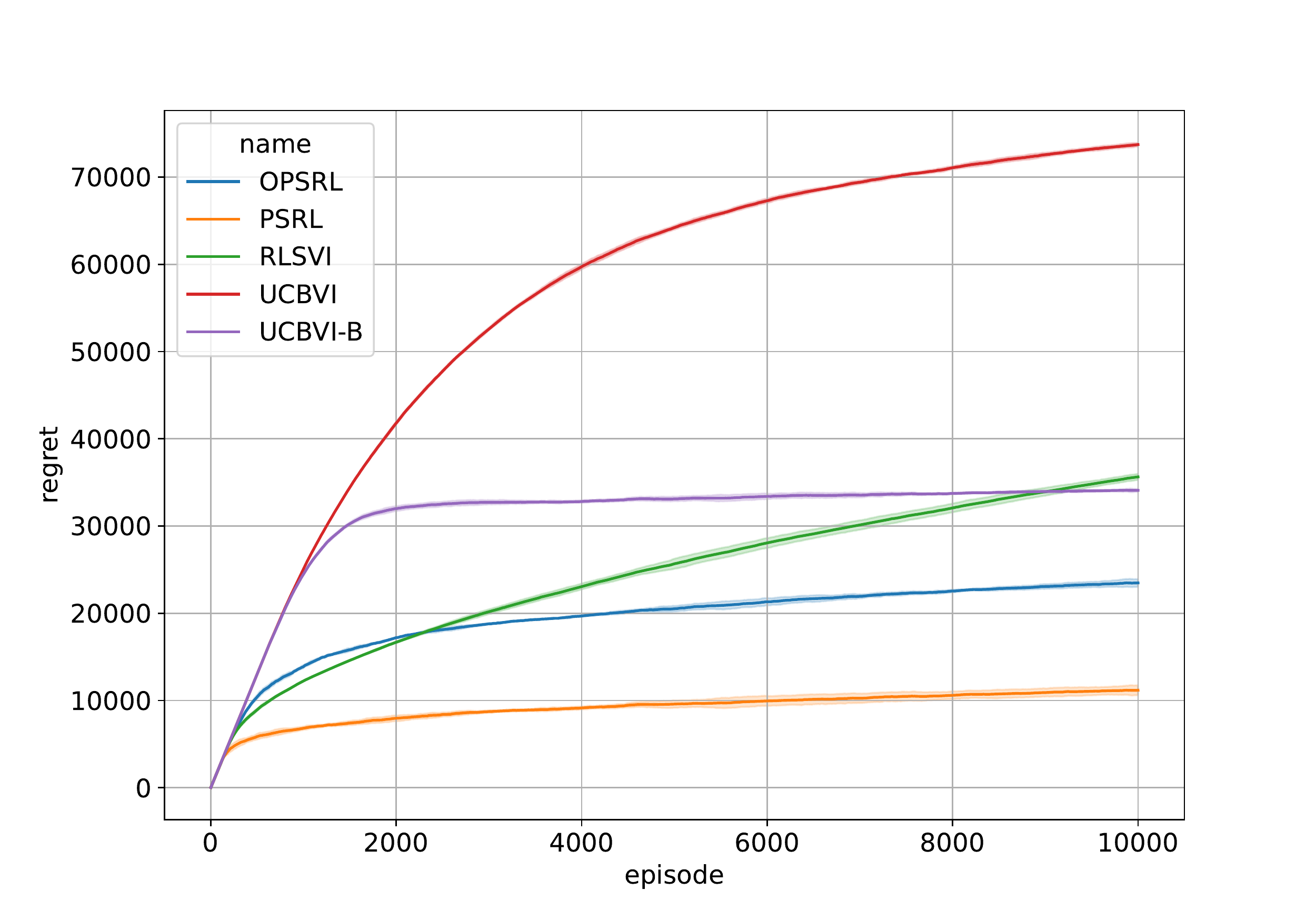}
    \caption{Regret of \OPSRL and baselines on grid-world environment with $100$ states and $4$ action for $H=50$ an transitions noise $0.2$. We show $\text{average}$ over $4$ seeds.}
    \label{fig:regret_baselines}
\end{figure}

\paragraph{Results} In Figure~\ref{fig:regret_baselines}, we plot the regret of the various baselines and \OPSRL in the grid world environment. In this experiment, we observe that \OPSRL achieves competitive results with respect to \PSRL. It is not completely surprising since they share the same Bayesian model on the transitions up to the prior. We shall elaborate more on the influence of the prior in Appendix~\ref{app:experiments}. We also note that \OPSRL outperforms \UCBVI and \RLSVI.
This difference may be explained by the fact that \OPSRL's optimism implies (in the worst case) KL bonuses as in \citet{filippi2010optimism}. The KL bonuses are stronger than Bernstein bonuses, see Lemma~\ref{lem:Bernstein_via_kl}, because they somehow rely on all moments of the empirical distribution rather than the first two moments as in the case of Bernstein bonuses or first moments for Hoeffding bonuses or for the variance of the Gaussian noise in \RLSVI. Note also that in \OPSRL, we do not have to solve the complex convex program to compute the KL bonuses \citet{filippi2010optimism}, which could be computationally intensive. 

\section{Conclusion}
\label{sec:conclusion}

In this work, we presented the \OPSRL algorithm which can be viewed as a simple optimistic variant of the \PSRL algorithm. Notably, \OPSRL only needs $\tcO(1)$ posterior samples per state-action. We proved that the regret of \OPSRL is upper-bounded with high probability by $\tcO(\sqrt{H^3SAT}),$ matching the problem-independent lower-bound of order $\Omega(\sqrt{H^3SAT})$ for $T$ large enough and up to terms poly-logarithmic  in $H,S,A,$ and $T$. While our work addresses the open questions raised by \citet{agrawal2020posterior} in the episodic setting, obtaining the same results in the infinite-horizon average reward setting remains an open issue. We believe that it is possible to adapt our analysis to this other setting up to some technical adjustments. Ultimately, another open question, is to obtain a high-probability regret bound for \PSRL, that is, when using only a \textit{single} posterior sample and not inflating the posterior.
As a further future research direction we believe it could be interesting to obtain a model-free algorithm that relies on the same mechanism as \OPSRL for exploration. Indeed, such an algorithm could avoid the use of complicated bonuses adopted by the current model-free algorithms while reducing the memory complexity of \OPSRL.
\vspace{2cm}

\vspace{-2cm}
\begin{ack}
    D.\,Belomestny acknowledges the financial support from Deutsche Forschungsgemeinschaft (DFG), Grant Nr.\,497300407. The work of D.\,Tiapkin, D.\,Belomestny and A.\,Naumov was prepared within the framework of the HSE University Basic Research Program. Pierre Ménard acknowledges the support of the Chaire SeqALO (ANR-20-CHIA-0020-01).
\end{ack}

\bibliographystyle{plainnat}
\bibliography{biblio.bib}

\newpage
\appendix

\part{Appendix}
\parttoc
\newpage
\section{Notation}
\label{app:notations}

\begin{table}[h]
	\centering
	\caption{Table of notation use throughout the paper}
	\begin{tabular}{@{}l|l@{}}
		\toprule
		\thead{Notation} & \thead{Meaning} \\ \midrule
	$\cS$ & state space of size $S$\\
	$\cA$ & action space of size $A$\\
	$H$ & length of one episode\\
	$T$ & number of episodes\\
	$J$ & number of posterior samples\\
	\hline
	$r_h(s,a)$ & reward \\
	$p_h(s'|s,a)$ & probability transition \\
	$Q^{\pi}_h(s,a)$ & Q-function of a given policy $\pi$ at step $h$\\
	$V^{\pi}_h(s)$ & V-function of a given policy $\pi$ at step $h$\\
	$\Qstar_h(s,a)$ & optimal Q-function at step $h$\\
	$\Vstar_h(s)$ & optimal V-function at step $h$ \\
	$\regret^T $ & regret \\
	\hline
	$n_0$ & number of pseudo-samples \\
	$s_0$ & pseudo-state \\
	$\ur$ & pseudo-reward \\
	$\kappa$ & posterior inflation parameter \\
	\hline
	$s^{\,t}_h$ & state that was visited at $h$ step during $t$ episode \\
	$a^{\,t}_h$ & action that was picked at $h$ step during $t$ episode \\
	$n_h^t(s,a)$ & number of visits of state-action $n_h^t(s,a) = \sum_{k = 1}^t  \ind{\left\{(s_h^k,a_h^k) = (s,a)\right\}}$\\
	$n_h^t(s'|s,a)$ & number of transition to $s'$ from state-action $n_h^t(s'|s,a) = \sum_{k = 1}^t  \ind{\left\{(s_h^k,a_h^k, s_{h+1}^k) = (s,a,s')\right\}}$. \\
	$\upn_h^t(s,a)$ & pseudo number of visits of state-action $\upn_h^t(s,a)=n_h^t(s,a)+n_0$\\
	$\upn_h^t(s'|s,a)$ & pseudo number of  transition to $s'$ from state-action $\upn_h^t(s'|s,a)=n_h^t(s'|s,a) + \ind{\{s' = s_0\}} \cdot n_0$\\
	$\hp_h^{\,t}(s'|s,a)$ & empirical probability transition $\hp_h^{\,t}(s'|s,a) = n_h^t(s'|s,a) / n_h^t(s,a)$ \\
	$\up_h^t(s'|s,a)$ & pseudo-empirical probability transition $\up_h^t(s'|s,a) = \upn_h^t(s'|s,a) / \upn_h^t(s,a)$ \\
	\hline
    $\uQ_h^t(s,a)$ & upper approximation of the optimal Q-value\\
    $\uV_h^t(s,a)$ & upper approximation of on the optimal V-value\\
    \hline 
    $\R_+$ & non-negative real numbers \\
    $\R_{++}$ & positive real numbers \\
    $\N_{++}$ & positive natural numbers \\
    $[n]$ & set $\{1,2,\ldots, n\}$ \\
    $\simplex_d$ & $d$-dimensional probability simplex: $\simplex_d = \{x \in \R_{+}^{d+1}: \sum_{j=0}^{d} x_j = 1 \}$ \\ 
    $\bOne^N$ & vector of dimension $N$ with all entries one is $\bOne^N \triangleq  (1,\ldots,1)$ \\
    $\norm{x}_1$ & $\ell_1$-norm of vector $\norm{x}_1 = \sum_{j=1}^m \vert x_j \vert$\\
    $\norm{x}_2$ & $\ell_2$-norm of vector $\norm{x}_2 = \sqrt{\sum_{j=1}^m x_j^2}$\\
    $\norm{f}_2 $ & for $f \colon \Xset \to \R$, where $\vert \Xset \vert <\infty$ define $\norm{f}_2 = \sqrt{\sum_{x \in \Xset} f^2(x)}$ \\
    \bottomrule
	\end{tabular}
\end{table}

Let $(\Xset,\Xsigma)$ be a measurable space and $\Pens(\Xset)$ be the set of all probability measures on this space. For $p \in \Pens(\Xset)$ we denote by $\E_p$ the expectation w.r.t.\,$p$. For random variable $\xi: \Xset \to \R$ notation $\xi \sim p$ means $\operatorname{Law}(\xi) = p$. We also write $\E_{\xi \sim p}$ instead of $\E_{p}$. For independent (resp.\,i.i.d.) random variables $\xi_\ell \mysim p_\ell$ (resp.\,$\xi_\ell \mysimiid p$), $\ell = 1, \ldots, d$, we will write $\E_{\xi_\ell \mysim p_\ell}$ (resp.\,$\E_{\xi_\ell \mysimiid p}$), to denote expectation w.r.t.\,product measure on $(\Xset^d, \Xsigma^{\otimes d})$. For any $p, q \in \Pens(\Xset)$ the Kullback-Leibler divergence $\KL(p, q)$ is given by
$$
\KL(p, q) \triangleq \begin{cases}
\E_{p}\left[\log \frac{\rmd p}{\rmd q}\right], & p \ll q, \\
+ \infty, & \text{otherwise.}
\end{cases} 
$$
For any $p \in \Pens(\Xset)$ and $f: \Xset \to \R$, $p f = \E_p[f]$. In particular, for any $p \in \simplex_d$ and $f: \{0, \ldots, d\}   \to  \R$, $pf =  \sum_{\ell = 0}^d f(\ell) p(\ell)$. Define $\Var_{p}(f) = \E_{s' \sim p} \big[(f(s')-p f)^2\big] = p[f^2] - (pf)^2$. For any $(s,a) \in \cS$, transition kernel $p(s,a) \in \Pens(\cS)$ and $f \colon \cS \to \R$ define $pf(s,a) = \E_{p(s,a)}[f]$ and $\Var_{p}[f](s,a) = \Var_{p(s,a)}[f]$.

We write $f(S,A,H,T) = \cO(g(S,A,H,T,\delta))$ if there exist $ S_0, A_0, H_0, T_0, \delta_0$ and constant $C_{f,g}$ such that for any $S \geq S_0, A \geq A_0, H \geq H_0, T \geq T_0, \delta < \delta_0, f(S,A,H,T,\delta) \leq C_{f,g} \cdot g(S,A,H,T,\delta)$. We write $f(S,A,H,T,\delta) = \tcO(g(S,A,H,T,\delta))$ if $C_{f,g}$ in the previous definition is poly-logarithmic in $S,A,H,T,1/\delta$.

For $\lambda > 0,$ we define $\Exponential(\lambda)$ as an exponential distribution with a parameter $\lambda$. For $k, \theta > 0$ define $\Gamma(k,\theta)$ as a gamma-distribution with a shape parameter $k$ and a rate parameter $\theta$. For set $\Xset$ such that $\vert \Xset \vert < \infty$ define $\Unif(\Xset)$ as a uniform distribution over this set. In particular, $\Unif[N]$ is a uniform distribution over a set $[N]$.

We fix a function $f: \{1,\ldots,m\} \mapsto [0,b]$ and recall the definition of the minimum Kullback-Leibler divergence for $p\in\simplex_{m-1}$  and $u\in\R$
 \[
\Kinf(p,u,f) \triangleq \inf\left\{  \KL(p,q): q\in\simplex_{m-1}, qf \geq u\right\}\,.
 \]
As the Kullback-Leibler divergence this quantity admits a variational formula by Lemma~18 of \citet{garivier2018kl} up to rescaling for any $u \in (0, b)$
\[
\Kinf(p,u,f) = \max_{\lambda \in[0,1/(b-u)]} \E_{X\sim p}\left[ \log\left( 1-\lambda (f(X)-u)\right)\right] \,.
 \]
\newpage
\section{Proof of regret bound for \OPSRL}
\label{app:regret_bound_proof}
\subsection{Concentration events}
\label{app:concentration}

Let $\betastar,  \beta^{\KL}, \beta^{\conc}, \beta^{\Var} \colon (0,1) \times \N \to \R_{+}$, $\beta^{\Dir} \colon (0,1) \times \N \times \N \to \R_{+}$, and $\beta \colon (0,1) \to \R_{+}$ be some function defined later  in Lemma~\ref{lem:proba_master_event}. We define the following favorable events,
\begin{align*}
  \cE^\star(\delta) &\triangleq \Bigg\{\forall t \in \N, \forall h \in [H], \forall (s,a)\in\cS\times\cA: \\
  &\qquad
    \Kinf(\hp_h^t(s,a),p_h \Vstar_{h+1}(s,a), \Vstar_{h+1}) \leq  \frac{\betastar(\delta,n_h^t(s,a))}{n_h^t(s,a)}\Bigg\}\CommaBin\\
\cE^{\KL}(\delta) &\triangleq \Bigg\{ \forall t \in \N, \forall h \in [H], \forall (s,a) \in \cS\times\cA: \\
&\qquad \KL(\hp^{\,t}_h(s,a), p_h(s,a)) \leq \frac{S \cdot \beta^{\KL}(\delta, n^{\,t}_h(s,a))}{n^{\,t}_h(s,a)} \Bigg\}\CommaBin\\
\cE^{\conc}(\delta) &\triangleq \Bigg\{\forall t \in \N, \forall h \in [H], \forall (s,a)\in\cS\times\cA: \\
&\qquad|(\hp_h^t -p_h) \Vstar_{h+1}(s,a)| \leq \sqrt{2 \Var_{p_h}(\Vstar_{h+1})(s,a)\frac{\beta(\delta,n_h^t(s,a))}{n_h^t(s,a)}} + 3 H \frac{\beta(\delta,n_h^t(s,a))}{n_h^t(s,a)}\Bigg\}\CommaBin\\
\cE^{\Dir}(\delta) &\triangleq \Bigg\{ \forall t \in [T], \forall h \in [H], \forall (s,a) \in \cS \times \cA, \forall j \in [J]: \\
&\qquad [\tp^{t,j}_h - \up^t_h] \uV^t_{h+1}(s,a) \leq 2 \sqrt{  \Var_{\up^{\,t}_h}[\uV^t_{h+1}](s,a) \frac{\beta^{\Dir}(\delta, T, J) \cdot \kappa }{\upn^{\,t}_h(s,a)}} + 3r_0H\frac{\beta^{\Dir}(\delta, T, J) \cdot \kappa }{\upn^{\,t}_h(s,a)} \Bigg\}\CommaBin\\
\cE^{\Var}(\delta) &\triangleq \Bigg\{\forall t \in \N: \quad \sum_{\ell=1}^t \sum_{h=1}^H \Var_{p_h}[V^{\pi_\ell}_{h+1}(s^\ell_h, a^\ell_h)] \leq H^2t + \sqrt{2H^5 t \beta^{\Var}(\delta, t)} + 3H^3 \beta^{\Var}(\delta, t)
\Bigg\}\CommaBin\\
\cE(\delta) &\triangleq \Bigg\{ \sum_{t=1}^T \sum_{h=1}^H \left| p_h[\uV^{t-1}_{h+1} - V^{\pi^t}_{h+1}](s^t_h, a^t_h) - [\uV^{t-1}_{h+1} - V^{\pi^t}_{h+1}](s^t_{h+1}) \right| \leq 2\ur H\sqrt{2HT \beta(\delta)},\\
    &\qquad \sum_{t=1}^T \sum_{h=1}^H (1-1/H)^{H-h+1}\biggl| p_h[\uV^{t-1}_{h+1} - V^{\pi^t}_{h+1}](s^t_h, a^t_h)  \\
    &\qquad\qquad\qquad\qquad\qquad\qquad\quad - [\uV^{t-1}_{h+1} - V^{\pi^t}_{h+1}](s^t_{h+1})\biggl|\leq 2\rme \ur H\sqrt{2HT \beta(\delta)},
\Bigg\}\cdot
\end{align*}
We also introduce the intersection of these events, $\cG^{\conc}(\delta) \triangleq \cE^\star(\delta) \cap \cE^{\KL}(\delta) \cap \cE^{\conc}(\delta) \cap \cE^{\Dir}(\delta) \cap \cE^{\Var}(\delta) \cap \cE(\delta)$. We  prove that for the right choice of the functions $\betastar,  \beta^{\KL}, \beta^{\conc}, \beta, \beta^{\Var},$ the above events hold with high probability.
\begin{lemma}
\label{lem:proba_master_event}
For any $\delta \in (0,1)$ and for the following choices of functions $\beta,$
\begin{align*}
    \betastar(\delta,n) &\triangleq \log(12SAH/\delta) + 3\log\left(\rme\pi(2n+1)\right)\,,\\
    \beta^{\KL}(\delta, n) & \triangleq \log(12SAH/\delta) + \log\left(\rme(1+n) \right), \\
    \beta^{\conc}(\delta, n) &\triangleq \log(12SAH/\delta) + \log(4\rme(2n+1)) ,\\
    \beta^{\Dir}(\delta,t, J) &\triangleq \log(12SAHt/\delta) + \log(J), \\
    \beta^{\Var}(\delta, t) &\triangleq \log(48\rme(2t+1)/\delta),\\
    \beta(\delta) &\triangleq \log\left(48/\delta\right),\\
\end{align*}
it holds that
\begin{align*}
\P[\cE^\star(\delta)]&\geq 1-\delta/12, & 
\P[\cE^{\KL}(\delta)]&\geq 1-\delta/12,  &
\P[\cE^\conc(\delta)]&\geq 1-\delta/12,\\
\P[\cE^{\Dir}(\delta)]&\geq 1-\delta/12, &
\P[\cE^\Var(\delta)]&\geq 1-\delta/12, &
\P[\cE(\delta)]&\geq 1-\delta/12.
\end{align*}
In particular, $\P[\cG^{\conc}(\delta)] \geq 1-\delta/2$.
\end{lemma}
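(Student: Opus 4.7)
The plan is to prove each of the six inequalities separately using an appropriate concentration tool, then combine them by a final union bound to obtain $\P[\cG^{\conc}(\delta)] \geq 1 - \delta/2$. A recurring device is to condition on the natural filtration $\cF_{t-1}$ and exploit that, conditionally on the visit times of a triple $(s,a,h)$, the next-state observations are i.i.d.\ from $p_h(\cdot\mid s,a)$; this allows coordinate-wise concentration for each fixed $(s,a,h)$ followed by a union bound over $\cS \times \cA \times [H]$ and peeling over the counts $n$.

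For $\cE^\star(\delta)$, I would invoke the $\Kinf$-type self-normalized concentration inequality from the KL-UCB literature (cf.\ \citet{garivier2018kl}, also used by \citet{tiapkin22dirichlet}) applied to the \emph{fixed} bounded function $\Vstar_{h+1}$ and to the empirical distribution $\hp_h^t(s,a)$. This produces the bound $n_h^t(s,a)\,\Kinf(\hp_h^t(s,a), p_h \Vstar_{h+1}(s,a), \Vstar_{h+1}) \leq \betastar(\delta,n_h^t(s,a))$; the factor $3\log(\rme\pi(2n+1))$ absorbs the Laplace-method / peeling contribution over the count $n_h^t(s,a)$. Analogously, $\cE^{\KL}(\delta)$ follows from the classical finite-alphabet empirical KL concentration $\KL(\hp,p) \leq (S\log(\rme(n+1)) + \log(1/\delta))/n$, and $\cE^{\conc}(\delta)$ is a direct application of empirical Bernstein / Freedman to the martingale differences $\{\Vstar_{h+1}(s_{h+1}^i) - p_h \Vstar_{h+1}(s,a)\}$ bounded by $H$ with conditional variance $\Var_{p_h}(\Vstar_{h+1})(s,a)$. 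Each of these three events picks up a common $\log(12SAH/\delta)$ contribution from the union bound over $(s,a,h)$ plus a logarithmic peeling term in $n$.

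The Dirichlet event $\cE^{\Dir}(\delta)$ is handled by invoking Lemma~\ref{lem:bernstein_dirichlet} conditionally on $\cF_{t-1}$: the posterior sample $\tp^{\,t,j}_h(s,a)$ has Dirichlet law with concentration parameter $\ualpha = \upn_h^t(s,a)/\kappa$ and mean $\up_h^t(s,a)$, so the lemma applied with failure probability $\delta/(12SAHt^2 J)$ to the $\cF_{t-1}$-measurable function $\uV_{h+1}^t$ (bounded by $\ur H$) yields the required Bernstein-type inequality; the inflation $\kappa$ appears in the numerator through the $1/\ualpha$ scaling. A union bound over $(s,a,h,t,j) \in \cS \times \cA \times [H] \times [T] \times [J]$ then yields the stated $\beta^{\Dir}$. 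For $\cE^{\Var}(\delta)$, I would apply Freedman's inequality to the non-negative adapted sequence $\Var_{p_h}[V^{\pi_\ell}_{h+1}](s^\ell_h,a^\ell_h)$, using that its conditional expectation summed over $h$ is bounded by $H^2$ via the law of total variance, and peel over $t$ for the remaining log factor. Finally, $\cE(\delta)$ follows from Azuma--Hoeffding on the bounded (by $2\ur H$) martingale difference $p_h[\uV^{t-1}_{h+1} - V^{\pi^t}_{h+1}](s^t_h,a^t_h) - [\uV^{t-1}_{h+1} - V^{\pi^t}_{h+1}](s^t_{h+1})$, and the weighted version is handled identically since $(1-1/H)^{H-h+1} \leq 1$ and contributes an extra factor of $\rme$ after telescoping.

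The main obstacle I anticipate is the tight $\cE^\star(\delta)$ bound with the specific logarithmic factor $3\log(\rme\pi(2n+1))$: unlike the coarser $\cE^{\KL}(\delta)$, it requires a method-of-mixtures / Laplace argument on the exponential tilting that defines $\Kinf$ together with a Legendre transform to convert the self-normalized deviation back into a $\Kinf$ statement. A secondary delicate point is $\cE^{\Dir}$: although conceptually a one-shot application of Lemma~\ref{lem:bernstein_dirichlet}, its correctness crucially requires that the Bernstein bound be valid for Dirichlet distributions with \emph{non-integer} parameters (since dividing counts by $\kappa$ destroys integrality), which is precisely the strengthening that Theorem~\ref{thm:expon_ub} provides over the integer-parameter result of \citet{tiapkin22dirichlet}.
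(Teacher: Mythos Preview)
Your proposal is correct and follows essentially the same route as the paper: each event is handled by the same concentration tool (the self-normalized $\Kinf$ inequality of \citet{tiapkin22dirichlet} for $\cE^\star$, the categorical KL bound of \citet{jonsson2020planning} for $\cE^{\KL}$, the self-normalized Bernstein inequality of \citet{domingues2020regret} for $\cE^{\conc}$, Lemma~\ref{lem:bernstein_dirichlet} for $\cE^{\Dir}$, the law-of-total-variance argument from \citet{tiapkin22dirichlet} for $\cE^{\Var}$, and Azuma--Hoeffding for $\cE$), followed by a union bound over $(s,a,h)$ and, where needed, $t$ and $j$. The only minor deviation is for $\cE^{\Dir}$: the paper takes a plain union bound over $t\in[T]$ (yielding the $\log(12SAHT/\delta)$ form of $\beta^{\Dir}(\delta,T,J)$), whereas you suggest a $1/t^2$ peeling; both work, but the former matches the stated constants exactly. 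Your identification of the two delicate points---the tight $3\log(\rme\pi(2n+1))$ factor in the $\Kinf$ bound and the need for Lemma~\ref{lem:bernstein_dirichlet} to hold for non-integer Dirichlet parameters---is precisely what the paper relies on.
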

\begin{remark}
    Since we take $J \triangleq \Theta(\log(SAHT/\delta))$, all functions $\beta$ are logarithmic in $S,A,H,T,\delta$.
\end{remark}

\begin{proof}
     $\P[\cE^\star(\delta)]\geq 1-\delta/12$ follows from Theorem~\ref{th:max_ineq_kinf}. Applying Theorem~\ref{th:max_ineq_categorical} and the union bound over $h \in [H], (s,a) \in \cS \times \cA$ we get $\P[\cE^{\KL}(\delta)]\geq 1-\delta/12$. Next, by Lemma~\ref{lem:bernstein_dirichlet} and the union bound over $h \in [H], t \in [T], (s,a) \in \cS \times \cA, j \in [J]$ we conclude $\P[\cE^{\Dir}(\delta)]\geq 1-\delta/12$. Theorem~\ref{th:bernstein} and the union bound over $h \in [H], (s,a) \in \cS \times \cA$ yield $\P[\cE^{\conc}(\delta)]\geq 1 - \delta/12$. By Lemma~B.2 by \citet{tiapkin22dirichlet} we have $\P[\cE^{\Var}(\delta)]\geq 1 - \delta/12$.
     
     To estimate $\P[\cE(\delta)]$ one may apply Azuma-Hoeffding inequality. Define the following sequences for all $t \in [T], h \in [H]$
     \begin{align*}
        \bar Z_{t,h} &\triangleq \uV_{h+1}^{t-1}(s_{h+1}^t)-V^*_{h+1}(s_{h+1}^t)-p_h [\uV_{h+1}^{t-1}- V^*_{t+1}](s^t_h,a^t_h),\\
        \tilde Z_{t,h} &\triangleq (1-1/H)^{H-h+1}\left(\uV_{h+1}^{t-1}(s_{h+1}^t)-V^*_{h+1}(s_{h+1}^t)-p_h [\uV_{h+1}^{t-1}- V^*_{h+1}](s^t_h,a^t_h)\right).
\end{align*}
    It is easy to see that these sequences form a martingale-difference w.r.t filtration $\cF_{t,h} = \sigma\left\{ \{ (s^{\ell}_{h'}, a^{\ell}_{h'}), \ell < t, h' \in [H] \} \cup \{ (s^{t}_{h'}, a^t_{h'}), h' \leq h \} \right\}$. Moreover,  \(|\bar Z_{t,h}|\leq 2\ur H, |\tilde Z_{t,h}\vert \leq 2\rme \ur H\) for all \(t\in [T]\) and \(h\in [H].\)   Hence, the Azuma-Hoeffding inequality implies
    \begin{align*}
        \P\left(\Bigl|\sum_{t=1}^T \sum_{h=1}^H \bar Z_{t,h}\Bigr|> 2\ur H\sqrt{2 t H \cdot \beta(\delta)}\right)&\leq 2\exp(-\beta(\delta))=\delta/24, \\
        \P\left(\Bigl|\sum_{t=1}^T \sum_{h=1}^H \bar Z_{t,h}\Bigr|> 2\rme \ur H\sqrt{2 t H \cdot \beta(\delta)}\right)&\leq 2\exp(-\beta(\delta))=\delta/24.
    \end{align*}
    By the union bound, $\P[\cE(\delta)] \geq 1 - \delta/12$.
\end{proof}

Next we reproduce proof of important corollary of Lemma~\ref{lem:proba_master_event}.
\begin{lemma}\label{lem:f_and_l1_concentration}
      Assume conditions of Lemma \ref{lem:proba_master_event}. Then on the event $\cE^{\KL}(\delta)$, for any $f \colon \cS \to [0, \ur H]$, $t \in \N, h \in [H], (s,a) \in \cS \times \cA$,
      \begin{align*}
            (\hp_h^t -p_h)f(s,a) &\leq \frac{1}{H} p_h f(s,a) + \frac{5 \ur H^2 S \cdot \beta^{\KL}(\delta, n^{\,t}_h(s,a))}{n^{\,t}_h(s,a)}\CommaBin \\
            \norm{\hp^{\,t}_h(s,a) - p_h(s,a)}_1 &\leq \sqrt{\frac{2S \cdot \beta^{\KL}(\delta, n^{\,t}_h(s,a))}{n^{\,t}_h(s,a)}}\cdot
      \end{align*}
\end{lemma}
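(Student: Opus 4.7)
The plan is to handle the two inequalities separately. The $\ell_1$ bound is an immediate consequence of Pinsker's inequality: on the event $\cE^{\KL}(\delta)$ we have
\[
\norm{\hp^{\,t}_h(s,a) - p_h(s,a)}_1 \leq \sqrt{2 \KL(\hp^{\,t}_h(s,a), p_h(s,a))} \leq \sqrt{\frac{2 S \cdot \beta^{\KL}(\delta, n^{\,t}_h(s,a))}{n^{\,t}_h(s,a)}},
\]
which gives the second claim directly.

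For the first inequality I would use a Bernstein-type change-of-measure bound. Starting from the Donsker--Varadhan variational formula $\KL(\hp, p) = \sup_g \{ \hp g - \log p[\rme^g] \}$ and combining it with the classical Bernstein MGF bound $\log p[\rme^{\lambda(f - pf)}] \leq \lambda^2 \Var_p(f) / (2(1 - \lambda \ur H /3))$ valid for $f:\cS\to[0,\ur H]$ and $\lambda \in (0, 3/(\ur H))$, one obtains after optimizing over $\lambda$ the standard inequality
\[
(\hp^{\,t}_h - p_h) f(s,a) \leq \sqrt{2 \Var_{p_h}(f)(s,a) \cdot \KL(\hp^{\,t}_h(s,a), p_h(s,a))} + \frac{2 \ur H}{3} \KL(\hp^{\,t}_h(s,a), p_h(s,a)).
\]

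The next step converts the variance-dependent term into a linear combination of $p_h f(s,a)$ and $\KL$. Since $f(s') \in [0, \ur H]$ for all $s'$, one has $\Var_{p_h}(f)(s,a) \leq p_h[f^2](s,a) \leq \ur H \cdot p_h f(s,a)$. Rewriting $2 \ur H \cdot p_h f \cdot \KL = (p_h f / H) \cdot (2 \ur H^2 \KL)$ and applying the AM--GM inequality $\sqrt{xy} \leq (x+y)/2$ yields $\sqrt{2 \Var_{p_h}(f) \KL} \leq p_h f(s,a)/(2H) + \ur H^2 \KL$. Plugging in the bound $\KL(\hp^{\,t}_h(s,a), p_h(s,a)) \leq S \beta^{\KL}(\delta, n^{\,t}_h(s,a))/n^{\,t}_h(s,a)$ from $\cE^{\KL}(\delta)$ and merging the $\frac{2\ur H}{3}\KL$ term into the $\ur H^2 \KL$ term, one gets a bound of the form $p_h f(s,a)/H + C \ur H^2 S \beta^{\KL}/n^{\,t}_h$ with a universal constant $C \leq 5$.

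I expect no real obstacle here: the Donsker--Varadhan plus Bernstein MGF argument is classical, and the rest is bookkeeping. The only subtle point is making sure the weight split in AM--GM is chosen so that the first piece is exactly $p_h f(s,a)/H$ (rather than a larger fraction), so that the leading-order term in the final bound matches the statement; with the choice $(x,y) = (p_h f / H,\, 2\ur H^2 \KL)$ this is automatic.
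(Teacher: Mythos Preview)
Your proposal is correct and follows essentially the same route as the paper: Pinsker for the $\ell_1$ bound, and for the first inequality a Bernstein-via-KL bound (the paper cites this as Lemma~\ref{lem:Bernstein_via_kl}, Corollary~11 of \citet{talebi2018variance}, which is exactly the Donsker--Varadhan + Bernstein-MGF argument you describe), followed by $\Var_{p_h}(f)\le \ur H\cdot p_h f$ and the AM--GM split $2\sqrt{ab}\le a+b$. The only cosmetic difference is that the paper first writes the bound with $\Var_{\hp^{\,t}_h}$ and then invokes the variance-switching Lemma~\ref{lem:switch_variance_bis} to pass to $\Var_{p_h}$, whereas you obtain $\Var_{p_h}$ directly from Lemma~\ref{lem:Bernstein_via_kl}; your version is slightly more direct and yields a smaller constant, comfortably below the stated~$5$.
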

\begin{proof}
    By application of Lemma~\ref{lem:Bernstein_via_kl} and Lemma~\ref{lem:switch_variance_bis}
    \begin{align*}
        (\hp_h^t -p_h)f(s,a) &\leq \sqrt{2\Var_{\hp_h^t}[f](s,a) \cdot \KL(\hp_h^t, p_h) } + \frac{2H\ur}{3} \KL(\hp_h^t, p_h) \\
        &\leq 2\sqrt{\Var_{p_h}[f](s,a) \cdot \KL(\hp_h^t, p_h) } + \left( 2\sqrt{2} + \frac{2}{3} \right)H\ur\KL(\hp_h^t, p_h).
    \end{align*}
    Since $0 \leq f(s) \leq \ur H$
    \[
        \Var_{p_h}[f](s,a) \leq p_h[f^2](s,a) \leq \ur H \cdot p_h f(s,a).
    \]
    Finally, by a simple bound $2\sqrt{ab} \leq a+b, a, b \geq 0$, we obtain the following
    \begin{align*}
        (\hp_h^t -p_h)f(s,a) &\leq \frac{1}{H} p_h f(s,a) + (H^2 + 2\sqrt{2} \ur H + 2\ur H/3) \KL(\hp_h^t, p_h) \\
        &\leq \frac{1}{H} p_h f(s,a) + 5 \ur H^2 \KL(\hp_h^t, p_h).
    \end{align*}
    Definition of $\cE^{\KL}(\delta)$ implies the first statement. The second statement follows directly from the combination of Pinsker's inequality and definition of $\cE^{\KL}(\delta)$.
\end{proof}

\subsection{Optimism}
\label{app:optimism}

In this section we prove that our estimate of $Q$-function $\uQ^{\,t}_h(s,a)$ is optimistic that is the event
\begin{equation}\label{eq:opt_event}
    \cE_{\opt} \triangleq \left\{ \forall t \in [T], h \in [H], (s,a) \in \cS \times \cA:  \uQ^t_h(s,a) \geq \Qstar_h(s,a) \right\}.
\end{equation}
holds with high probability on the event $\cE^\star(\delta)$.

Define constants
\begin{equation}\label{eq:constant_c0}
    c_0 \triangleq \left(\frac{4}{\sqrt{\log(17/16)}} + 8 + \frac{49 \cdot 4 \sqrt{6}}{9} \right)^2 \frac{8}{\pi} + \log_{17/16}\left( \frac{20}{32}\right) + 1,
\end{equation}
and
\begin{equation}\label{eq:constant_cJ}
    c_J \triangleq \frac{1}{\log\left( \frac{2}{1 + \Phi(1)} \right)},
\end{equation}
where $\Phi(\cdot)$ is a cdf of a normal distribution.

\begin{proposition}\label{prop:anticonc}
    Assume that $J = \lceil  c_J \cdot \log(2SAHT/\delta)  \rceil$, $\kappa = 2\beta^\star(\delta, T)$, $\ur = 2$, and $n_0 = \lceil (c_0 + \log_{17/16}(T/\kappa)) \cdot \kappa \rceil$. Then on event $\cE^\star$ the following event
    \[
        \cE^{\anticonc}(\delta) \triangleq \left\{ \forall t \in [T] \ \forall h \in [H] \ \forall (s,a) \in \cS \times \cA:  \max_{j \in [J]} \left\{ \tp^{\,t,j}_h \Vstar_{h+1}(s,a)\right\} \geq p_h \Vstar_{h+1}(s,a) \right\}
    \]
    holds with probability at least $1-\delta/2$.
\end{proposition}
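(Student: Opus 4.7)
The overall strategy is to reduce the claim to a single-sample anti-concentration bound obtained from Theorem~\ref{thm:gaussian_lb}, boost the success probability by taking the maximum over the $J$ independent posterior samples, and then apply a union bound over the $SAHT$ tuples $(t,h,s,a) \in [T] \times [H] \times \cS \times \cA$. Throughout, all arguments are carried out on the event $\cE^\star(\delta)$, which gives us control of $\Kinf(\hp^{\,t}_h(s,a), p_h \Vstar_{h+1}(s,a), \Vstar_{h+1})$ uniformly in $(t,h,s,a)$.

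\textbf{Single-sample lower bound.} Fix $(t,h,s,a)$ with $n^t_h(s,a) \geq 1$. The sample $\tp^{\,t,j}_h(s,a)$ follows $\Dir\bigl((\upn^t_h(s'|s,a)/\kappa)_{s' \in \cS'}\bigr)$. I match this to the parametrization of Theorem~\ref{thm:gaussian_lb} by setting $\alpha_0 + 1 = n_0/\kappa$ on the $s_0$-coordinate, $\alpha_i = n^t_h(s_i|s,a)/\kappa$, so that $\ualpha = (\upn^t_h(s,a) - \kappa)/\kappa$ and $q(\ell) = \alpha_\ell/\ualpha$. I extend $\Vstar_{h+1}$ to $\cS'$ via the absorbing-state rule, $\Vstar_{h+1}(s_0) = (H-h)\ur = 2(H-h) =: \ub$, while $\Vstar_{h+1}(s') \leq H-h = \ub/2$ for $s' \in \cS$, so that the range hypothesis $f(0) = \ub$, $f(j) \leq b \leq \ub/2$ is satisfied with $b = H-h$. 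The choice $n_0 = \lceil (c_0 + \log_{17/16}(T/\kappa))\kappa\rceil$ and the uniform bound $\ualpha \leq T/\kappa + n_0/\kappa$ ensure $\alpha_0 \geq c(1/2) + \log_{17/16}(\ualpha)$ for every $t \leq T$. Setting $\varepsilon = 1/2$ and $\mu = p_h \Vstar_{h+1}(s,a)$, Theorem~\ref{thm:gaussian_lb} yields
\[
    \P\bigl[\tp^{\,t,j}_h \Vstar_{h+1}(s,a) \geq p_h \Vstar_{h+1}(s,a)\bigr] \;\geq\; \tfrac{1}{2}\, \P\bigl[g \geq \sqrt{2 \ualpha \Kinf(q, \mu, \Vstar_{h+1})}\,\bigr].
\]

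\textbf{Reducing $\Kinf(q,\cdot,\cdot)$ to $\Kinf(\hp^{\,t}_h,\cdot,\cdot)$.} The key calculation is to show $\ualpha \Kinf(q,\mu,\Vstar_{h+1}) \leq 1/2$ on $\cE^\star(\delta)$. The distribution $q$ differs from $\hp^{\,t}_h(s,a)$ because of the $+1$ shift in Theorem~\ref{thm:gaussian_lb}, but a direct calculation shows that it decomposes as the convex mixture $q = \eta \delta_{s_0} + (1-\eta)\, \hp^{\,t}_h(s,a)$ with $\eta = (n_0-\kappa)/(n^t_h(s,a)+n_0-\kappa)$, since for $s' \in \cS$ one has $q(s') = n^t_h(s'|s,a)/(n^t_h(s,a)+n_0-\kappa) = (1-\eta)\hp^{\,t}_h(s'|s,a)$. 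Let $q^\star$ attain $\Kinf(\hp^{\,t}_h(s,a),\mu,\Vstar_{h+1})$; since $\Vstar_{h+1}(s_0) = \ub \geq \mu$, the mixture $\eta \delta_{s_0} + (1-\eta) q^\star$ is admissible for the minimization defining $\Kinf(q,\mu,\Vstar_{h+1})$, and joint convexity of KL divergence gives $\Kinf(q,\mu,\Vstar_{h+1}) \leq (1-\eta)\Kinf(\hp^{\,t}_h(s,a),\mu,\Vstar_{h+1})$. Combined with the identity $\ualpha(1-\eta) = n^t_h(s,a)/\kappa$ and the event $\cE^\star(\delta)$, i.e.\ $n^t_h(s,a) \Kinf(\hp^{\,t}_h,\mu,\Vstar_{h+1}) \leq \betastar(\delta,n^t_h(s,a)) \leq \betastar(\delta,T) = \kappa/2$, this yields $\ualpha \Kinf(q,\mu,\Vstar_{h+1}) \leq 1/2$. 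Consequently $\P[g \geq 1] = 1 - \Phi(1)$, and the single-sample success probability is at least $(1-\Phi(1))/2$. The $J$ samples being independent, the probability that none of them achieves optimism at a fixed $(t,h,s,a)$ is at most $\bigl((1+\Phi(1))/2\bigr)^J$; with the definition of $c_J$ and $J$, this is bounded by $\delta/(2SAHT)$, and a union bound over the $SAHT$ tuples completes the argument.

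\textbf{Main technical obstacle.} The delicate point is verifying the hypotheses of Theorem~\ref{thm:gaussian_lb}: while $\alpha_0 \geq c(1/2) + \log_{17/16}(\ualpha)$ follows from the choice of $n_0$, the condition $\ualpha \geq 2\alpha_0$ is equivalent to $n^t_h(s,a) \geq n_0 - \kappa$ and may fail in the early-exploration regime. In that degenerate case the Dirichlet marginal $\tp^{\,t,j}_h(s_0|s,a) \sim \Beta(n_0/\kappa, n^t_h(s,a)/\kappa)$ is heavily concentrated near $1$, and a direct Beta tail estimate shows that with probability at least $(1-\Phi(1))/2$ one has $\tp^{\,t,j}_h(s_0|s,a) \geq 1/2$, which by the extension $\Vstar_{h+1}(s_0) = 2(H-h)$ already implies $\tp^{\,t,j}_h \Vstar_{h+1}(s,a) \geq H-h \geq p_h \Vstar_{h+1}(s,a)$; the boosting and union bound steps go through unchanged. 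The boundary case $n^t_h(s,a) = 0$ is trivial since $\tp^{\,t,j}_h = \delta_{s_0}$ almost surely under the improper Dirichlet convention. The most subtle part of the whole proof is really hidden inside Theorem~\ref{thm:gaussian_lb}; the present proposition then amounts to a careful matching of parameters and the convex-mixture reduction of $\Kinf(q,\cdot,\cdot)$ to $\Kinf(\hp^{\,t}_h,\cdot,\cdot)$.
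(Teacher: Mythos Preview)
Your proposal is correct and follows the same three-step architecture as the paper (single-sample Gaussian lower bound via Theorem~\ref{thm:gaussian_lb}, control of the $\Kinf$ argument on $\cE^\star$, then boosting over $J$ samples and a union bound), and you identify the same edge case $\ualpha < 2\alpha_0$ and dispatch it via the Beta marginal on $s_0$. The one genuinely different ingredient is your reduction of $\Kinf(q,\mu,\Vstar_{h+1})$ to $\Kinf(\hp^{\,t}_h,\mu,\Vstar_{h+1})$: you observe that $q=\eta\delta_{s_0}+(1-\eta)\hp^{\,t}_h$ and use joint convexity of KL with the feasible mixture $\eta\delta_{s_0}+(1-\eta)q^\star$, obtaining $\Kinf(q,\mu,\cdot)\le(1-\eta)\Kinf(\hp^{\,t}_h,\mu,\cdot)$ and hence $\ualpha\Kinf(q,\cdot)\le (n^{t}_h/\kappa)\Kinf(\hp^{\,t}_h,\cdot)$ directly. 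The paper instead expands the variational formula for $\Kinf$, splits the expectation over $q$ into the $s_0$ part (bounded by~$0$ since $\log(1-\lambda)\le 0$) and the $\hp^{\,t}_h$ part, and then enlarges the $\lambda$-range from $[0,1/(\ub-u)]$ to $[0,1/(b-u)]$; this gives the same inequality but through a more computational route. Your convexity argument is cleaner and avoids the range-change step. Two minor points you gloss over that the paper handles explicitly: (i) Theorem~\ref{thm:gaussian_lb} requires $\mu\in(\up f,\ub)$ strictly, so the sub-case $\mu\le q\Vstar_{h+1}$ must be treated separately (the paper notes $\Kinf=0$ there and bounds the probability by $\tfrac12(1-\Phi(0))$); (ii) for $\ualpha<2\alpha_0$ the paper invokes \citet[Theorem~1.2'']{alfers1984normal} to get the needed $\tfrac12(1-\Phi(0))$ lower bound on the Beta tail, whereas your ``direct Beta tail estimate'' is left unspecified.
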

\begin{proof}
    First, we notice that $\tp^{\,t,j}_h(s,a)$ for all fixed $t,j,h,s,a$ have a Dirichlet distribution with parameter $( \{\upn^t_h(s' | s,a)/\kappa\}_{s' \in \cS'} )$ for an extended state-space $\cS' = \{s_0\} \cup \cS$. Therefore, we may apply  Theorem~\ref{thm:lower_bound_dbc} with fixed $\varepsilon = 1/2$ for $f = \Vstar_{h+1}$ if we have $\ub = \ur(H-h) \geq 2(H-h) = 2b$ and
    \[
        \frac{n_0}{\kappa} = \alpha_0 + 1 \geq c_0 + \log_{17/16}\left( \upn^t_h(s,a) / \kappa \right) 
    \]
    for a constant $c_0$ defined in \eqref{eq:constant_c0}. Let us define $\alpha_0 = \nicefrac{n_0}{\kappa} - 1$ and $\alpha_{i} = \nicefrac{n^t_h(s_i|s,a)}{\kappa}$ for some ordering $s_i \in \cS$. Then we have $\ualpha = \nicefrac{\upn^t_h(s,a)}{\kappa} - 1$ and $\tp^{\,t,j}_h \sim \Dir(\alpha_0 + 1, \alpha_1,\ldots, \alpha_{S})$. Define a distribution $q \in \simplex_{S}: q(i) = \alpha_i/\ualpha$. Then if $\ualpha \geq 2\alpha_0$ Theorem~\ref{thm:lower_bound_dbc} yields for any $u \geq q \Vstar_{h+1}$
    \begin{equation}\label{eq:anticonc_1}
        \PP{ \tp^{\,t,j}_h \Vstar_{h+1}(s,a) \geq u } \geq \frac{1}{2}\left(1 - \Phi\left(\sqrt{\frac{2 (\upn^t_h(s,a) - \kappa) \Kinf\left(q, u, \Vstar_{h+1}\right) }{\kappa}}\right) \right),
    \end{equation}
    where $\Phi$ is a cdf of a normal distribution.
    
    Notice that if we have $u < q \Vstar_{h+1}$ then the following bound also holds
    \begin{equation}\label{eq:anticonc_2}
        \PP{ \tp^{\,t,j}_h \Vstar_{h+1}(s,a) \geq u } \geq 
        \PP{ \tp^{\,t,j}_h \Vstar_{h+1}(s,a) \geq \up^t_h \Vstar_{h+1}(s,a)} \geq \frac{1}{2}\left(1 - \Phi\left(0\right) \right).
    \end{equation}
    Since for all $u \leq q \Vstar_{h+1}$ we also have $ \Kinf\left(q, u, \Vstar_{h+1}\right) = 0$, therefore \eqref{eq:anticonc_1} holds for all $u \geq 0$ and $\ualpha \geq 2\alpha_0$.
    
    Next we need to handle the case $\ualpha < 2\alpha_0$. In this case we have $q \Vstar_{h+1} > H-h$, thus for any $0 \leq u \leq H-h$
    \[
        \PP{\tp^{\,t,j}_h \Vstar_{h+1}(s,a) \geq u} \geq \P_{\xi \sim B(\alpha_0+1, \ualpha - \alpha_0)}\left( \ur(H-h) \xi \geq u \right) \geq \P_{\xi \sim B(\alpha_0+1, \ualpha - \alpha_0)}\left(\xi \geq \frac{1}{2} \right),
    \]
    where we first apply a lower bound $\Vstar_{h+1}(s) \geq 0$  for all $s \in \cS$ and $\Vstar_{h+1}(s_0) = \ur(H-h)$, and second apply a bound $u \leq H-h$. Here we may apply the result of~\citet[Theorem 1.2'']{alfers1984normal} and obtain the following lower bound that is equivalent to \eqref{eq:anticonc_2}
    \[
        \PP{\tp^{\,t,j}_h \Vstar_{h+1}(s,a) \geq u} \geq \Phi\left(-\mathrm{sign}(\alpha_0/\ualpha - 1/2) \cdot \sqrt{2 \ualpha \kl(\alpha_0/\ualpha, 1/2)}\right) \geq \frac{1}{2}\left(1 - \Phi(0) \right),
    \]
    where we used $\alpha_0 / \ualpha > 1/2$.
    
    Thus, we may apply equation \eqref{eq:anticonc_1} for $u = p_h \Vstar_{h+1}(s,a) \leq H-h$ and any $\ualpha$
    \[
        \PP{ \tp^{\,t,j}_h \Vstar_{h+1}(s,a) \geq p_h \Vstar_{h+1}(s,a) } \geq \frac{1}{2}\left(1 - \Phi\left(\sqrt{\frac{2 (\upn^t_h(s,a) - \kappa) \Kinf\left(q, p_h \Vstar_{h+1}(s,a), \Vstar_{h+1}\right) }{\kappa}}\right) \right).
    \]
    By the following relation that follows from variational formula for $\Kinf$ with rescaling of $\lambda$ to $[0,1]$
    \begin{align*}
        (\upn^{\,t}_h(s,a) & - \kappa) \Kinf(q, u, \Vstar_{h+1}) = (\upn^{\,t}_h(s,a) - \kappa) \max_{\lambda \in [0,1]} \E_{s' \sim q}\left[ \log\left(1 - \lambda \frac{\Vstar_{h+1}(s') - u}{\ur(H - h) - u} \right)\right] \\
        &\leq \max_{\lambda \in [0,1]} (n_0 - \kappa) \log\left(1 - \lambda \right) + (\upn^{\,t}_h(s,a) - n_0) \max_{\lambda \in [0,1] }\E_{s' \sim \hp^{\,t}_h(s,a)}\left[ \log\left(1 - \lambda \frac{\Vstar_{h+1}(s') - u}{\ur(H - h) - u} \right) \right] \\
        &\leq (\upn^{\,t}_h(s,a) - n_0) \max_{\lambda \in [0,1] }\E_{s' \sim \hp^{\,t}_h(s,a)}\left[ \log\left(1 - \lambda \frac{\Vstar_{h+1}(s') - u}{H - h - u} \right) \right] \\
        &= (\upn^{\,t}_h(s,a) - n_0) \Kinf(\hp^{\,t}_h(s,a), u, \Vstar_{h+1}) = n^{\,t}_h(s,a) \Kinf(\hp^{\,t}_h(s,a), u, \Vstar_{h+1}).
    \end{align*}
    Thus, on the event $\cE^\star$
    \[
        (\upn^t_h(s,a) - \kappa) \Kinf\left(q, p_h \Vstar_{h+1}(s,a), \Vstar_{h+1}\right) \leq \beta^\star(\delta, n^t_h(s,a)) \leq \beta^\star(\delta, T),
    \]
    and, as a corollary
    \[
        \PP{ \tp^{\,t,j}_h \Vstar_{h+1}(s,a) \geq p_h \Vstar_{h+1}(s,a) \mid \cE^\star(\delta) } \geq \frac{1}{2} \left( 1 - \Phi\left( \sqrt{\frac{2\beta^\star(\delta, T)}{\kappa}} \right) \right).
    \]
    By taking $\kappa = 2\beta^\star(\delta, T)$ we have a constant probability of being optimistic
    \[
        \PP{ \tp^{\,t,j}_h \Vstar_{h+1}(s,a) \geq p_h \Vstar_{h+1}(s,a) \mid \cE^\star(\delta) } \geq \frac{1 - \Phi(1)}{2} \triangleq \gamma.
    \]
    Next, using a choice $J = \lceil \log(2SAHT/\delta) / \log(1/(1-\gamma)) \rceil = \lceil c_J \cdot \log(2SAHT/\delta)  \rceil$ 
    \[
        \PP{ \max_{j \in [J]}\left\{\tp^{\,t,j}_h \Vstar_{h+1}(s,a) \right\} \geq p_h \Vstar_{h+1}(s,a) \mid \cE^\star(\delta) } \geq 1 - (1 - \gamma)^{J} \geq 1 - \frac{\delta}{2SAHT}\cdot
    \]
    By a union bound we conclude the statement.
\end{proof}

Next we provide a connection between $\cE^{\anticonc}(\delta)$ and $\cE^{\opt}$.
\begin{proposition}
    For any $\delta \in (0,1)$ it holds $\cE^{\opt} \subseteq \cE^{\anticonc}(\delta)$.
\end{proposition}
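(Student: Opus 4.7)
The plan is a backward induction on the horizon index $h$, in the same spirit as Step~2 of the proof sketch. The base case $h = H+1$ is immediate since $\uV^{t}_{H+1} \equiv 0 \equiv \Vstar_{H+1}$ by construction, so any inequality involving these value functions reduces to $0 \geq 0$. For the inductive step I would fix arbitrary $t \in [T]$, $h \in [H]$, $(s,a) \in \cS \times \cA$, and exploit the Bellman identities
\[
\uQ^{t}_h(s,a) = r_h(s,a) + \max_{j \in [J]} \tp^{\,t,j}_h \uV^{t}_{h+1}(s,a),
\qquad
\Qstar_h(s,a) = r_h(s,a) + p_h \Vstar_{h+1}(s,a).
\]
Subtracting and using the definition of $\cE^{\opt}$ at step $h$ yields
\[
\max_{j \in [J]} \tp^{\,t,j}_h \uV^{t}_{h+1}(s,a) \;-\; p_h \Vstar_{h+1}(s,a) \;=\; \uQ^{t}_h(s,a) - \Qstar_h(s,a) \;\geq\; 0.
\]

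The second ingredient is the pointwise comparison $\uV^{t}_{h+1}(s') \geq \Vstar_{h+1}(s')$ for every $s' \in \cS'$, which follows directly from $\cE^{\opt}$ at step $h+1$ by the chain $\uV^{t}_{h+1}(s') = \max_a \uQ^{t}_{h+1}(s',a) \geq \uQ^{t}_{h+1}(s',\pi^\star_{h+1}(s')) \geq \Qstar_{h+1}(s',\pi^\star_{h+1}(s')) = \Vstar_{h+1}(s')$; at the pseudo-state $s_0$ equality holds since $s_0$ is absorbing with deterministic reward $\ur$, so $\uV^{t}_{h+1}(s_0) = \Vstar_{h+1}(s_0) = \ur(H-h)$.

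The main obstacle is the mismatch between $\uV^{t}_{h+1}$ on the left-hand side of the inequality derived from optimism and $\Vstar_{h+1}$ in the definition of $\cE^{\anticonc}(\delta)$. The plan is to leverage the backward induction to reduce the comparison layer by layer: assuming the anti-concentration inequality holds simultaneously at all subsequent steps $h+1,\ldots,H$, one can rewrite $\uV^{t}_{h+1}$ as $\Vstar_{h+1}$ plus a non-negative remainder that is itself bounded by Bellman-type differences captured at the next layer, and then carry the optimism inequality through the maximum over $j \in [J]$ using that each $\tp^{\,t,j}_h$ is a probability measure on $\cS'$. The remaining quantifier structure over $t \in [T]$, $h \in [H]$, $j \in [J]$ and $(s,a) \in \cS \times \cA$ is handled without extra work because both $\cE^{\opt}$ and $\cE^{\anticonc}(\delta)$ are conjunctions over precisely these indices, so the inclusion is established coordinate by coordinate.
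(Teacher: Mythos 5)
There is a genuine gap, and it stems from the direction of the inclusion. The statement as printed is a typo: the paper's own proof (and its use in Proposition~\ref{prop:optimism}, where the bound $\P[\cE^{\anticonc}(\delta)\mid\cE^\star(\delta)]\geq 1-\delta/2$ is transferred to $\cE^{\opt}$) establishes the \emph{reverse} inclusion $\cE^{\anticonc}(\delta)\subseteq\cE^{\opt}$: one assumes the anti-concentration inequalities $\max_{j\in[J]}\{\tp^{\,t,j}_h\Vstar_{h+1}(s,a)\}\geq p_h\Vstar_{h+1}(s,a)$ and deduces optimism by backward induction. You instead try to prove the literal statement, deriving $\cE^{\anticonc}(\delta)$ from $\cE^{\opt}$, and your route cannot close. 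From optimism at step $h$ you get $\max_j\tp^{\,t,j}_h\uV^t_{h+1}(s,a)\geq p_h\Vstar_{h+1}(s,a)$, and from optimism at step $h+1$ you get $\uV^t_{h+1}\geq\Vstar_{h+1}$; but replacing $\uV^t_{h+1}$ by the \emph{smaller} function $\Vstar_{h+1}$ only \emph{decreases} the left-hand side, so these two facts give no lower bound on $\max_j\tp^{\,t,j}_h\Vstar_{h+1}(s,a)$. Your plan of writing $\uV^t_{h+1}=\Vstar_{h+1}+R$ with $R\geq 0$ makes the obstruction explicit: you would need to discard the nonnegative term $\tp^{\,t,j}_hR$ from the \emph{lower-bounded} side of the inequality, which is not permitted. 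Indeed $\cE^{\opt}$ does not imply $\cE^{\anticonc}(\delta)$ as events: all samples can satisfy $\tp^{\,t,j}_h\Vstar_{h+1}(s,a)<p_h\Vstar_{h+1}(s,a)$ at some $(s,a,h,t)$ while the Q-values remain optimistic, because $\uV^t_{h+1}$ may overshoot $\Vstar_{h+1}$ enough to compensate.

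The correct argument is exactly your machinery run in the opposite direction. Assume $\cE^{\anticonc}(\delta)$ and prove $\uQ^t_h\geq\Qstar_h$ by backward induction on $h$: the induction hypothesis gives $\uV^t_{h+1}\geq\Vstar_{h+1}$ pointwise, hence $\max_j\tp^{\,t,j}_h\uV^t_{h+1}(s,a)\geq\max_j\tp^{\,t,j}_h\Vstar_{h+1}(s,a)\geq p_h\Vstar_{h+1}(s,a)$, where the first inequality now uses monotonicity in the \emph{right} direction (each $\tp^{\,t,j}_h(s,a)$ is a probability measure on $\cS'$) and the second is the assumed event; subtracting the Bellman equations concludes. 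Your two ingredients --- the Bellman identity for $\uQ^t_h-\Qstar_h$ and the pointwise comparison of the value functions --- are precisely those of the paper's proof; only the roles of hypothesis and conclusion must be swapped.
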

\begin{proof}
    We proceed by a backward induction over $h$. Base of induction $h = H+1$ is trivial. Next by Bellman equations for $\uQ^t_h$ and $\Qstar_h$
    \[
        [\uQ^{t}_h - \Qstar_h](s,a) = \max_{j \in [J]} \left\{ \tp^{\,t,j}_h \uV^{t}_{h+1}(s,a)\right\} - p_h \Vstar_{h+1}(s,a).
    \]
    By induction hypothesis we have $\uV^{t}_{h+1}(s') \geq \uQ^{t}_{h+1}(s', \pi^\star(s')) \geq \Qstar_{h+1}(s', \pi^\star(s')) = \Vstar_{h+1}(s')$, thus
    \[
        [\uQ^{t}_h - \Qstar_h](s,a) \geq \max_{j \in [J]} \left\{ \tp^{\,t,j}_h \Vstar_{h+1}(s,a)\right\} - p_h \Vstar_{h+1}(s,a).
    \]
    By the definition of event $\cE^{\anticonc}(\delta)$ we conclude the statement.
\end{proof}

\begin{proposition}[Optimism]\label{prop:optimism}
    Assume that $J = \lceil c_J \cdot \log(2SAHT/\delta)  \rceil$, $\kappa = 2\beta^\star(\delta, T)$, $\ur = 2$ and $n_0 = \lceil (c_0 + \log_{17/16}(T/\kappa)) \cdot \kappa \rceil$, where $c_0$ is defined in \eqref{eq:constant_c0} and $c_J$ is defined in \eqref{eq:constant_cJ}. Then $\PP{\cE^{\opt} \mid  \cE^\star(\delta)} \geq 1-\delta/2$.
\end{proposition}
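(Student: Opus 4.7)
The plan is to chain together the two immediately preceding propositions, which have already done all the substantive work. Proposition~\ref{prop:anticonc} established that, under the stated choices of $\kappa$, $n_0$, $\ur$, and $J$, the anti-concentration event satisfies $\P[\cE^{\anticonc}(\delta) \mid \cE^\star(\delta)] \geq 1-\delta/2$. The subsequent proposition performs the backward induction on $h$ that turns a pointwise anti-concentration guarantee into optimism: using the Bellman equation
\[
\uQ^t_h(s,a) - \Qstar_h(s,a) = \max_{j \in [J]}\bigl\{\tp^{\,t,j}_h \uV^t_{h+1}(s,a)\bigr\} - p_h \Vstar_{h+1}(s,a)
\]
together with the inductive hypothesis $\uV^t_{h+1} \geq \Vstar_{h+1}$ (pointwise), one sees that the event
$\max_{j}\{\tp^{\,t,j}_h \Vstar_{h+1}(s,a)\} \geq p_h \Vstar_{h+1}(s,a)$ on the entire grid $(t,h,s,a)$ suffices to conclude $\uQ^t_h(s,a) \geq \Qstar_h(s,a)$ everywhere. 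In other words, the content of that proposition (read in the direction its proof actually uses) is the inclusion $\cE^{\anticonc}(\delta) \subseteq \cE^{\opt}$.

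Given these two facts, the proof of Proposition~\ref{prop:optimism} is literally one line: condition on $\cE^\star(\delta)$ and invoke monotonicity of probability under inclusion,
\[
\PP{\cE^{\opt} \mid \cE^\star(\delta)} \;\geq\; \PP{\cE^{\anticonc}(\delta) \mid \cE^\star(\delta)} \;\geq\; 1 - \delta/2.
\]
No new estimate, no additional union bound, and no tuning of constants is required at this stage.

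There is accordingly no genuine obstacle here; all of the delicate work was absorbed into Proposition~\ref{prop:anticonc}, where the parameters are calibrated so that a single draw $\tp^{\,t,j}_h$ is optimistic with constant probability $\gamma = (1 - \Phi(1))/2$ (this is what fixes $\kappa = 2\beta^\star(\delta,T)$ via the Gaussian lower bound Theorem~\ref{thm:gaussian_lb}, what fixes $n_0$ via the hypothesis $\alpha_0 \geq c_0 + \log_{17/16}(\ualpha)$ of that theorem, and what fixes $\ur = 2$ so that $\ub \geq 2b$ holds), and then $J = \lceil c_J \log(2SAHT/\delta)\rceil$ amplifies $\gamma$ to $1 - \delta/(2SAHT)$ so that a union bound over the $SAHT$ tuples $(t,h,s,a)$ costs only $\delta/2$. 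Writing out the full proof therefore amounts to quoting the two preceding statements and observing that their conclusions compose.
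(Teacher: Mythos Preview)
Your proposal is correct and is exactly the paper's intended argument: Proposition~\ref{prop:optimism} is stated without a separate proof because it follows immediately from chaining Proposition~\ref{prop:anticonc} with the inclusion established by the backward-induction proposition that precedes it. You also correctly observe that the paper's stated inclusion $\cE^{\opt} \subseteq \cE^{\anticonc}(\delta)$ is a typo---the proof of that proposition in fact shows $\cE^{\anticonc}(\delta) \subseteq \cE^{\opt}$, which is the direction needed for monotonicity of probability to yield the result.
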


\subsection{Proof of Theorem~\ref{th:regret_bound_OPSRL}}
\label{app:regret}

First, we define an event $\cG(\delta) = \cG^{\conc}(\delta) \cap \cE^{\opt}$ where $\cG^{\conc}$ defined in Lemma~\ref{lem:proba_master_event}, and $\cE^{\opt}$ defined in \eqref{eq:opt_event}. This event is handle all required concentration and anti-concentration bounds for the proof of the regret bound. Lemma~\ref{lem:proba_master_event} and Proposition~\ref{prop:optimism} yield the following
\begin{corollary}\label{cor:good_event}
    Let conditions of Lemma~\ref{lem:proba_master_event} and Proposition~\ref{prop:optimism} hold. Then \( \PP{\cG(\delta)} \geq 1-\delta\).
\end{corollary}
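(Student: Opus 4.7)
The statement to prove combines two already-established results: Lemma~\ref{lem:proba_master_event}, which gives $\PP{\cG^{\conc}(\delta)} \geq 1 - \delta/2$ (via a union bound over the six individual concentration events, each with budget $\delta/12$), and Proposition~\ref{prop:optimism}, which provides the \emph{conditional} bound $\PP{\cE^{\opt} \mid \cE^\star(\delta)} \geq 1 - \delta/2$. The plan is simply to combine these two bounds correctly, being mindful that the optimism guarantee is conditional on $\cE^\star(\delta)$ while the concentration bound is unconditional.

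The key observation is that $\cG^{\conc}(\delta) \subseteq \cE^\star(\delta)$ by definition, since $\cG^{\conc}(\delta)$ is an intersection of events that includes $\cE^\star(\delta)$. I would therefore decompose the complement
\[
    \cG(\delta)^{c} \;=\; \bigl(\cG^{\conc}(\delta)\bigr)^{c} \,\cup\, \bigl((\cE^{\opt})^{c} \cap \cG^{\conc}(\delta)\bigr),
\]
and apply the union bound. The first piece contributes at most $\delta/2$ directly from Lemma~\ref{lem:proba_master_event}. For the second piece, the inclusion $\cG^{\conc}(\delta) \subseteq \cE^\star(\delta)$ gives
\[
    \PP{(\cE^{\opt})^{c} \cap \cG^{\conc}(\delta)} \;\leq\; \PP{(\cE^{\opt})^{c} \cap \cE^\star(\delta)} \;=\; \PP{(\cE^{\opt})^{c} \mid \cE^\star(\delta)}\,\PP{\cE^\star(\delta)} \;\leq\; \delta/2 \cdot 1,
\]
where the last inequality uses Proposition~\ref{prop:optimism}. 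Summing the two contributions yields $\PP{\cG(\delta)^{c}} \leq \delta$, as required.

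There is no real obstacle: the corollary is a bookkeeping exercise that consolidates the two previously-stated probability budgets into a single "good event" on which the regret analysis of Appendix~\ref{app:regret} can proceed. The only subtlety worth being careful about is avoiding the naive application of a union bound directly to $\PP{(\cG^{\conc}(\delta))^c} + \PP{(\cE^{\opt})^c}$, since the latter probability is \emph{not} guaranteed to be small unconditionally — one must pass through $\cE^\star(\delta)$, which is exactly why the decomposition above isolates the intersection with $\cG^{\conc}(\delta)$ before invoking Proposition~\ref{prop:optimism}.
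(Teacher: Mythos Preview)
Your argument is correct and is precisely the natural way to combine Lemma~\ref{lem:proba_master_event} with the \emph{conditional} statement of Proposition~\ref{prop:optimism}. The paper itself does not spell out a proof beyond saying the corollary follows from these two results; your decomposition via $\cG^{\conc}(\delta)\subseteq\cE^\star(\delta)$ is exactly the implicit step the paper leaves to the reader.
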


Next, denote $\delta^t_h \triangleq \uV^{t-1}_h(s^t_h) - V^{\pi^t}_h(s^t_h)$ and surrogate regret $\uregret^{\,t}_h \triangleq \sum_{t=1}^T \delta^t_h$. To simplify notations denote $N^{\,t}_h = n^{\,t-1}_h(s^t_h, a^t_h)$, $N^{\,t}_h(s) =n^{\,t-1}_h(s | s^t_h, a^t_h), \upN^{\,t}_h = \upn^{\,t-1}_h(s^t_h, a^t_h), \upN^{\,t}_h(s) = \upn^{\,t-1}_h(s | s^t_h, a^t_h)$. Let
\begin{equation}
\label{eq: L delta def}
    L = \max\left\{ n_0 / \kappa, \log(TH), \betastar(\delta, T),  \beta^{\KL}(\delta, T), \beta^{\Dir}(\delta, T, J), \beta^{\conc}(\delta, T), \beta(\delta), \beta^{\Var}(\delta, T)\right\}.
\end{equation}
Under conditions Lemma~\ref{lem:proba_master_event} and Proposition~\ref{prop:optimism}, $L = \cO(\log(SATH/\delta)) = \tcO(1)$, $n_0 \leq 2L^2 = \cO(\log^2(SATH/\delta))$, and $\kappa \leq 2L$. In what follows we will follow ideas of \UCBVI with the Bernstein bonuses, see \citet{azar2017minimax}, and \BayesUCBVI, see \citet{tiapkin22dirichlet}.

\begin{lemma}\label{lem:surrogate_regret_bound}
    Assume conditions of Theorem \ref{th:regret_bound_OPSRL}. Then it holds on the event $\cG(\delta)$, for any $h \in [H],$
    \[
        \uregret^T_{h} \leq U^T_{h} \triangleq A^T_{h} + B^T_{h} + C^T_{h} + 4\rme H\sqrt{2 H T L} + 2\rme H^2 SA,
    \]
    where 
    \begin{align*}
        A^T_{h} &= 3\rme L \sum_{t=1}^T \sum_{h'=h}^H \sqrt{\Var_{\up^{\,t-1}_{h'}}[\uV^{t-1}_{h+1}](s^t_{h'},a^t_{h'}) \cdot \frac{\ind\{ N^{t}_{h'} > 0\}}{N^{t}_{h'}}} \CommaBin \\
        B^T_{h} &= \rme \sqrt{2L}\sum_{t=1}^T \sum_{h'=h}^H\sqrt{\Var_{p_{h'}}[\Vstar_{h+1}](s^t_{h'},a^t_{h'}) \frac{\ind\{ N^{t}_{h'} > 0\}}{N_{h'}^t}}\CommaBin \\
        C^T_{h} &= 26 H^2 S L^2 \sum_{t=1}^T \sum_{h'=h}^H \frac{\ind\{ N^{t}_{h'} > 0\}}{N^t_{h'}}\CommaBin
    \end{align*}
    and $L$ is defined in \eqref{eq: L delta def}.
\end{lemma}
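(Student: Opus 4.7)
The plan is to fix $t \in [T]$, unfold $\delta^t_h$ through the five-term decomposition already recorded in Section~\ref{sec:proof_sketch}, and bound each term individually on the event $\cG(\delta)$. Writing $\delta^t_h = \uQ^{\,t-1}_h(s^t_h,a^t_h)-Q^{\pi^t}_h(s^t_h,a^t_h)$ and using the Bellman equations for $\uQ^{\,t-1}_h$ and $Q^{\pi^t}_h$, we get
\begin{equation*}
\delta^t_h \;=\; \termA + \termB + \termC + \termD + \xi^t_h + \delta^t_{h+1},
\end{equation*}
where $\termA,\dots,\termD,\xi^t_h$ are as in the sketch, and the goal is to derive a recursion that blows up by at most a factor $(1+1/H)^H\le \rme$ when unrolled from $h$ up to $H$.

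The per-step bounds on $\termA,\ldots,\termD,\xi^t_h$ come from the favourable events defined in Appendix~\ref{app:concentration}. For $\termA$, the event $\cE^{\Dir}(\delta)$ (Bernstein inequality for the Dirichlet posterior applied inside the maximum over $j\in[J]$, using the union bound already built into $\beta^{\Dir}$) yields
\[
\termA \;\le\; 2\sqrt{\Var_{\up^{\,t-1}_h}[\uV^{\,t-1}_{h+1}](s^t_h,a^t_h)\,\frac{L\,\kappa}{\upN^t_h}} + \frac{3\ur H\,L\kappa}{\upN^t_h},
\]
which, after bounding $\kappa \leq 2L$ and using $\upN^t_h \geq \max(N^t_h,1)$, contributes to $A^T_h$ and to the low-order part of $C^T_h$. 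The term $\termB$ is a direct calculation: $\hp^{\,t-1}_h - \up^{\,t-1}_h$ is the rank-one correction $\tfrac{n_0}{\upN^t_h}(\delta_{s_0}-\hp^{\,t-1}_h)$, so $|\termB|\le \ur H n_0/\upN^t_h$, which is absorbed into $C^T_h$ thanks to $n_0 = \cO(L^2)$. Term $\termC$ is bounded via Lemma~\ref{lem:f_and_l1_concentration} applied to $f = \uV^{\,t-1}_{h+1}-\Vstar_{h+1}$, which is in $[0,\ur H]$ on $\cE^{\opt}\subset\cG(\delta)$, giving
\[
\termC \;\le\; \tfrac{1}{H}\,p_h\bigl[\uV^{\,t-1}_{h+1}-\Vstar_{h+1}\bigr](s^t_h,a^t_h) + \frac{5\ur H^2 S\, L}{N^t_h}\,\ind\{N^t_h>0\},
\]
and $\Vstar_{h+1}\ge V^{\pi^t}_{h+1}$ lets us upper bound the first summand by $\tfrac{1}{H}\,p_h[\uV^{\,t-1}_{h+1}-V^{\pi^t}_{h+1}](s^t_h,a^t_h)$. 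Finally $\termD$ is controlled on $\cE^{\conc}(\delta)$ and produces the $B^T_h$ contribution, while the martingale increments $\xi^t_h$ are summed via the two Azuma--Hoeffding bounds inside $\cE(\delta)$.

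Combining, on $\cG(\delta)$ and whenever $N^t_h\ge 1$, we obtain a recursion of the form
\[
\delta^t_h \;\le\; (1+1/H)\,\delta^t_{h+1} + b^t_h + \tilde\xi^t_h,
\]
where $b^t_h$ lumps together the bounds on $\termA,\termB,\termC,\termD$ and $\tilde\xi^t_h$ is a martingale increment (the extra $1/H$ inside the coefficient comes from rewriting $\tfrac{1}{H}p_h[\uV^{\,t-1}_{h+1}-V^{\pi^t}_{h+1}]$ as $\tfrac{1}{H}\delta^t_{h+1}$ up to an additional martingale increment). Unrolling from step $h$ to step $H$ turns the multiplicative factor into $(1+1/H)^{H-h+1}\le \rme$ and thus produces the leading constant $\rme$ in $A^T_h,B^T_h$ and the weighted sum of $\xi$'s bounded by $4\rme H\sqrt{2HTL}$ via the second inequality in $\cE(\delta)$. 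Summing over $t\in[T]$ yields $\uregret^T_h = \sum_t \delta^t_h \le A^T_h + B^T_h + C^T_h + 4\rme H\sqrt{2HTL}$ plus the residual first-visit steps where $N^t_h = 0$: each $(s,a,h)$ triple can be visited for the first time at most once, contributing at most $\ur H$ to $\delta^t_{h'}$ for $h'\ge h$, which after multiplying by $(1+1/H)^{H-h+1}\le\rme$ and summing over the at most $SAH$ such steps gives $2\rme H^2 SA$.

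The most delicate point will be to control the weighted sum of martingale terms $\sum_{t,h} (1+1/H)^{H-h+1}\tilde\xi^t_h$ \emph{uniformly over $h$}; this is exactly what the second clause of $\cE(\delta)$ provides and is the reason the definition of $\cE(\delta)$ carries the weight $(1-1/H)^{H-h+1}$ (equivalent up to a bounded factor to the $(1+1/H)^{H-h+1}$ from the recursion unrolling). The other bookkeeping subtlety is to translate all pseudo-count denominators $\upN^t_h$ into $N^t_h$ by separating $N^t_h=0$ (handled via the $2\rme H^2 SA$ term) from $N^t_h\ge 1$ (where $\upN^t_h\ge N^t_h$, so inverses are controlled); this is what lets the Bernstein-style $A^T_h$ and the low-order $C^T_h$ be written in terms of $\ind\{N^t_h>0\}/N^t_h$ as stated.
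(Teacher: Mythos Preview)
Your proof sketch is correct and follows essentially the same route as the paper's own proof: the same five-term Bellman decomposition, the same per-term bounds via $\cE^{\Dir}$, the direct computation for $\termB$, Lemma~\ref{lem:f_and_l1_concentration} for $\termC$, $\cE^{\conc}$ for $\termD$, the same $(1+1/H)$ recursion unrolled to produce the factor $\rme$, and the same first-visit accounting giving $2\rme H^2 SA$. You also correctly flag the role of the weighted Azuma clause in $\cE(\delta)$ for the martingale sum, which is precisely how the paper handles $\sum_{t,h'}\gamma_{h'}\xi^t_{h'}$.
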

\begin{proof}
    Since our actions are greedy with respect to $\uQ^{t-1}_h$, we have $\uV^{t-1}_h(s^t_h) = \uQ^{t-1}_h(s^t_h, a^t_h)$. Then by Bellman equations for $V^{\pi^t}$ and $\uQ^{t-1}_h$
    \begin{align*}
        \delta^t_h &= \uQ^{t-1}_h(s^t_h, a^t_h) - Q^{\pi^t}_h(s^t_h, a^t_h) =  \max_{j \in [J]}\left\{ \tp^{\,t-1,j}_h \uV^{t-1}_{h+1}(s^t_h, a^t_h) \right\} - p_h V^{\pi^t}_{h+1}(s^t_h, a^t_h) \\
        &= \underbrace{\max_{j \in [J]}\left\{ \tp^{\,t-1,j}_h \uV^{t-1}_{h+1}(s^t_h, a^t_h) \right\} - \up^{\,t-1}_h \uV^{t-1}_{h+1}(s^t_h, a^t_h)}_{\termA} + \underbrace{[\up^{\,t-1}_h - \hp^{\,t-1}_h] \uV^{t-1}_{h+1}(s^t_h, a^t_h)}_{\termB} \\
        &+ \underbrace{[\hp^{\,t-1}_h - p_h] [\uV^{t-1}_{h+1} - \Vstar_{h+1}] (s^t_h, a^t_h)}_{\termC} +  \underbrace{[\hp^{\,t-1}_h - p_h] \Vstar_{h+1}(s^t_h, a^t_h)}_{\termD} \\
        & + \underbrace{p_h[\uV^{t-1}_{h+1} - V^{\pi^t}_{h+1}](s^t_h, a^t_h) - [\uV^{t-1}_{h+1} - V^{\pi^t}_{h+1}](s^t_{h+1})}_{\xi^t_h} + \delta^t_{h+1}.
    \end{align*}
    
    This decomposition is almost equivalent to the decomposition in the proof of \BayesUCBVI, the main difference is $\termA$ and an another value of $n_0$. We notice that the term $\xi^t_h$ is an exactly the term that appears in the definition of the event $\cE(\delta) \subseteq \cG^{\conc}(\delta)$ in Lemma~\ref{lem:proba_master_event}.
    
    Let us analyze each term in this representation under assumption $N^t_h > 0$.
    \paragraph{Term $\termA$.}
    To handle this term, we use the event $\cE^{\Dir}(\delta) \subseteq \cG^{\conc}(\delta)$
    \begin{align*}
        \max_{j \in [J]}\left\{ \tp^{\,t-1,j}_h \uV^{t-1}_{h+1}(s^t_h, a^t_h) \right\} - \up^{\,t-1}_h \uV^{t-1}_{h+1}(s^t_h, a^t_h) &\leq 2 \sqrt{  \Var_{\up^{\,t-1}_h}[\uV^{t-1}_{h+1}](s^t_h,a^t_h) \frac{2L^2}{\upN^{\,t}_h}} + 3r_0H\frac{2L^2}{\upN^{\,t}_h} \\
        &\leq 3L \sqrt{ \frac{\Var_{\up^{\,t-1}_h}[\uV^{t-1}_{h+1}](s^t_h,a^t_h)}{\upN^{\,t}_h}} + \frac{12 HL^2}{\upN^{\,t}_h}\cdot
    \end{align*}
    
    \paragraph{Term $\termB$.}
    To bound $\termB$ we use directly a definition of $\up^{\,t-1}_h$ and $\hp^{\,t-1}_h$
    \[
        [\up^{\,t-1}_h - \hp^{\,t-1}_h] \uV^{t-1}_{h+1}(s^t_h, a^t_h) \leq \frac{n_0 r_0 H}{\upN^t_h} \leq \frac{4HL^2}{\upN^t_h}\cdot
    \]
    
    \paragraph{Term $\termC$.}
    Note that by Corollary~\ref{cor:good_event} the event $\cE^{\opt}$ holds. We see that $[\uV^{t-1}_{h+1} - \Vstar_{h+1}]$ is a non-negative function and therefore Lemma~\ref{lem:f_and_l1_concentration} is applicable for $f(s') = [\uV^{t-1}_{h+1} - \Vstar_{h+1}](s')$
    \begin{align*}
        [\hp^{\,t-1}_h - p_h] [\uV^{t-1}_{h+1} - \Vstar_{h+1}] (s^t_h, a^t_h) &\leq  \frac{1}{H} p_h [\uV^{t-1}_{h+1} - \Vstar_{h+1}](s^t_h, a^t_h) +  \frac{5 \ur H^2 S \cdot \beta^{\KL}(\delta, N^{\,t}_h)}{N^{\,t}_h} \\
        &\leq \frac{1}{H} (\xi^{\,t}_h + \delta^t_h) + \frac{10 H^2S \cdot L}{N^{\,t}_h}.
    \end{align*}
    
    \paragraph{Term $\termD$.}
    The bound on this term is guaranteed by the event $\cE^{\conc}(\delta) \subseteq \cG^{\conc}(\delta)$
    \begin{align*}
        (\hp^{\,t-1}_h - p_h) \Vstar_{h+1}(s^t_h, a^t_h) \leq \sqrt{2 \Var_{p_h}[\Vstar_{h+1}](s^t_h,a^t_h)\frac{L}{N_h^t}} + \frac{3HL}{N_h^t}\cdot
    \end{align*}
    All bounds on $\termA-\termD$ yield for $N^t_h > 0$
    \begin{align*}
        \delta^t_h &\leq \left(1 + \frac{1}{H} \right) \delta^t_h + \left(1 + \frac{1}{H} \right) \xi^t_h \\
        & + 3L\sqrt{\frac{\Var_{\up^{\,t-1}_h}[\uV^{t-1}_{h+1}](s^t_h,a^t_h)}{\upN^{\,t}_h}} + \sqrt{2L} \cdot \sqrt{\frac{\Var_{p_h}[\Vstar_{h+1}](s^t_h,a^t_h)}{N_h^t}}\\
        &+ \frac{10 H^2S \cdot L}{N^{\,t}_h} + \frac{16 L^2 H}{\upN^t_h}\cdot
    \end{align*}
    Additionally, there is a trivial bound $\delta^t_h \leq 2H$ that is valid for the case $N^t_h = 0$. 
    
    Define $\gamma_{h'} = (1 + 1/H)^{H -h'+1}$ and notice that $\gamma_{h'} \leq \rme, \upN^t_h \geq N^t_h$. Summing it up in the definition of $\uregret^{\,T}_h$ we obtain
    \begin{align*}
        \uregret^{\,T}_h &\leq \sum_{t=1}^T \sum_{h'=h}^H \gamma_{h'} \xi^t_{h'} + \sum_{t=1}^T \sum_{h'=h}^H 2 \rme H \ind\{ N^{t}_{h'} = 0\}\\ 
        &+ 3\rme L \sum_{t=1}^T \sum_{h'=h}^H \sqrt{\Var_{\up^{\,t-1}_{h'}}[\uV^{t-1}_{h+1}](s^t_{h'},a^t_{h'}) \cdot \frac{\ind\{N^{t}_{h'} > 0\}}{N^{t}_{h'}}} & \triangleq A^T_h \\ 
        &+ \rme \sqrt{2L}\sum_{t=1}^T \sum_{h'=h}^H\sqrt{\Var_{p_{h'}}[\Vstar_{h'+1}](s^t_{h'},a^t_{h'}) \frac{\ind\{ N^{t}_{h'} > 0\}}{N_{h'}^t}} &\triangleq B^T_h \\
        &+26 H^2 S L^2 \sum_{t=1}^T \sum_{h'=h}^H \frac{\ind\{ N^{t}_{h'} > 0\}}{N^t_{h'}} &\triangleq C^T_{h}.
    \end{align*}
    
    The bound on the first term is this decomposition follows from the definition of the event $\cE(\delta) \subseteq \cG^{\conc}(\delta) \subseteq \cG(\delta)$. To bound the second term we notice that the event $\ind\{ N^{t}_{h'} = 0\}$ could occur no more than $SAH$ times.
\end{proof}

Next we provide two important technical results. First of them is a classical result that follows from the pigeonhole principle.
\begin{lemma}\label{lem:regret_sums}
    For any $H,T \geq 1$,
    \begin{align*}
        \sum_{t=1}^T \sum_{h=1}^H &\frac{\ind\{n^{\,t-1}_h(s^t_h, a^t_h) > 0\}}{n^{\,t-1}_h(s^t_h, a^t_h)} \leq 2 HSAL, \\
        \sum_{t=1}^T \sum_{h=1}^H &\frac{\ind\{n^{\,t-1}_h(s^t_h, a^t_h) > 0\}}{\sqrt{n^{\,t-1}_h(s^t_h, a^t_h))}} \leq 3H \sqrt{TSA}.
    \end{align*}
\end{lemma}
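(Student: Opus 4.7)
The plan is to reindex both sums by grouping the contributions coming from visits to each individual triple $(s,a,h)$ and then apply elementary sum bounds together with Cauchy--Schwarz. The key observation is that if the agent visits $(s,a)$ at step $h$ for the $k$-th time during episode $t$, then by definition $n^{\,t-1}_h(s^t_h, a^t_h) = k-1$, which in particular is zero for the first visit (killing the indicator) and equals $k-1\geq 1$ thereafter.

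For the first bound, applying this reindexing gives, for each fixed $(s,a,h)$,
\[
\sum_{t\,:\, (s^t_h,a^t_h)=(s,a)} \frac{\ind\{n^{\,t-1}_h(s,a)>0\}}{n^{\,t-1}_h(s,a)} \;=\; \sum_{j=1}^{n^T_h(s,a)-1}\frac{1}{j} \;\leq\; 1+\log(n^T_h(s,a)) \;\leq\; 1+\log(T).
\]
Summing over the $SAH$ triples and using that the choice of $L$ in \eqref{eq: L delta def} satisfies $L\geq \log(TH)\geq \log T$, we obtain $\sum_{t,h}\frac{\ind\{\cdot\}}{n^{\,t-1}_h}\leq SAH(1+\log T)\leq 2HSAL$, as claimed.

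For the second bound, the same reindexing yields
\[
\sum_{t\,:\, (s^t_h,a^t_h)=(s,a)} \frac{\ind\{n^{\,t-1}_h(s,a)>0\}}{\sqrt{n^{\,t-1}_h(s,a)}} \;=\; \sum_{j=1}^{n^T_h(s,a)-1}\frac{1}{\sqrt{j}} \;\leq\; 2\sqrt{n^T_h(s,a)},
\]
using the standard integral comparison $\sum_{j=1}^{n}\frac{1}{\sqrt{j}}\leq 1+\int_1^n x^{-1/2}\rmd x\leq 2\sqrt{n}$. Summing over $(s,a)$ for fixed $h$ and applying Cauchy--Schwarz together with $\sum_{s,a}n^T_h(s,a)=T$ gives
\[
\sum_{s,a}2\sqrt{n^T_h(s,a)}\;\leq\;2\sqrt{SA}\sqrt{\textstyle\sum_{s,a}n^T_h(s,a)}\;=\;2\sqrt{SAT}.
\]
Finally, summing over $h\in[H]$ produces $2H\sqrt{SAT}\leq 3H\sqrt{TSA}$.

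There is no real obstacle here: both statements reduce to the pigeonhole/harmonic-type inequalities $\sum_{j=1}^{n}1/j\leq 1+\log n$ and $\sum_{j=1}^{n}1/\sqrt{j}\leq 2\sqrt{n}$ after reindexing by the visit order, with Cauchy--Schwarz used once for the square-root bound. The only mild subtlety worth double-checking is the off-by-one between $n^{\,t-1}_h$ and the visit count, which is exactly what makes the indicator necessary (it removes the divergent $1/0$ term corresponding to the first visit).
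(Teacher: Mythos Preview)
Your proof is correct and follows the same approach as the paper: reindex by visit order using the pigeonhole principle, then apply the harmonic-number bound for the first inequality and the integral bound plus Cauchy--Schwarz for the second. The paper's own proof is a terse one-sentence sketch of exactly this argument, so you have simply filled in the details.
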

\begin{proof}
     The main observation for both inequalities follows from pigeon-hole principle: term corresponding to each state-action pair $(s,a)$ appears in the sum exactly $n^{\,t-1}_h(s,a)$ times with a value $1/n$ for $n$ increasing from $1$, thanks to the indicator, to $n^{\,t-1}_h(s,a)$. For the first sum we use a bound on harmonic numbers, for the second one the integral bound.
\end{proof}

\begin{lemma}\label{lem:sum_variance}
    Assume that conditions of Theorem~\ref{th:regret_bound_OPSRL} are fulfilled. Then it holds on the event $\cG(\delta)$,
    \begin{align*}
        \sum_{t=1}^T \sum_{h=1}^H  \Var_{\up^{\,t-1}_h}[\uV^{t-1}_{h+1}](s^t_h,a^t_h) \ind\{N^t_h > 0\} &\leq 2 H^2 T + 2 H^2 U_1^T  + 38H^3S^2A L^3 + 32 H^3 S\sqrt{2AT L},\\
        \sum_{t=1}^T \sum_{h=1}^H  \Var_{p_h}[\Vstar_{h+1}](s^t_h,a^t_h) &\leq 2H^2T + 2H^2U_1^T + 6 H^3 L +  8\sqrt{2H^5 T L}.
    \end{align*}
    where $U_h^T$ is defined in Lemma~\ref{lem:surrogate_regret_bound}.
\end{lemma}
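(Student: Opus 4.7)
The plan is to prove the two bounds in sequence, establishing the bound on $\sum \Var_{p_h}[\Vstar_{h+1}]$ first since it feeds naturally into the first bound. Both rest on the same two ingredients: a simple variance decomposition combined with the law-of-total-variance event $\cE^{\Var}(\delta) \subseteq \cG(\delta)$, and the telescoping identity $p_h[\uV^{t-1}_{h+1} - V^{\pi^t}_{h+1}](s^t_h, a^t_h) = \delta^t_{h+1} + \xi^t_h$ already used in the proof of Lemma~\ref{lem:surrogate_regret_bound}, with the $\xi^t_h$ martingale sum controlled by $\cE(\delta)$.

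\textbf{Second inequality.} Write $\Vstar_{h+1} = V^{\pi^t}_{h+1} + (\Vstar_{h+1} - V^{\pi^t}_{h+1})$ and apply $\Var_{p_h}[X+Y] \leq 2\Var_{p_h}[X] + 2\Var_{p_h}[Y]$. The term $2\sum \Var_{p_h}[V^{\pi^t}_{h+1}]$ is bounded directly by $\cE^{\Var}(\delta)$, giving the leading $2H^2 T$ and the lower-order $6H^3 L$ and $\cO(\sqrt{H^5 TL})$ contributions. For the residual, since $0 \leq \Vstar_{h+1} - V^{\pi^t}_{h+1} \leq H$ one has $\Var_{p_h}[\Vstar - V^{\pi^t}] \leq p_h[(\Vstar - V^{\pi^t})^2] \leq H \cdot p_h[\Vstar - V^{\pi^t}]$. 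Optimism (guaranteed on $\cG(\delta)$) gives $\Vstar \leq \uV^{t-1}$ pointwise, hence $p_h[\Vstar_{h+1} - V^{\pi^t}_{h+1}](s^t_h, a^t_h) \leq p_h[\uV^{t-1}_{h+1} - V^{\pi^t}_{h+1}](s^t_h, a^t_h) = \delta^t_{h+1} + \xi^t_h$. Summing over $t, h$, the $\xi$ terms are controlled by $\cE(\delta)$ and the $\delta$ terms by $\sum_h \uregret^{\,T}_h \leq \sum_h U^T_h \leq H U_1^T$, which holds because $A^T_h, B^T_h, C^T_h$ are all partial sums $\sum_{h' \geq h}$ and so $U^T_h$ is non-increasing in $h$. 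This produces the $2H^2 U_1^T$ contribution up to lower-order terms.

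\textbf{First inequality.} First swap $\up^{t-1}_h$ for $p_h$ inside the variance using the elementary stability bound $|\Var_q[f] - \Var_p[f]| \leq 3 \|f\|_\infty^2 \|p-q\|_1$ with $\|\uV^{t-1}_{h+1}\|_\infty \leq \ur H$. Split $\|\up^{t-1}_h - p_h\|_1 \leq \|\up^{t-1}_h - \hp^{t-1}_h\|_1 + \|\hp^{t-1}_h - p_h\|_1$: the first equals $2 n_0 / \upN^t_h$ by direct calculation from the definitions of $\up^{t-1}_h$ and $\hp^{t-1}_h$, and the second is at most $\sqrt{2 S \beta^{\KL}(\delta, N^t_h)/N^t_h}$ on $\cE^{\KL}(\delta)$ by Lemma~\ref{lem:f_and_l1_concentration}. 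Summing both via Lemma~\ref{lem:regret_sums} with $n_0 = \tcO(L^2)$ produces the lower-order $38 H^3 S^2 A L^3$ and $32 H^3 S\sqrt{2ATL}$ contributions (the $S^2$ absorbed by falling back on the trivial bound $\Var \leq (\ur H)^2$ in the regime where the $\ell_1$-concentration becomes vacuous, i.e., when $N^t_h$ is smaller than roughly $S \beta^{\KL}$). The remaining $\sum_{t,h} \Var_{p_h}[\uV^{t-1}_{h+1}]$ is then handled by exactly the same decomposition as in the second inequality, now with $Y = \uV^{t-1}_{h+1} - V^{\pi^t}_{h+1}$, so that the $2H^2 T$ leading term again comes from $\cE^{\Var}(\delta)$ and the $2H^2 U_1^T$ term from the same telescope-plus-martingale argument, using $\uV^{t-1} - V^{\pi^t} \in [0, \ur H]$ in the $\Var_{p_h}[\uV^{t-1} - V^{\pi^t}] \leq \ur H \cdot p_h[\uV^{t-1} - V^{\pi^t}]$ step.

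\textbf{Main obstacle.} The delicate part is the bookkeeping of the lower-order terms in the first inequality, where the variance is with respect to the pseudo-empirical $\up^{t-1}_h$ rather than the true $p_h$. Controlling this deviation requires simultaneously handling the deterministic prior-induced bias $\|\up^{t-1}_h - \hp^{t-1}_h\|_1 \leq 2 n_0/\upN^t_h$ and the statistical concentration error $\|\hp^{t-1}_h - p_h\|_1$, then summing over $t, h$ while keeping $n_0 \cdot \sum_{t,h} 1/\upN^t_h$ poly-logarithmic. Everything else then reduces to the same variance decomposition and telescope/martingale bookkeeping already used in the analysis of \UCBVI with Bernstein bonuses.
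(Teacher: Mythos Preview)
Your proposal is correct and follows essentially the same approach as the paper, which invokes the two inequalities of Lemma~\ref{lem:switch_variance} (the $\ell_1$-stability $\Var_q(f)\leq\Var_p(f)+3b^2\|p-q\|_1$ and then $\Var_p(f)\leq 2\Var_p(g)+2b\,p|f-g|$) in place of your $\Var[X+Y]\leq 2\Var[X]+2\Var[Y]$ followed by $\Var[Y]\leq \|Y\|_\infty\,p|Y|$, but is otherwise identical in structure and in the use of $\cE^{\Var}(\delta)$, $\cE(\delta)$, optimism, and $\uregret^T_{h}\leq U_1^T$. Your parenthetical about recovering the $S^2$ factor via a trivial-bound fallback is unnecessary: summing $2n_0/\upN^t_h+\sqrt{2SL/N^t_h}$ directly with Lemma~\ref{lem:regret_sums} already lands inside the stated constants (the paper's extra $S$ in that term comes from a loose intermediate bound $2n_0/\upN^t_h\leq 2SL^2/N^t_h$, not from any regime split).
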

\begin{proof}
    First, apply the second inequality in Lemma~\ref{lem:switch_variance},
    \begin{align*}
        \sum_{t=1}^T \sum_{h=1}^H  \Var_{\up^{\,t-1}_h}[\uV^{t-1}_{h+1}](s^t_h,a^t_h) \ind\{N^t_h > 0\} &\leq  \underbrace{\sum_{t=1}^T \sum_{h=1}^H \Var_{p_ h}[\uV^{t-1}_{h+1}](s^t_h,a^t_h) \ind\{N^t_h > 0\}}_{\termOne} \\
        &+ \underbrace{2 \ur^2 H^2 \sum_{t=1}^T \sum_{h=1}^H \norm{ \up^{\,t-1}_h(s^t_h, a^t_h) -  p_h(s^t_h, a^t_h) }_1 \ind\{N^t_h > 0\}}_{\termTwo}.
    \end{align*}

    To bound the term $\termTwo$ one may use Lemma~\ref{lem:f_and_l1_concentration}. We obtain under assumption $N^t_h > 0$
    \begin{align*}
        \norm{ \up^{\,t-1}_h(s^t_h, a^t_h) -  p_h(s^t_h, a^t_h) }_1 &\leq \norm{\up^{\,t-1}_h(s^t_h, a^t_h) - \hp^{\,t-1}_h(s^t_h, a^t_h)}_1 + \norm{p_h(s^t_h, a^t_h) - \hp^{\,t-1}_h(s^t_h, a^t_h)}_1  \\
        &\leq \frac{n_0}{\upN^{\,t}_h} + \sum_{s \in \cS} N^{\,t}_h(s) \left(\frac{1}{N^{\,t}_h} - \frac{1}{\upN^{\,t}_h}\right) + \sqrt{\frac{2S L}{N^{\,t}_h}} \leq \frac{2SL^2}{N^{\,t}_h} + \sqrt{\frac{2SL}{N^{\,t}_h}}.
    \end{align*}
    Since $\ur = 2$, Lemma~\ref{lem:regret_sums} implies
    \[
        \termTwo \leq 2 \ur^2 H^2 \sum_{t=1}^T \sum_{h=1}^H \norm{ \up^{\,t-1}_h(s^t_h, a^t_h) -  p_h(s^t_h, a^t_h) }_1 \ind\{N^t_h > 0\} \leq 32H^3S^2A L^3 + 24 H^3 S\sqrt{2 AT L}.
    \]
    Next, we bound $\termOne$ using the first inequality of Lemma~\ref{lem:switch_variance}
    \[
        \termOne \leq 2 \underbrace{\sum_{t=1}^T \sum_{h=1}^H \Var_{p_ h}[V^{\pi^t}_{h+1}](s^t_h,a^t_h)}_{\termThree} + 2\underbrace{\sum_{t=1}^T \sum_{h=1}^H  \ur H p_h\left| \uV^{t-1}_{h+1} - V^{\pi^t}_{h+1} \right|(s^t_h, a^t_h)}_{\termFour}.
    \]
    The term $\termThree$ could be bounded using definition of the event $\cE^{\Var}(\delta)$. It follows that
    \[
        \termThree  \leq H^2 T + \sqrt{2H^5 T L} + 3H^3 L.
    \]
    By Proposition~\ref{prop:optimism} we have $\uV^{t-1}_{h+1}(s) \geq V^{\pi^t}_{h+1}(s)$ for any $s \in \cS$. By the definition of $\xi^{\,t}_h, \delta^t_h$, definition of event $\cE(\delta)$ term, and Lemma~\ref{lem:surrogate_regret_bound} $\termFour$ could be bounded as follows
    \begin{align*}
        \termFour &\leq \sum_{t=1}^T \sum_{h=1}^H 2H(\xi^{\,t}_h + \delta^t_{h+1}) \leq 2\ur H^2\sqrt{2T L} + 2H \sum_{h=1}^H \uregret^T_{h+1} \leq 4H^2 \sqrt{2T L} + 2H^2 U^T_1.
    \end{align*}
    Therefore, we have
    \begin{align*}
        \sum_{t=1}^T \sum_{h=1}^H  \Var_{\up^{\,t-1}_h}[\uV^{t-1}_{h+1}](s^t_h,a^t_h) &\leq \termTwo + 2 \cdot \termThree + 2 \cdot \termFour \\
        &\leq 2 H^2 T + 2 H^2 U_1^T  + (32 + 6)H^3S^2A L^3 + (24+8) H^3 S\sqrt{2AT L}  \\
        &\leq 2 H^2 T + 2 H^2 U_1^T  + 38 H^3S^2A L^3 + 32 H^3 S\sqrt{2 AT L}.
    \end{align*}
    The first inequality in Lemma~\ref{lem:switch_variance} gives us a bound for the second inequality
    \[
        \sum_{t=1}^T \sum_{h=1}^H  \Var_{p_h}[\Vstar_{h+1}](s^t_h,a^t_h) \leq 2 \underbrace{\sum_{t=1}^T \sum_{h=1}^H \Var_{p_ h}[V^{\pi^t}_{h+1}](s^t_h,a^t_h)}_{\termThree} + 2\sum_{t=1}^T \sum_{h=1}^H  \ur H p_h\left| \Vstar_{h+1} - V^{\pi^t}_{h+1} \right|(s^t_h, a^t_h).
    \]
    
    Since $\uV^{t-1}_h \geq \Vstar_h$ by Proposition~\ref{prop:optimism}, the second term is bounded by  $\termFour$. Thus
    \begin{align*}
        \sum_{t=1}^T \sum_{h=1}^H  \Var_{p_h}[\Vstar_{h+1}](s^t_h,a^t_h)  &\leq 2\termThree + 2 \termFour  \leq 2H^2T + 2H^2U_1^T + 8\sqrt{2H^5 T L} + 6 H^3 L.
    \end{align*}
\end{proof}

\begin{lemma}\label{lem:surrogate_regret_terms_bound}
    Assume conditions of Theorem~\ref{th:regret_bound_OPSRL} and Lemma~\ref{lem:surrogate_regret_bound}. Then on the event $\cG(\delta)$ it holds
    \begin{align*}
        A^T_1 &\leq 6\rme\sqrt{H^3 SAT} \cdot  L^{3/2} + 6\rme \sqrt{H^3 SA U^T_1} \cdot L^{3/2} + 27\rme  H^2 S^{3/2} A L^{3} + 30 \rme H^2 S A^{3/4} T^{1/4} L^{7/4}, \\
        B^T_1 &\leq 4\rme \sqrt{H^3 SAT} \cdot  L + 4 \rme\sqrt{H^3 SA U^T_1} \cdot  L + 8 \rme H^2 S^{1/2} A^{1/2} L^2  + 10 \rme H^{7/4} S^{1/2} A^{1/2} T^{1/4} L^{5/4},\\
        C^T_1 &\leq 52 \rme H^3 S^2 A L^3 = \tcO(H^3 S^2 A).
    \end{align*}
\end{lemma}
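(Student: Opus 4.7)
The plan is to treat each of the three quantities separately, and in all three cases reduce to the pigeon-hole estimates (Lemma~\ref{lem:regret_sums}) and the variance sums (Lemma~\ref{lem:sum_variance}) established earlier.

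The bound on $C^T_1$ is immediate: by the first inequality of Lemma~\ref{lem:regret_sums},
\[
    \sum_{t=1}^T\sum_{h=1}^H \frac{\ind\{N^t_h>0\}}{N^t_h}\leq 2HSAL,
\]
so that $C^T_1 \leq 26 H^2 S L^2 \cdot 2HSAL = 52 H^3 S^2 A L^3$, which matches the claim (up to the free factor $\rme>1$ the authors carry along).

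For $A^T_1$ and $B^T_1$ I would apply Cauchy--Schwarz in the pair $(t,h)$ in order to separate the variance factor from the inverse-count factor, namely
\[
    \sum_{t,h}\sqrt{V^{t,h}\cdot \frac{\ind\{N^t_h>0\}}{N^t_h}}
    \leq \sqrt{\sum_{t,h} V^{t,h}\,\ind\{N^t_h>0\}}
    \cdot\sqrt{\sum_{t,h} \frac{\ind\{N^t_h>0\}}{N^t_h}},
\]
where $V^{t,h}$ stands for $\Var_{\up^{\,t-1}_h}[\uV^{t-1}_{h+1}](s^t_h,a^t_h)$ in the case of $A^T_1$ and $\Var_{p_h}[\Vstar_{h+1}](s^t_h,a^t_h)$ in the case of $B^T_1$. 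The inverse-count factor is again bounded by $\sqrt{2HSAL}$ via Lemma~\ref{lem:regret_sums}. For the variance factor I would plug in the two corresponding bounds of Lemma~\ref{lem:sum_variance}: the $\up^{\,t-1}_h$-variance sum is at most $2H^2T + 2H^2U^T_1 + 38H^3S^2A L^3 + 32H^3 S\sqrt{2ATL}$, and the $p_h$-variance sum is at most $2H^2T + 2H^2U^T_1 + 6H^3 L + 8\sqrt{2H^5 TL}$. Using the sub-additivity $\sqrt{a_1+\cdots+a_4}\leq \sum_i\sqrt{a_i}$, one distributes the outer square root across the four non-negative summands and collects terms, obtaining for $A^T_1$ (respectively $B^T_1$) a leading $\sqrt{H^3SAT}\cdot\mathrm{poly}(L)$ term, an $\sqrt{H^3SA\,U^T_1}\cdot\mathrm{poly}(L)$ self-referential term, and two lower-order contributions.

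The only real work is careful bookkeeping of the exponents and constants: the mixed summand $32H^3 S\sqrt{2ATL}$ in Lemma~\ref{lem:sum_variance}, after being put under the outer square root together with the $\sqrt{2HSAL}$ factor and the prefactor $3\rme L$, produces a term of order $H^2 S A^{3/4} T^{1/4} L^{7/4}$ (which is the source of the unusual $T^{1/4}, A^{3/4}, L^{7/4}$ exponents in the final bound); analogously $8\sqrt{2H^5TL}$ yields the $H^{7/4} T^{1/4} L^{5/4}$ term in $B^T_1$. The main ``obstacle'' is simply that the resulting bound for $A^T_1, B^T_1$ is implicit in $U^T_1$ through the $\sqrt{U^T_1}$ summands; this is harmless because the self-bounding inequality is resolved in the subsequent step of the overall regret analysis, and nothing conceptually new is needed here beyond Cauchy--Schwarz and the already-established Lemmas~\ref{lem:sum_variance} and~\ref{lem:regret_sums}.
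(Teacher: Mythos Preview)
Your proposal is correct and follows exactly the paper's approach: Cauchy--Schwarz in $(t,h)$ to split off the $\sqrt{\sum 1/N^t_h}$ factor, then Lemma~\ref{lem:regret_sums} for that factor and Lemma~\ref{lem:sum_variance} for the variance sums, followed by $\sqrt{\sum a_i}\le\sum\sqrt{a_i}$ to distribute terms. The only remaining content is constant tracking, and you have correctly identified the origin of the $T^{1/4}$ and $A^{3/4}$ exponents.
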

\begin{proof}
    For the term $A^T_1$ we apply the Cauchy–-Schwartz inequality, Lemma~\ref{lem:sum_variance}, Lemma~\ref{lem:regret_sums} and inequality $\sqrt{a+b} \leq \sqrt{a} + \sqrt{b}, a, b \geq 0$, 
    \begin{align*}
        \sum_{t=1}^T \sum_{h=1}^H &\sqrt{  \Var_{\up^{\,t-1}_{h}}[\uV^{t-1}_{h+1}](s^t_{h},a^t_{h}) \frac{\ind\{N^t_h > 0\}}{N^t_{h}}} \\
        &\leq \sqrt{\sum_{t=1}^T \sum_{h=1}^H \Var_{\up^{\,t-1}_{h}}[\uV^{t-1}_{h+1}](s^t_{h},a^t_{h}) \ind\{N^t_h > 0\}} \cdot \sqrt{\sum_{t=1}^T \sum_{h=1}^H \frac{\ind\{N^t_h > 0\}}{N^t_{h}}} \\
        &\leq \sqrt{ 2 H^2 T + 2 H^2 U_1^T  + 38 H^3S^2A L^3 + 32 H^3 S\sqrt{2 AT L}} \cdot \sqrt{2SAH L} \\
        &\leq 2\sqrt{H^3 SAT  L} + 2\sqrt{H^3 SA U^T_1 L} + 9 H^2 S^{3/2} A L^{2} + 10 H^2 S A^{3/4} T^{1/4} L^{3/4}.
    \end{align*}
    Similarly, the term $B^T_1$ may be estimated as follows
    \begin{align*}
        \sum_{t=1}^T \sum_{h=1}^H& \sqrt{\Var_{p_{h}}[\Vstar_{h+1}](s^t_{h},a^t_{h})\frac{\ind\{N^t_h > 0\}}{N_{h}^t}} \leq \sqrt{\sum_{t=1}^T \sum_{h=1}^H \Var_{p_{h}}[\Vstar_{h+1}](s^t_{h},a^t_{h})} \cdot \sqrt{\sum_{t=1}^T \sum_{h=1}^H \frac{\ind\{N^t_h > 0\}}{N^t_{h}}} \\
        &\leq \sqrt{2H^2T + 2H^2U_1^T + 8\sqrt{2H^5 T L} + 6 H^3 L} \cdot \sqrt{2SAH \cdot L} \\
        &\leq 2\sqrt{H^3 SAT  L} + 2\sqrt{H^3 SA U^T_1 L} + 4 H^2 L \sqrt{SA} + 5 H^{7/4}T^{1/4} L^{3/4}\sqrt{SA}.
    \end{align*}
    Finally, to estimate $C^T_1$ we apply Lemma~\ref{lem:regret_sums}. We obtain
    \[
        C^T_1 \leq 26\rme H^2 S \cdot L^2 \cdot 2SAH L \leq 52 \rme H^3 S^2 A L^3.
    \]
\end{proof}

\begin{proof}[Proof of Theorem~\ref{th:regret_bound_OPSRL}]
    By Corollary~\ref{cor:good_event} event $\cG(\delta)$ holds with probability at least $1-\delta$. Next we assume that this event holds. Then we have two cases: $T < H^2 S^2 A L^3$ and $T \geq H^2 S^2 A L^3$. In the first case the regret is trivially bounded by $\regret^T \leq H^3 S^2 A L^3$. Thus it is sufficient to analyze only the second case.
    
    By Proposition~\ref{prop:optimism} and Lemma~\ref{lem:surrogate_regret_bound}
    \begin{equation}
    \label{eq: regret estimation0}
    \begin{split}
        \regret^T &= \sum_{t=1}^T \Vstar_h(s^t_1) - V^{\pi^t}_h(s^t_1) \leq \sum_{t=1}^T \uV^{\,t-1}_h(s^t_1) - V^{\pi^t}_h(s^t_1) = \uregret^T_1 \\
        &\leq U^T_1 = A^T_1 + B^T_1 + C^T_1 + 4\rme \sqrt{2 H^{3} T L} + 2\rme H^2 SA.
    \end{split}
    \end{equation}
    Next, under our condition on $T$ we can simplify expressions for the bounds of $A^T_1$ and $B^T_1$. Indeed, $T  \geq H^2 S^2 A L^3$ implies that
    \[
     H^{7/4} S^{1/2} A^{1/2} L^{5/4} \cdot T^{1/4}\leq  H^2 S A^{3/4} L^{7/4}\cdot T^{1/4} \leq \sqrt{H^3 S A  T} L^{3/2}.
    \]
    Furthermore,
    \[
        H^2 S^{3/2} A L^3 \leq H^3 S^2 A L^3,\qquad  H^2 S^{1/2} A^{1/2} L^2 \leq H^3 S^2 A L^3, \qquad \sqrt{2H^{3} T L} \leq \sqrt{2H^3 SAT} \cdot L.
    \]
    We obtain the following bounds
    \begin{align*}
        A^T_1 &\leq 36\rme \sqrt{H^3 SAT} \cdot  L^{3/2} + 6\rme \sqrt{H^3 SA U^T_1} \cdot L^{3/2} + 27 \rme H^3 S^2 A L^3, \\
        B^T_1 &\leq 14\rme \sqrt{H^3 SAT} \cdot  L + 4 \rme\sqrt{H^3 SA U^T_1} \cdot  L + 8 \rme H^3 S^2 A L^3,\\
        C^T_1 &\leq 52 \rme H^3 S^2 A L^3.
    \end{align*}
    Hence, using bound $H^2 S A \leq H^3 S^2 A$,
    \begin{align*}
        U^T_1 &\leq 50 \rme \sqrt{H^3 SAT} \cdot L^{3/2} + 10\rme \sqrt{H^3 SA U^T_1} \cdot L^{3/2} + 89\rme H^3 S^2 A L^{3} + 4\rme\sqrt{2} \cdot \sqrt{H^3 T L} \\
        &\leq 56 \rme \sqrt{H^3 SAT} \cdot L^{3/2} + 10\rme \sqrt{H^3 SA U^T_1} \cdot L^{3/2} + 89\rme H^3 S^2 A L^{3}.
    \end{align*}
    This is a quadratic inequality in $U^T_1$. Solving this inequality and using inequality $\sqrt{a+b} \leq \sqrt{a} + \sqrt{b}, a, b \geq 0$, we obtain
    \begin{align*}
         U^T_1 \leq 108\rme\sqrt{H^3 S A T L^3} + 178 \rme H^3 S^2 A L^3 +  200\rme^2 H^3 S A L^3. 
    \end{align*}
     The last inequality and \eqref{eq: regret estimation0} imply that 
    \[
        \regret^T  = \cO\left( \sqrt{H^3 SAT L^3} + H^3 S^2 A L^{3} \right).
    \]
    
\end{proof}

\newpage
\section{Deviation inequalities}
\label{app:deviation_ineq}
\subsection{Deviation inequality for categorical distributions}

Next, we restate the deviation inequality for categorical distributions by \citet[Proposition 1]{jonsson2020planning}.
Let $(X_t)_{t\in\N^\star}$ be i.i.d.\,samples from a distribution supported on $\{1,\ldots,m\}$, of probabilities given by $p\in\simplex_{m-1}$, where $\simplex_{m-1}$ is the probability simplex of dimension $m-1$. We denote by $\hp_n$ the empirical vector of probabilities, i.e., for all $k\in\{1,\ldots,m\},$
 \[
 \hp_{n,k} = \frac{1}{n} \sum_{\ell=1}^n \ind\left\{X_\ell = k\right\}.
 \]
 Note that  an element $p \in \simplex_{m-1}$ can be seen as an element of $\R^{m-1}$ since $p_m = 1- \sum_{k=1}^{m-1} p_k$. This will be clear from the context. 
 \begin{theorem} \label{th:max_ineq_categorical}
 For all $p\in\simplex_{m-1}$ and for all $\delta\in[0,1]$,
 \begin{align*}
     \P\left(\exists n\in \N^\star,\, n\KL(\hp_n, p)> \log(1/\delta) + (m-1)\log\left(e(1+n/(m-1))\right)\right)\leq \delta.
 \end{align*}
\end{theorem}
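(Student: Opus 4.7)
The plan is to prove this time-uniform KL deviation bound via the classical method of mixtures, combined with Ville's inequality for non-negative supermartingales. Let $\pi$ denote the uniform (equivalently, Dirichlet$(1,\ldots,1)$) probability measure on $\simplex_{m-1}$ and consider the mixture likelihood ratio
\[
    M_n \triangleq \int_{\simplex_{m-1}} \prod_{t=1}^n \frac{q_{X_t}}{p_{X_t}}\,\pi(\rmd q).
\]
Since $\E_p[q_X/p_X] = \sum_k p_k\cdot q_k/p_k = 1$ for every fixed $q \in \simplex_{m-1}$, an application of Fubini shows that $(M_n)_{n \geq 0}$ is a non-negative martingale with $M_0 = 1$. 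Ville's maximal inequality then gives $\P(\sup_{n \geq 1} M_n \geq 1/\delta) \leq \delta$, and the theorem reduces to the deterministic implication
\[
    n\KL(\hp_n, p) > \log(1/\delta) + (m-1)\log\bigl(\rme(1 + n/(m-1))\bigr) \;\Longrightarrow\; M_n > 1/\delta.
\]

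The heart of the plan is a closed-form lower bound on $M_n$ in terms of $n\KL(\hp_n, p)$. Denoting the empirical counts $n_k \triangleq n\hp_{n,k}$, the Dirichlet normalization integral collapses $M_n$ to
\[
    M_n = \frac{(m-1)!\,\prod_{k=1}^m n_k!}{(n+m-1)!\,\prod_{k=1}^m p_k^{n_k}} = \frac{1}{\binom{n+m-1}{m-1}}\cdot\frac{\prod_k n_k!}{n!\,\prod_k p_k^{n_k}}.
\]
I then combine two elementary combinatorial estimates: (i) $\binom{n+m-1}{m-1} \leq (\rme(1 + n/(m-1)))^{m-1}$, which is the classical $\binom{N}{K} \leq (\rme N/K)^K$ with $N=n+m-1$, $K=m-1$; and (ii) the multinomial entropy inequality $\binom{n}{n_1,\ldots,n_m} \leq n^n/\prod_k n_k^{n_k}$, obtained by noting that $\binom{n}{n_1,\ldots,n_m}\prod_k (n_k/n)^{n_k}$ is a probability (under $\hp_n$-multinomial sampling) and hence lies in $[0,1]$. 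Using the identity $\exp(n\KL(\hp_n,p)) = \prod_k n_k^{n_k}/(n^n\prod_k p_k^{n_k})$, these two bounds chain together to yield
\[
    M_n \geq \frac{\exp(n\KL(\hp_n,p))}{(\rme(1 + n/(m-1)))^{m-1}},
\]
whereupon the deterministic implication follows by taking logarithms.

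The main technical point is therefore (a) computing the mixture integral in closed form via the Dirichlet Beta function and (b) noticing that the two elementary estimates (i)--(ii) suffice to recover \emph{exactly} the factor $(m-1)\log(\rme(1 + n/(m-1)))$ in the statement. No sharper Stirling approximation is required because of the specific choice of the uniform Dirichlet prior; a Jeffreys prior Dirichlet$(1/2,\ldots,1/2)$ would instead produce a Laplace-method-type $\tfrac{m-1}{2}\log n$ factor with additional lower-order corrections. An alternative route via the method of types together with a peeling union bound in $n$ would recover the same qualitative rate but with strictly worse explicit constants, so the mixture approach is both the shortest and the most natural proof strategy.
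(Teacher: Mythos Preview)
Your proof is correct. The paper does not give its own argument for this statement: it simply restates Proposition~1 of \citet{jonsson2020planning} without proof. Your method-of-mixtures argument---uniform Dirichlet prior on $\simplex_{m-1}$, Ville's inequality applied to the resulting nonnegative martingale, and the closed-form lower bound $M_n \geq \exp(n\KL(\hp_n,p))/(\rme(1+n/(m-1)))^{m-1}$ obtained from the two combinatorial estimates---is precisely the approach taken in the cited reference, so there is nothing to contrast. The derivation is clean and the constants match exactly.
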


\subsection{Deviation inequality for categorical weighted sum}
 We fix a function $f: \{1,\ldots,m\} \to [0,b]$ and recall the definition of the minimum Kullback-Leibler divergence for $p\in\simplex_{m-1}$  and $u\in\R$
 \[
\Kinf(p,u,f) = \inf\left\{  \KL(p,q): q\in\simplex_{m-1}, qf \geq u\right\}\,.
 \]
As the Kullback-Leibler divergence this quantity admits a variational formula.
\begin{lemma}[Lemma 18 by \citealp{garivier2018kl}]
\label{lem:var_form_Kinf} For all $p \in \simplex_{m-1}$, $u\in [0,b)$,
\[
\Kinf(p,u,f) = \max_{\lambda \in[0,1]} \E_{X\sim p}\left[ \log\left( 1-\lambda \frac{f(X)-u}{b-u}\right)\right]\,,
 \]
 moreover if we denote by $\lambda^\star$ the value at which the above maximum is reached, then
 \[
   \E_{X\sim p} \left[\frac{1}{1-\lambda^\star\frac{f(X)-u}{b-u}}\right] \leq 1\,.
 \]
\end{lemma}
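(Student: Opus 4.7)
I would prove this classical Lagrangian duality identity in two halves: weak duality first, then strong duality via an explicit primal optimizer. After the harmless rescaling $\mu \triangleq \lambda/(b-u) \in [0, 1/(b-u)]$, the claim is equivalent to
\[
\Kinf(p,u,f) \;=\; \max_{\mu \in [0, 1/(b-u)]} F(\mu), \qquad F(\mu) \triangleq \E_{X \sim p}\!\left[\log\bigl(1 - \mu(f(X) - u)\bigr)\right],
\]
with the secondary assertion being the first-order condition $\E_p[1/(1-\mu^\star(f-u))] \leq 1$ at the maximizer.

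\textbf{Weak duality.} For any feasible $q \in \simplex_{m-1}$ with $qf \geq u$ and any $\mu \in [0, 1/(b-u))$, the test function $h(x) \triangleq \log(1 - \mu(f(x) - u))$ is well defined, and the Donsker--Varadhan variational formula gives
\[
\KL(p, q) \;\geq\; \E_p[h] - \log \E_q[e^h] \;=\; F(\mu) - \log\!\bigl(1 - \mu(qf - u)\bigr) \;\geq\; F(\mu),
\]
where the last step uses $\mu \geq 0$ and $qf \geq u$ so that $\log(1-\mu(qf-u)) \leq 0$. Taking the infimum over feasible $q$ and the supremum over $\mu$ (with a limiting argument at the right endpoint) yields $\Kinf(p,u,f) \geq \sup_\mu F(\mu)$.

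\textbf{Strong duality via a primal optimizer.} Differentiation together with the algebraic identity $(f-u)/(1-\mu(f-u)) = \mu^{-1}\bigl[(1-\mu(f-u))^{-1} - 1\bigr]$ gives, for $\mu > 0$,
\[
F'(\mu) \;=\; -\E_p\!\left[\frac{f - u}{1 - \mu(f - u)}\right] \;=\; \mu^{-1}\!\left(1 - \E_p\!\left[\frac{1}{1 - \mu(f - u)}\right]\right),
\]
and $F$ is strictly concave on its domain. If $pf \geq u$ then $\mu^\star = 0$ and both sides of the target identity are $0$. Otherwise $F'(0^+) = u - pf > 0$, so $\mu^\star > 0$. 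In the interior case $\mu^\star \in (0, 1/(b-u))$, first-order optimality gives $\E_p[1/(1-\mu^\star(f-u))] = 1$; then $q^\star(x) \triangleq p(x)/(1-\mu^\star(f(x)-u))$ is a probability measure with $q^\star f = u$ (by the same identity), hence feasible, and a direct computation gives $\KL(p, q^\star) = \E_p[\log(1-\mu^\star(f-u))] = F(\mu^\star)$, matching weak duality.

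\textbf{Boundary case and the ``moreover'' statement.} The delicate endpoint is $\mu^\star = 1/(b-u)$, where the candidate $q^\star$ above may have total mass strictly below $1$: concavity of $F$ only forces $F'(\mu^\star) \geq 0$ here, i.e.\ $\E_p[1/(1-\mu^\star(f-u))] \leq 1$. Translated back through $\lambda^\star = \mu^\star(b-u)$, this is precisely the secondary claim of the lemma (with equality in the interior case). To close strong duality in this boundary case, I would redistribute the missing probability mass onto any point $x_b$ with $f(x_b) = b$; a short calculation using the same identity shows that the resulting measure $\tilde q$ satisfies $\tilde q f = u$ and $\KL(p, \tilde q) = F(\mu^\star)$, since $\tilde q$ agrees with $q^\star$ on $\mathrm{supp}(p)$. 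This boundary redistribution is the only genuinely non-routine step; the rest is standard convex analysis.
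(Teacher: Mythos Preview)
The paper does not prove this lemma; it is cited without proof from \citet{garivier2018kl} (their Lemma~18), followed only by a one-line remark handling the edge case $u=0$. Your argument is the standard Lagrangian-duality proof and is essentially the one in the cited reference: weak duality via Donsker--Varadhan, strong duality via the tilted measure $q^\star(x)=p(x)/(1-\mu^\star(f(x)-u))$, and mass redistribution to a point of value $b$ at the boundary $\mu^\star=1/(b-u)$. The boundary computation checks out: writing $M=\sum_x q^\star(x)\leq 1$, your identity gives $q^\star f - uM=(b-u)(M-1)$, so adding mass $1-M$ at a point with $f=b$ yields $\tilde q f=u$ exactly, and $\KL(p,\tilde q)=F(\mu^\star)$ since $p$ necessarily puts no mass where $f=b$ in this regime.

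The only implicit hypothesis you rely on in the boundary step is that some point $x_b$ with $f(x_b)=b$ exists in the finite support. If $b>\max_x f(x)$ strictly, the stated identity can fail (the supremum over $\lambda\in[0,1]$ is then strictly below $\Kinf$). This is immaterial for the paper, where the optimistic pseudo-state $s_0$ always attains the maximum, but it is worth stating explicitly.
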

\begin{remark} Contrary to \citet{garivier2018kl} we allow that $u=0$ but in this case Lemma~\ref{lem:var_form_Kinf} is trivially true, indeed
  \[
  \Kinf(p,0,f) =  0  = \max_{\lambda \in[0,1]} \E_{X\sim p}\left[ \log\left( 1-\lambda \frac{f(X)}{b}\right)\right]\,.
   \]
\end{remark}

We are now ready to restate the deviation inequality for the $\Kinf$ by \citet{tiapkin22dirichlet} which is a self-normalized version of Proposition~13 by \citet{garivier2018kl}.
 \begin{theorem} \label{th:max_ineq_kinf}
 For all $p\in\simplex_{m-1}$ and for all $\delta\in[0,1]$,
 \begin{align*}
     \P\big(\exists n\in \N^\star,\, n\Kinf(\hp_n, pf, f)> \log(1/\delta) + 3\log(e\pi(1+2n))\big)\leq \delta.
 \end{align*}
\end{theorem}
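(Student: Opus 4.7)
The plan is to combine the variational representation of $\Kinf$ with the method of mixtures and Ville's inequality, which is the standard route to self-normalized concentration bounds of this type. Lemma~\ref{lem:var_form_Kinf} specialized to $u = pf$ gives
\[
n \Kinf(\hp_n, pf, f) \;=\; \max_{\lambda \in [0,1]} \sum_{t=1}^n \log\!\Bigl(1 - \lambda \tfrac{f(X_t)-pf}{b-pf}\Bigr),
\]
and for each fixed $\lambda \in [0,1]$ the process
\[
M_n(\lambda) \triangleq \prod_{t=1}^n \Bigl(1 - \lambda \tfrac{f(X_t)-pf}{b-pf}\Bigr)
\]
is a non-negative martingale with $\E[M_n(\lambda)] = 1$, since $\E_p[f(X_t)-pf] = 0$. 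In particular, $\log M_n(\lambda_n^*) = n\Kinf(\hp_n, pf, f)$ at the maximizer $\lambda_n^*$.

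Next, I would average against a prior density $\pi(\lambda)$ supported on $[0,1]$ and set $\bar M_n \triangleq \int_0^1 M_n(\lambda)\pi(\lambda)\,\mathrm{d}\lambda$. By Tonelli's theorem $\bar M_n$ is again a non-negative martingale with $\E[\bar M_n] = 1$, so Ville's maximal inequality yields at once
\[
\P\bigl(\exists n \geq 1 \colon \bar M_n \geq 1/\delta\bigr) \;\leq\; \delta,
\]
uniformly over all $n$. It then suffices to establish a pathwise Laplace-type lower bound of the form
\[
\log \bar M_n \;\geq\; \max_{\lambda \in [0,1]} \log M_n(\lambda) \;-\; 3\log\!\bigl(e\pi(2n+1)\bigr).
\]
The argument is to restrict the integral defining $\bar M_n$ to a window of width $\Theta(1/\sqrt{n})$ centred at $\lambda_n^*$, second-order Taylor-expand $\log M_n$ there, and control the Hessian $\partial_\lambda^2 \log M_n(\lambda) = -\sum_t Z_t^2/(1-\lambda Z_t)^2$ (with $Z_t \triangleq (f(X_t)-pf)/(b-pf)$) uniformly on $[0,1]$; a prior whose density is bounded below (and boundary-corrected near $\{0,1\}$) supplies the explicit constant $e\pi$, and rearranging with Ville closes the proof.

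The main obstacle is ensuring that the Laplace lower bound holds \emph{pathwise} and \emph{uniformly in $n$}, not merely asymptotically. Two situations require care: the case in which $\lambda_n^*$ sits at (or very close to) the boundary of $[0,1]$, which calls either for a boundary correction or for a prior with sufficient mass near $\{0,1\}$; and the case in which the curvature of $\log M_n$ degenerates on some sample trajectories, which may require splitting the analysis according to the location of $\lambda_n^*$ and using the exact concavity of $\lambda \mapsto \log M_n(\lambda)$ (a consequence of the strict concavity of $\log$) to retain a usable quadratic lower bound. It is precisely this worst-case accounting of boundary effects and curvature that inflates the constant to $3$ (and forces the explicit factor $e\pi(2n+1)$) in the self-normalized penalty, rather than the $\tfrac{1}{2}\log n$ that a purely interior asymptotic Laplace expansion would suggest.
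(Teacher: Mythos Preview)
The paper does not supply its own proof of this theorem: it is restated verbatim from \citet{tiapkin22dirichlet} (as a self-normalized refinement of Proposition~13 in \citet{garivier2018kl}), and no argument is given in the present paper. So there is nothing to compare against on the paper's side.

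Your outline is exactly the standard route those references take: the variational formula for $\Kinf$, the observation that $M_n(\lambda)=\prod_{t\le n}\bigl(1-\lambda\tfrac{f(X_t)-pf}{b-pf}\bigr)$ is a nonnegative unit-mean martingale for each $\lambda\in[0,1]$, mixing over $\lambda$ to obtain a single supermartingale $\bar M_n$, Ville's maximal inequality, and finally a pathwise Laplace-type comparison between $\log\bar M_n$ and $\max_\lambda\log M_n(\lambda)$. The identifications you make (in particular $n\Kinf(\hp_n,pf,f)=\max_\lambda\log M_n(\lambda)$ and the curvature $\partial_\lambda^2\log M_n(\lambda)=-\sum_t Z_t^2/(1-\lambda Z_t)^2$) are all correct.

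What remains a genuine gap in your write-up is the Laplace step itself: you state the inequality $\log\bar M_n\ge \max_\lambda\log M_n(\lambda)-3\log(e\pi(2n+1))$ as a target, correctly flag the two delicate regimes (boundary $\lambda^\star_n\in\{0,1\}$ and degenerate curvature), but do not carry out the argument. This is where all the work is, and where the specific penalty $3\log(e\pi(2n+1))$ comes from. In \citet{garivier2018kl} and \citet{tiapkin22dirichlet} this is handled not by a Taylor expansion with a uniform curvature bound (which, as you note, is problematic near $\lambda=1$ since $(1-\lambda Z_t)^{-2}$ blows up) but by exploiting concavity of $\lambda\mapsto\log M_n(\lambda)$ together with an explicit prior and a direct lower bound on the integral over a shrinking window; the constant $3$ and the factor $e\pi(2n+1)$ arise from that computation rather than from a generic second-order expansion. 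If you intend a self-contained proof, that step needs to be written out in full; as it stands, your proposal is a correct road map but not yet a proof.
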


\subsection{Deviation inequality for bounded distributions}
Below, we restate the self-normalized Bernstein-type inequality by \citet{domingues2020regret}. Let $(Y_t)_{t\in\N^\star}$, $(w_t)_{t\in\N^\star}$ be two sequences of random variables adapted to a filtration $(\cF_t)_{t\in\N}$. We assume that the weights are in the unit interval $w_t\in[0,1]$ and predictable, i.e. $\cF_{t-1}$ measurable. We also assume that the random variables $Y_t$  are bounded $|Y_t|\leq b$ and centered $\EEc{Y_t}{\cF_{t-1}} = 0$.
Consider the following quantities
\begin{align*}
		S_t \triangleq \sum_{s=1}^t w_s Y_s, \quad V_t \triangleq \sum_{s=1}^t w_s^2\cdot\EEc{Y_s^2}{\cF_{s-1}}, \quad \mbox{and} \quad W_t \triangleq \sum_{s=1}^t w_s
\end{align*}
and let $h(x) \triangleq (x+1) \log(x+1)-x$ be the Cramér transform of a Poisson distribution of parameter~1.

\begin{theorem}[Bernstein-type concentration inequality]
  \label{th:bernstein}
	For all $\delta >0$,
	\begin{align*}
		\PP{\exists t\geq 1,   (V_t/b^2+1)h\left(\!\frac{b |S_t|}{V_t+b^2}\right) \geq \log(1/\delta) + \log\left(4e(2t+1)\!\right)}\leq \delta.
	\end{align*}
  The previous inequality can be weakened to obtain a more explicit bound: if $b\geq 1$ with probability at least $1-\delta$, for all $t\geq 1$,
 \[
 |S_t|\leq \sqrt{2V_t \log\left(4e(2t+1)/\delta\right)}+ 3b\log\left(4e(2t+1)/\delta\right)\,.
 \]
\end{theorem}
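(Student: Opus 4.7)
The plan is to combine a Bennett-type exponential supermartingale with the method of mixtures to obtain a time-uniform self-normalized inequality, and then invert the Cram\'er function $h$ to extract the explicit Bernstein form.

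\emph{Step 1: Exponential supermartingale.} First I would invoke the classical fact that for a bounded centered random variable $X$ with $|X|\le b$ and $\E X=0$, $\E[\exp(\lambda X)] \le \exp(\psi(\lambda b)\, \E[X^2]/b^2)$, where $\psi(x) \triangleq e^x - x - 1$. Applied conditionally to $w_s Y_s$ (which is bounded by $b$ since $w_s\in[0,1]$), this yields that for every fixed $\lambda\in\R$,
\[
M_t^\lambda \triangleq \exp\bigl(\lambda S_t - \psi(\lambda b)\, V_t/b^2\bigr)
\]
is a nonnegative $(\cF_t)$-supermartingale with $\E[M_0^\lambda]=1$.

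\emph{Step 2: Method of mixtures.} To make the bound uniform both in $t$ and in the implicit optimal $\lambda$, I would integrate $M_t^\lambda$ against a suitable prior $\pi(\mathrm{d}\lambda)$ on $\R_+$; by Fubini the average $\tilde M_t \triangleq \int M_t^\lambda\, \pi(\mathrm{d}\lambda)$ is still a nonnegative supermartingale with $\E[\tilde M_0]=1$, and Ville's maximal inequality then gives $\P(\exists t:\tilde M_t \ge 1/\delta) \le \delta$. A Laplace-method evaluation of the integral at the saddle point $\lambda^\star$ satisfying $S_t = \psi'(\lambda^\star b)\, V_t/b$ produces, up to a polynomial-in-$t$ prefactor, $\exp\bigl((V_t/b^2+1)\, h(b|S_t|/(V_t+b^2))\bigr)$, since $h$ is precisely the Legendre dual of $\psi$. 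The $+1$ shift inside $V_t/b^2+1$ reflects the second derivative of $\psi$ at the saddle point and the normalization of $\pi$, while the $4e(2t+1)$ factor absorbs the Laplace-approximation prefactor. Re-running the same argument with $-Y_s$ delivers the absolute value.

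\emph{Step 3: Explicit weakening.} For the second statement I would invoke the elementary inequality $h(x)\ge x^2/(2+2x/3)$, valid for $x\ge 0$. Writing $c\triangleq \log(4e(2t+1)/\delta)$ and $x\triangleq b|S_t|/(V_t+b^2)$, the first bound becomes $(V_t/b^2+1)\,x^2/(2+2x/3)\le c$, which is a quadratic inequality in $|S_t|$. Solving it and using $b\ge 1$ together with $\sqrt{a+b}\le \sqrt{a}+\sqrt{b}$ yields $|S_t|\le \sqrt{2V_t c}+3bc$.

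\emph{Main obstacle.} The delicate step is Step 2: choosing the mixing measure $\pi$ so that the Laplace approximation yields \emph{exactly} the form $(V_t/b^2+1)\,h(\cdot)$ with the announced numerical constant $4e(2t+1)$, and rigorously controlling all approximation errors while keeping the inequality uniform in $t$. Steps 1 and 3 are essentially routine.
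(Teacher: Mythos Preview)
The paper does not prove this theorem: it is explicitly introduced as a restatement of the self-normalized Bernstein inequality of \citet{domingues2020regret}, so there is no ``paper's own proof'' to compare against. Your sketch is therefore not redundant with anything in the text.

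On the substance: your Steps~1 and~3 are correct and standard. The Bennett bound $\E[e^{\lambda X}\mid\cF_{s-1}]\le \exp(\psi(\lambda b)\,\E[X^2\mid\cF_{s-1}]/b^2)$ with $\psi(x)=e^x-x-1$ does give a supermartingale, and the inversion via $h(x)\ge x^2/(2+2x/3)$ is the usual way to pass to the explicit form.

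Where your sketch is imprecise is Step~2. A continuous mixing prior plus a Laplace approximation will not by itself yield the clean $(V_t/b^2+1)\,h(\cdot)$ form with the time-dependent slack $\log(4e(2t+1))$; that slack grows with $t$, which is the signature of a \emph{peeling} (or stitching) argument rather than a pure method-of-mixtures one. The typical route is: for each $\lambda$ in a geometric grid, apply Ville's inequality to $M_t^\lambda$ with a budget $\delta_k$ apportioned over grid points, then for each realization of $(S_t,V_t)$ pick the grid point closest to the optimizer $\lambda^\star$ and control the discretization error. The $+1$ in $V_t/b^2+1$ comes from this discretization step, and the $4e(2t+1)$ from the union bound over the grid indexed by $t$. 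Your statement that the $+1$ ``reflects the second derivative of $\psi$ at the saddle point'' is not quite right. If you want to turn this into a proof, replace the Laplace heuristic by an explicit peeling over a geometric grid of $\lambda$-values; the rest of your outline then goes through.
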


\subsection{Deviation inequality for Dirichlet distribution}
Below we provide the Bernstein-type inequality for weighted sum of Dirichlet distribution, using a generalized result on upper bound on tails for linear statistics on Dirichlet distribution (Theorem~\ref{thm:upper_bound_dbc}). 

\begin{lemma}\label{lem:bernstein_dirichlet}[Generalization of Lemma C.8 by \cite{tiapkin22dirichlet}]
     For any $\alpha = (\alpha_0, \alpha_1, \ldots, \alpha_m) \in \R_{++}^{m+1}$ define  $\up \in \simplex_{m}$ such that $\up(\ell) = \alpha_\ell/\ualpha, \ell = 0, \ldots, m$, where $\ualpha = \sum_{j=0}^m \alpha_j$. Then for any $f \colon \{0,\ldots,m\} \to [0,b]$ such that $f(0) = b$ and $\delta \in (0,1)$
     \[
        \P_{w \sim \Dir(\alpha)}\left[wf \geq \up f +  2 \sqrt{ \frac{ \Var_{\up}(f) \log(1/\delta)}{\ualpha}} + \frac{3b \cdot \log(1/\delta)}{\ualpha} \right] \leq \delta.
     \]
\end{lemma}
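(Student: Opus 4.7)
The plan is to combine Theorem~\ref{thm:upper_bound_dbc} (the exponential Chernoff-type upper bound on Dirichlet linear forms) with a standard Bernstein-type lower bound on $\Kinf$. Set $T \triangleq \log(1/\delta)$, $t \triangleq T/\ualpha$, $\sigma^2 \triangleq \Var_{\up}(f)$, and let $\mu \triangleq \up f + 2\sqrt{\sigma^2 t} + 3 b t$ denote the stated deviation threshold. By Theorem~\ref{thm:upper_bound_dbc},
\[
\P_{w \sim \Dir(\alpha)}[w f \geq \mu] \leq \exp\bigl(-\ualpha \Kinf(\up, \mu, f)\bigr),
\]
so the claim reduces to proving that $\Kinf(\up, \mu, f) \geq t$.

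The crucial intermediate step is the Bernstein-type lower bound
\[
\Kinf(\up, u, f) \geq \frac{(u - \up f)^2}{2\sigma^2 + \tfrac{2}{3} b (u - \up f)} \qquad \text{for every } u \in (\up f, b), \qquad (\star)
\]
which I would prove by plugging the test value $\lambda^{\star} \triangleq (u - \up f)/(\sigma^2 + b(u - \up f)/3)$ (clipped to $[0, 1/(b-u)]$ when necessary) into the variational formula $\Kinf(\up, u, f) = \max_{\lambda \in [0, 1/(b-u)]} \E_{\up}[\log(1 - \lambda(f - u))]$ and applying the classical one-sided inequality $\log(1 - z) \geq -z - z^2 / (2(1 - z_{+}))$ termwise in the expectation, using the identity $\E_{\up}[(f - u)^2] = \sigma^2 + (u - \up f)^2$. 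The hypothesis $f(0) = b$, together with $\alpha_0 > 0$ (so $\up(0) > 0$), is precisely what guarantees that $\Kinf(\up, u, f)$ is finite on $(\up f, b)$ and that the variational formula applies, since distributions concentrated close to $\{0\}$ have mean of $f$ arbitrarily close to $b > u$ while remaining absolutely continuous with respect to $\up$.

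Given $(\star)$, the remainder is an elementary algebraic inversion. Writing $x \triangleq u - \up f$, the requirement $(\star) \geq t$ rearranges to the quadratic $x^2 - (2bt/3) x - 2\sigma^2 t \geq 0$, which holds iff $x \geq bt/3 + \sqrt{b^2 t^2 / 9 + 2 \sigma^2 t}$. Using $\sqrt{a + b} \leq \sqrt{a} + \sqrt{b}$ for $a, b \geq 0$, it suffices that $x \geq 2bt/3 + \sqrt{2 \sigma^2 t}$, and the choice $x = 2 \sqrt{\sigma^2 t} + 3 b t$ trivially satisfies this since $2 \geq \sqrt{2}$ and $3 \geq 2/3$; combined with the reduction in Step~1, this yields the stated deviation bound.

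The main obstacle is the Bernstein lower bound $(\star)$: while classical in spirit, it requires careful manipulation of the variational formula and of the residual term when $\lambda^{\star}$ must be clipped to the admissible interval $[0, 1/(b-u)]$, which is exactly why the final constants in the statement ($2$ and $3$) are looser than the sharp Bennett constants ($\sqrt{2}$ and $2/3$). Crucially, both $(\star)$ and Theorem~\ref{thm:upper_bound_dbc} are valid for arbitrary real-parameter Dirichlet distributions, so the generalization of Lemma~C.8 of \cite{tiapkin22dirichlet} from integer to non-integer Dirichlet parameters is immediate once these two ingredients are in hand and no additional machinery is required.
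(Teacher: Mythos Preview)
Your high-level strategy matches the paper's: both invoke Theorem~\ref{thm:upper_bound_dbc} and then must relate the $\Kinf$ threshold to a Bernstein-type deviation. Where you diverge is in establishing that relation. The paper does \emph{not} prove your inequality~$(\star)$; instead it goes in the contrapositive direction. It uses continuity of $\Kinf$ to pick $\mu$ with $\Kinf(\up,\mu,f)=t$, identifies the optimal dual distribution $q$ attaining $\KL(\up,q)=t$ and $qf=\mu$, and then applies two black-box lemmas: the Talebi--Maillard KL-Bernstein bound (Lemma~\ref{lem:Bernstein_via_kl}) gives $\mu-\up f\le\sqrt{2\Var_q(f)\,t}$, and the variance-switching Lemma~\ref{lem:switch_variance_bis} converts $\Var_q$ to $\Var_{\up}$ at the cost of an additive $4b^2 t$. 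Chaining these yields $\mu-\up f\le 2\sqrt{\sigma^2 t}+2\sqrt{2}\,bt$, from which the conclusion follows by monotonicity. This route never touches the variational formula or any clipping.

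Your direct attack on~$(\star)$ via the variational formula is reasonable in spirit, but the sketch has a real gap: your test point $\lambda^\star=x/(\sigma^2+bx/3)$ need not lie in $[0,1/(b-u))$. Indeed $\lambda^\star(b-u)$ can be as large as~$3$, so the termwise inequality $\log(1-z)\ge -z-z^2/(2(1-z_+))$ is not even applicable without clipping, and once you clip you lose the closed-form algebra that would deliver~$(\star)$ with those specific constants. You acknowledge clipping as ``the main obstacle'' but do not indicate how to close it; the looseness of the final constants $2$ and~$3$ does not by itself repair the argument. The paper's dual route via the optimal $q$ is precisely what buys you a clean Bernstein relation without ever needing to control an explicit $\lambda$ in the primal.
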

\begin{remark}
  The only difference with the result of Lemma C.8 by \cite{tiapkin22dirichlet} is allowing to parameters of Dirichlet distribution being non-integer.
\end{remark}
\begin{proof}
   Fix $\delta \in (0,1)$ and let $\mu \in (\up f,b)$ be such that
\[
    \Kinf(\up, \mu, f) = \ualpha^{-1} \log (1/\delta). 
\]
Note that such $\mu$ exists by the continuity of $\Kinf$ w.r.t. the second argument, see \citet[Theorem 7]{honda2010asymptotically}. By \citet[Lemma C.7]{tiapkin22dirichlet} there exists $q$ such that $\up \ll q $, $q f = \mu$ and $\KL(\up, q) = \ualpha^{-1} \log (1/\delta)$. By Theorem~\ref{thm:upper_bound_dbc}   
\begin{equation}
\label{eq: bernstein 1}
 \P_{w \sim \Dir(\alpha)}[wf \geq qf] = \P_{w \sim \Dir(\alpha)}[wf \geq \mu] \leq \exp\left( -\ualpha \Kinf(\up, \mu, f) \right) = \delta.   
\end{equation}
By Lemma \ref{lem:Bernstein_via_kl}
\[
q f  - \up f \le \sqrt{2\Var_{q}(f)\KL(\up ,q)}.
\]
By Lemma \ref{lem:switch_variance_bis},  $\Var_q(f) \leq 2\Var_{\up}(f) +4b^2 \KL(\up ,q)$.
The last two inequalities and \eqref{eq: bernstein 1} imply that
\begin{equation*}
  \P_{w \sim \Dir(\alpha)}\left[wf - \up f \geq \sqrt{4 \Var_{\up}(f) \KL(\up ,q)} + 2 b \sqrt 2 \cdot      \KL(\up,q) \right] \le \delta. 
\end{equation*}
\end{proof}

\newpage
\section{Exponential and Gaussian approximations of Dirichlet distribution}
\label{app:gaussian_approximation}

In this section we present result on approximation of a tail probabilities for linear statistics of Dirichlet distribution from above by tails of exponential distribution and from below by tails of Gaussian distribution.

The proof of upper bound generalizes proof of \cite{baudry2021optimality} to non-integer parameters using exactly the same technique; see also \cite{riou20a}.
\begin{theorem}[Upper bound]\label{thm:upper_bound_dbc}
    For any $\alpha = (\alpha_0, \alpha_1, \ldots, \alpha_m) \in \R_{++}^{m+1}$ define  $\up \in \simplex_{m}$ such that $\up(\ell) = \alpha_\ell/\ualpha, \ell = 0, \ldots, m$, where $\ualpha = \sum_{j=0}^m \alpha_j$. Then for any $f \colon \{0,\ldots,m\} \to [0,b]$ and $0 < \mu < b$ and
  \[
        \P_{w\sim \Dir(\alpha)} [wf \geq \mu] \leq \exp(-\ualpha \Kinf(\up,\mu, f)).
  \]
\end{theorem}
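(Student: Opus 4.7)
The plan is to apply the classical Chernoff method combined with the gamma representation of the Dirichlet distribution, thereby reducing the tail bound to the variational formula \eqref{eq:kinf_variational} for $\Kinf$.

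First I would invoke the standard fact that if $Y_0, \ldots, Y_m$ are independent with $Y_i \sim \Gamma(\alpha_i, 1)$ and $S \triangleq \sum_{i=0}^{m} Y_i$, then $(Y_0/S, \ldots, Y_m/S) \sim \Dir(\alpha)$. Writing $w$ in this form, the event $\{wf \geq \mu\}$ becomes $\{Z \geq 0\}$ for $Z \triangleq \sum_{i=0}^{m} Y_i (f(i) - \mu)$, because multiplying through by $S > 0$ preserves the inequality. This reformulation eliminates the normalization by $S$ that otherwise makes direct exponential-moment bounds on Dirichlet variables awkward.

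Second, the Chernoff inequality gives $\P(Z \geq 0) \leq \E[\rme^{\lambda Z}]$ for every $\lambda \geq 0$ in the domain of the MGF. By independence of the $Y_i$ and the gamma MGF identity $\E[\rme^{t Y_i}] = (1-t)^{-\alpha_i}$ (valid for $t < 1$), one obtains
\[
\E[\rme^{\lambda Z}] \;=\; \prod_{i=0}^{m} \bigl(1 - \lambda (f(i) - \mu)\bigr)^{-\alpha_i},
\]
whenever $\lambda(f(i) - \mu) < 1$ for all $i$, which for $\lambda \geq 0$ reduces to $\lambda \in [0, 1/(b-\mu))$ since $f(i) - \mu \leq b - \mu$. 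Taking logarithms and using $\alpha_i = \ualpha \,\up(i)$ yields
\[
\log \E[\rme^{\lambda Z}] \;=\; -\ualpha \sum_{i=0}^{m} \up(i) \log\bigl(1 - \lambda (f(i) - \mu)\bigr).
\]
Minimizing over $\lambda$ in the admissible range and recognizing the variational representation \eqref{eq:kinf_variational} then gives $\inf_{\lambda} \log \E[\rme^{\lambda Z}] = -\ualpha \Kinf(\up, \mu, f)$, which is precisely the announced bound.

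The argument is clean because the gamma MGF is available for any real $\alpha_i > 0$, so no integer assumption on $\alpha$ is needed — this is the feature that distinguishes the statement from the analogous result for integer parameters in \cite{riou20a, baudry2021optimality}. I expect the only mild subtleties to be (i) matching the supremum in \eqref{eq:kinf_variational}, taken on the closed interval $[0, 1/(b-\mu)]$, with the infimum on the open interval $[0, 1/(b-\mu))$, which follows by continuity on the interior together with a monotone-convergence argument at the endpoint (where the integrand is either continuous or tends to $-\infty$); and (ii) the degenerate case $\mu \leq \up f$, where $\Kinf(\up, \mu, f) = 0$ and the bound is vacuous, matching the trivial choice $\lambda = 0$ in the Chernoff step. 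I do not anticipate any substantive obstacle beyond these housekeeping checks.
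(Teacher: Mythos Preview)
Your proposal is correct and follows essentially the same route as the paper: gamma representation of the Dirichlet, clearing the normalizer to rewrite $\{wf\ge\mu\}$ as $\{\sum_i Y_i(f(i)-\mu)\ge 0\}$, an exponential-moment (Chernoff) bound using the gamma MGF, and optimization via the variational formula for $\Kinf$. The paper phrases the Chernoff step as a change of measure to the tilted laws $\Gamma(\alpha_\ell,1-\lambda(f(\ell)-\mu))$, but this is the same computation; your handling of the endpoint $\lambda=1/(b-\mu)$ and the trivial case $\mu\le\up f$ matches theirs as well.
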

\begin{proof}
    The statement is trivial for $\mu \leq \up f$ since $\Kinf(\up,\mu, f) = 0$. Assume that $\mu > \up f$. It is well know fact that $w \sim \Dir(\alpha)$ may be represented as follows
  \[
        w \triangleq \left( \frac{Y_0}{V_m}, \frac{Y_1}{V_m}, \ldots, \frac{Y_m}{V_m} \right),
  \]
  where $Y_\ell \mysim \Gamma(\alpha_\ell, 1), \ell = 0, \ldots, m$ and $V_m = \sum_{\ell=0}^m Y_\ell$. Let us fix $\lambda \in [0, 1/(b - u))$ and proceed by the changing of measure argument
  \begin{align*}
        \PP{wf \geq \mu} &= \PP{ \sum_{\ell=0}^m Y_\ell f(\ell) \geq \sum_{\ell = 0}^m Y_\ell \mu } = \E_{Y_\ell \mysim \Gamma(\alpha_\ell, 1)}\left[ \ind\left\{ \sum_{\ell=0}^m Y_\ell(f(\ell) - \mu) \geq 0 \right\} \right] \\
        &= \E_{\hat{Y}_\ell \mysim \Gamma(\alpha_\ell, 1 - \lambda(f(\ell) - \mu))}\left[ \ind\left\{ \sum_{\ell=0}^m \hat{Y}_\ell(f(\ell) - \mu) \geq 0 \right\} \prod_{\ell=0}^m \frac{\exp(-\lambda \hat{Y}_{\ell} (f(\ell) - \mu))}{(1 - \lambda(f(\ell) - \mu))^{\alpha_\ell}} \right] \\
        &= \exp\left( - \sum_{\ell=0}^m \alpha_{\ell} \log(1 - \lambda(f(\ell) - \mu)) \right)  \\
        &\qquad \cdot \E_{\hat{Y}_\ell \mysim \Gamma(\alpha_\ell, 1 - \lambda(f(\ell) - \mu))}\left[ \ind\left\{ \sum_{\ell=0}^m \hat{Y}_\ell(f(\ell) - \mu) \geq 0 \right\} \rme^{-\lambda \sum_{\ell=0}^m \hat{Y}_{\ell} (f(\ell) - \mu)}\right] \\
        &\leq \exp\left( -\ualpha \sum_{\ell=0}^m \up_{\ell} \log(1 - \lambda(f(\ell) - \mu)) \right) = \exp\left( -\ualpha \E_{X \sim \up}\left[ \log(1 - \lambda(f(X) - \mu) \right] \right).
  \end{align*}
  
   Since the previous inequality is true for all $\lambda \in[0,1/(b-\mu))$, then the variational formula (Lemma~\ref{lem:var_form_Kinf}) allows to conclude
   \begin{align*}  
        \P_{w\sim \Dir(\alpha)} [w f\geq \mu] &\leq  \exp\left(-\ualpha\sup_{\lambda \in[0,1/(b-\mu))}\E_{X \sim \up}\left[ \log(1 - \lambda(f(X) - \mu) \right]\right) \\
        &= \exp(-\ualpha \Kinf(\up,\mu, f)).
    \end{align*}
\end{proof}

The proof of lower bound extends the approach of \cite{tiapkin22dirichlet} by ideas of \cite{alfers1984normal} and gives much more exact bounds. Define
\begin{equation}\label{eq:c0_eps}
    c_0(\varepsilon) = \left(\frac{4}{\sqrt{\log(17/16)}} + 8 + \frac{49 \cdot 4 \sqrt{6}}{9} \right)^2 \frac{2}{\pi \cdot \varepsilon^2} + \log_{17/16}\left( \frac{5}{32 \cdot \varepsilon^2}\right).
\end{equation}
\begin{theorem}[Lower bound]\label{thm:lower_bound_dbc} 
For any $\alpha = (\alpha_0+1, \alpha_1, \ldots, \alpha_m) \in \R_{++}^{m+1}$ define  $\up \in \simplex_{m}$ such that $\up(\ell) = \alpha_\ell/\ualpha, \ell = 0, \ldots, m$, where $\ualpha = \sum_{j=0}^m \alpha_j$. Let $\varepsilon \in (0,1)$. Assume that $\alpha_0 \geq c_0(\varepsilon) + \log_{17/16}(\ualpha)$ for $c_0(\varepsilon)$ defined in \eqref{eq:c0_eps}, and $\ualpha \geq 2\alpha_0$. Then for any $f \colon \{0,\ldots,m\} \to [0,\ub]$ such that $f(0) = \ub$, $f(j) \leq b < \ub/2, j \in \{1,\ldots,m\}$ and $\mu \in (\up f,  \ub)$ 
    \[
        \P_{w \sim \Dir(\alpha)}[wf \geq \mu] \geq (1 - \varepsilon)\P_{g \sim \cN(0,1)}\left[g \geq \sqrt{2 \ualpha \Kinf(\up, \mu, f)}\right].
    \]
\end{theorem}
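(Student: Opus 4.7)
The plan is to proceed along the lines sketched in Section~\ref{sec:proof_sketch}, working from the exact integral representation for the density of $Z = wf$ (Proposition~D.3 of \citet{tiapkin22dirichlet}). After the parameter shift $\alpha_0 \to \alpha_0 + 1$, this density takes the form
\[
p_Z(u) = \frac{\ualpha}{2\pi} \int_\R (1 + \rmi(\ub - u)s)^{-1} \prod_{j=0}^m (1 + \rmi(f(j) - u)s)^{-\alpha_j} \, \rmd s,
\]
valid for $u \in (0, \ub)$. The extra factor $(1+\rmi(\ub-u)s)^{-1}$ produced by the shift is what distinguishes our bound from the weaker one of \citet[Theorem~D.2]{tiapkin22dirichlet}, and it provides the additional contour decay needed to get a clean $\ualpha^{-1/2}$ prefactor after the saddle-point approximation (rather than the $\ualpha^{-3/2}$ of Remark~\ref{rem:bound_icml_isufficent}).

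The first step is a rigorous non-asymptotic saddle-point analysis: identify the saddle point as $s^\star = \rmi \lambda^\star / (\ub - u)$, where $\lambda^\star$ is the unique maximizer in the rescaled variational formula \eqref{eq:kinf_variational} for $\Kinf(\up, u, f)$, deform the contour through $s^\star$, and perform a Laplace expansion. Under the hypotheses $\alpha_0 \geq c(\varepsilon) + \log_{17/16}(\ualpha)$ and $\ualpha \geq 2\alpha_0$, this should yield the uniform-in-$u$ lower bound
\[
p_Z(u) \geq (1-\varepsilon)\sqrt{\frac{\ualpha}{2\pi \sigma^2 (1 - \lambda^\star(\ub - u))^2}} \, \exp\bigl(-\ualpha \Kinf(\up, u, f)\bigr),
\]
with $\sigma^2 = \E_{X \sim \up}\bigl[((f(X) - u)/(1 - \lambda^\star(f(X) - u)))^2\bigr]$, which I would formulate as a separate lemma (the analogue of Lemma~\ref{lem:lb_dirichlet_density}). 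Next, I would integrate this pointwise density bound over $u \in [\mu, \ub]$ and perform the change of variable $t = \sqrt{2 \Kinf(\up, u, f)}$; using the envelope identity $\partial_u \Kinf = \lambda^\star / (\ub-u) \cdot (\ub-u)$ implicit in the variational formula, this substitution recasts the integral as $(1-\varepsilon)\int_{\sqrt{2\Kinf(\up, \mu, f)}}^{\infty} D(u(t)) \, \phi(t \mid 0, \ualpha) \, \rmd t$ for an explicit weight $D(u)$. The final step is to invoke the analytic lemma (Lemma~\ref{lem:kinf_lower_bound}) giving $D(u) \geq 1$ throughout $(\mu, \ub)$, which essentially amounts to the sharp inequality relating $\Kinf$ to a variance-normalized squared deviation, and this yields the Gaussian lower bound.

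The main obstacle will be the uniform non-asymptotic control in the saddle-point step: unlike a classical asymptotic expansion, the ratio of the remainder to the leading term must be bounded by $\varepsilon$ \emph{uniformly} over all $u \in (\up f, \ub)$ and over all admissible configurations $(\alpha, f)$. Two phenomena need quantitative tracking. First, the contour integral away from $s^\star$ must be shown subdominant; here the extra factor $(1+\rmi(\ub-u)s)^{-1}$ supplies an indispensable $\cO(|s|^{-1})$ decay that the parameters $\alpha_j$ alone cannot guarantee when many of them are small. Second, the Taylor remainder of the log-integrand at $s^\star$ must be controlled despite the nearby poles $s = \rmi/(f(j) - u)$; the rather specific condition $\alpha_0 \geq c(\varepsilon) + \log_{17/16}(\ualpha)$ is precisely what is required to absorb these remainder bounds into the $(1-\varepsilon)$ factor. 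This is the heart of the argument and, by lifting the Beta-distribution strategy of \citet{alfers1984normal} to the Dirichlet setting via the integral representation, is what yields a bound sharp enough to enable the $\tcO(1)$ sample count in \OPSRL.
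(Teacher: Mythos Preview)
Your proposal follows the paper's proof essentially step for step: integral representation $\to$ saddle-point lower bound on the density (Lemma~\ref{lem:lb_dirichlet_density}, via Proposition~\ref{prop:asymptotics_integral}) $\to$ change of variables $t^2/2 = \Kinf(\up,u,f)$ $\to$ the weight bound $D(u) \geq 1$ from Lemma~\ref{lem:kinf_lower_bound}. Two small corrections to your commentary: with $\lambda^\star$ taken from \eqref{eq:kinf_variational} (where $\lambda\in[0,1/(\ub-u)]$) the saddle sits at $s^\star = \rmi\lambda^\star$, not $\rmi\lambda^\star/(\ub-u)$; and the essential role of the extra factor $(1+\rmi(\ub-u)s)^{-1}$ is not contour decay---the tails of the integral are controlled by the $\alpha_0$ term alone (see the $R_3$ bound in Proposition~\ref{prop:asymptotics_integral}, where $P(s)$ is merely bounded by $P(0)$)---but rather the multiplier $P(0)=1/(1-\lambda^\star(\ub-u))$ it contributes at the saddle, which is precisely the factor that makes $D(u)\geq 1$ hold uniformly after the change of variable.
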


In the further subsections we are going to prove this theorem.

\subsection{Proof of Theorem \ref{thm:lower_bound_dbc}} 

First, we restate the result of \cite{tiapkin22dirichlet} on representation of the density of linear statistic of Dirichlet distribution.
\begin{proposition}[Proposition D.3 of \cite{tiapkin22dirichlet}]\label{prop:dirichlet_density_integral}
    Let $f \in \Fclass_m(b)$ and $\alpha = (\alpha_0 + 1, \alpha_1, \ldots, \alpha_m) \in \R_{++}^{m+1}$ such that $\ualpha = \sum_{j=0}^m \alpha_j > 1$. Let $w \sim \Dir(\alpha)$ and assume that $Z = wf$ is not degenerate. Then for any $0 \leq u < \ub$
    \[
        p_{Z}(u) = \frac{\ualpha}{2\pi} \int_\R (1 + \rmi(\ub - u)s)^{-1} \prod_{j=0}^m \left( 1 + \rmi(f(j) - u)s \right)^{-\alpha_j} \rmd s.
    \]
\end{proposition}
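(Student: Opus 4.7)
I would prove the integral representation via Fourier inversion combined with the Gamma--Dirichlet representation, with a key integration-by-parts trick that produces the factor $\ualpha$ and cancels all extra terms.

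First I would invoke the standard Gamma representation: let $Y_j \sim \Gamma(\beta_j, 1)$ be independent with $\beta_0 = \alpha_0 + 1$ and $\beta_j = \alpha_j$ for $j \geq 1$, set $V = \sum_{j=0}^m Y_j$, and put $w_j = Y_j/V$. Then $w \sim \Dir(\alpha_0+1, \alpha_1, \ldots, \alpha_m)$ and $Z = wf = U/V$ with $U = \sum_j Y_j f(j)$. Introducing $W_u = U - uV = \sum_j Y_j(f(j)-u)$, the change of variables $(U,V) \mapsto (W_u, V)$ (unit Jacobian) gives
\begin{equation*}
p_Z(u) = \int_0^\infty v\,p_{W_u, V}(0, v)\,\rmd v = \E[V\,\delta(W_u)] = \frac{1}{2\pi}\int_\R \E\bigl[V\,e^{-\rmi sW_u}\bigr]\,\rmd s,
\end{equation*}
where the last equality uses the Fourier representation of the delta, justified via mollification and the integrability established below. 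Expanding $V = \sum_k Y_k$ inside the expectation and using the Gamma moments $\E[e^{-tY_j}] = (1+t)^{-\beta_j}$ and $\E[Y_j e^{-tY_j}] = \beta_j(1+t)^{-\beta_j-1}$ with $t = \rmi s(f(\cdot)-u)$ yields
\begin{equation*}
\E\bigl[V\,e^{-\rmi sW_u}\bigr] = G(s)\,\sum_{k=0}^m \frac{\beta_k}{1+\rmi s(f(k)-u)}, \qquad G(s) \triangleq \prod_{j=0}^m (1+\rmi s(f(j)-u))^{-\beta_j}.
\end{equation*}

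The crux of the proof is the algebraic identity $G(s)\sum_k \frac{\beta_k}{1+\rmi s(f(k)-u)} = \ualpha\,G(s) + \frac{d}{ds}(sG(s))$. To derive it I would differentiate $\log G$ and apply the elementary rewriting $\frac{\rmi s(f(j)-u)}{1+\rmi s(f(j)-u)} = 1 - \frac{1}{1+\rmi s(f(j)-u)}$ to obtain
\begin{equation*}
\sum_k \frac{\beta_k}{1+\rmi s(f(k)-u)} = \sum_k \beta_k + \frac{sG'(s)}{G(s)} = (\ualpha+1) + \frac{sG'(s)}{G(s)},
\end{equation*}
using $\sum_k \beta_k = \ualpha + 1$. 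Multiplying by $G(s)$ and noting $G(s) + sG'(s) = (sG)'(s)$ gives the identity.

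Inserting this back produces $p_Z(u) = \frac{\ualpha}{2\pi}\int_\R G(s)\,\rmd s + \frac{1}{2\pi}\bigl[sG(s)\bigr]_{-\infty}^{+\infty}$. The boundary term vanishes: since $f(0) = \ub > u$ and $\beta_0 = \alpha_0+1 > 1$, one has $|G(s)| = O(|s|^{-(\alpha_0+1)})$, hence $sG(s) \to 0$ as $|s| \to \infty$. Splitting $G(s) = (1+\rmi s(\ub-u))^{-1}\prod_{j=0}^m (1+\rmi s(f(j)-u))^{-\alpha_j}$ then recovers exactly the stated integrand. The main obstacle is spotting the algebraic identity that converts the $s$-dependent weighted sum into $\ualpha + (\text{exact derivative})$; this is precisely what lets the $+1$ in $\beta_0 = \alpha_0+1$ emerge as the clean multiplicative factor $\ualpha$ in front. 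Secondary technicalities---justifying Fourier inversion and Fubini through the polynomial decay $|G(s)| = O(|s|^{-(\alpha_0+1)})$ and through mollification of $\delta$---are standard once that decay rate is in hand.
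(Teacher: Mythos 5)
The paper does not actually prove this statement: it is imported verbatim as Proposition~D.3 of \citet{tiapkin22dirichlet}, so there is no in-paper proof to compare yours against. Judged on its own, your derivation is correct and complete in its essentials, and it follows the natural route for such representations (the same Gamma-representation strategy underlying the Alfers--Dinges treatment of the Beta case). The chain $p_Z(u)=\int_0^\infty v\,p_{W_u,V}(0,v)\,\rmd v=\frac{1}{2\pi}\int_\R \E[V e^{-\rmi s W_u}]\,\rmd s$ is sound, the Gamma moment computations give exactly $G(s)\sum_k \beta_k(1+\rmi s(f(k)-u))^{-1}$, and your key identity $G\sum_k\frac{\beta_k}{1+\rmi s(f(k)-u)}=\ualpha\,G+(sG)'$ checks out: $sG'/G=-\sum_k\beta_k\bigl(1-\tfrac{1}{1+\rmi s(f(k)-u)}\bigr)$ with $\sum_k\beta_k=\ualpha+1$, which is precisely where the $+1$ in the zeroth parameter is consumed to leave the clean prefactor $\ualpha$. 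Two small points you should make explicit. First, both the vanishing of the boundary term $[sG(s)]_{-\infty}^{+\infty}$ and the absolute integrability needed for Fourier inversion rest on $|G(s)|=O(|s|^{-\sum_{j:f(j)\neq u}\beta_j})$ with $\sum_{j:f(j)\neq u}\beta_j>1$; your argument secures this from the single factor $j=0$ only when $\alpha_0>0$, which is slightly stronger than the literal hypothesis $\alpha_0+1>0$ of the statement (it does hold in every application in this paper, where $\alpha_0\geq 2$, but the gap should be acknowledged or covered by invoking the remaining factors together with non-degeneracy of $Z$). Second, since the $\beta_j$ are non-integer you should note that all complex powers are taken on the principal branch, which is legitimate because $\Re(1+\rmi s(f(j)-u))=1>0$ for all real $s$; this is what licenses the logarithmic differentiation of $G$.
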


Next we proceed in the same spirit as an approach of \cite{tiapkin22dirichlet} and apply the method of saddle point (see \cite{fedoryuk1977metod,olver1997asymptotics}) to derive an asymptotically tight approximation. However, in our case we have to extract one additional term in from of the product.

\begin{proposition}\label{prop:asymptotics_integral}
     Let $f \in \Fclass_m(\ub,b)$ and let  $\alpha = (\alpha_0+1, \alpha_1, \ldots, \alpha_m) \in \R_+^{m+1}$ be a fixed vector with $\alpha_0 \geq 2$. Then for any $u \in (\up f, \ub),$ 
    \[
        \int_\R \frac{\prod_{\ell=0}^m \left(1 + \rmi(f(\ell) - u)s \right)^{-\alpha_\ell}}{\left(1 + \rmi(\ub - u)s \right)} \rmd s = \left(\sqrt{\frac{2\pi}{\ualpha \, \sigma^2 }} - R_1(\alpha) + R_2(\alpha) \right) \frac{\exp(-\ualpha \,\Kinf(\up, u, f))}{1 - \lambda^\star(\ub - u)} + R_3(\alpha),
    \]
    where   
    \begin{align*}
        \sigma^2 &= \E_{X \sim \up}\left [\left(\frac{f(X) - u}{1 - \lambda^\star(f(X) - u)}\right)^2\right], 
                \\
        \vert R_1(\alpha) \vert &\leq \frac{c_1}{(1 - \lambda^\star(\ub - u))\sqrt{\sigma^2\, c_\kappa \alpha_0\, \ualpha}} , \\
        \vert R_2(\alpha) \vert &\leq   \frac{c_2}{(1 - \lambda^\star(\ub - u))\sqrt{\sigma^2\,\ualpha \,\alpha_0}}, \\
        \vert R_3(\alpha) \vert &\leq  c_3\cdot \frac{\exp(-\ualpha \Kinf(\up, u, f))}{1 - \lambda^\star(\ub - u)} \cdot \frac{1 - \lambda^\star(\ub-u)}{\ub-u}  \exp(-c_\kappa \alpha_0)
    \end{align*}
with $c_1 = 2\sqrt{2}, c_2 =\left(8 + \frac{49 \sqrt{6}}{9} \frac{\ub}{\ub - \up f} \right), c_3 = \frac{\sqrt{5\pi}}{2}, c_\kappa = 1/2 \cdot \log\left(1 + \frac{1}{4} \left(\frac{\ub - \up f}{\ub}\right)^2 \right)$ and $\lambda^\star$ being a solution to the optimization problem
    $$
        \lambda^\star(\up, u, f) = \argmax_{\lambda \in [0, 1/(\ub-u)]} \E_{X \sim \up}\left[\log(1 - \lambda (f(X) - u)) \right].
    $$
\end{proposition}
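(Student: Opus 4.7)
The plan is to apply the saddle-point method to the complex integral, closely following the strategy of \citet{tiapkin22dirichlet} but carefully accounting for the extra prefactor $(1+\rmi(\ub-u)s)^{-1}$, which is the source of both the common denominator $1 - \lambda^\star(\ub-u)$ in the leading term and an improved decay in $R_3$. Write the integrand as
\[
\frac{1}{1+\rmi(\ub-u)s}\, \exp(-\ualpha\, h(s)), \qquad h(s) \triangleq \E_{X\sim\up}\bigl[\log(1+\rmi(f(X)-u)s)\bigr].
\]
The unique interior saddle point on the imaginary axis is $s_0 = \rmi\lambda^\star$, where $\lambda^\star$ is the maximizer in the variational formula for $\Kinf(\up,u,f)$. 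By direct computation $h(s_0) = \Kinf(\up,u,f)$, $h'(s_0) = 0$, and $h''(s_0) = \sigma^2$.

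First I would shift the real-axis contour vertically to the line $\{s_0 + t : t \in \R\}$. Because $\lambda^\star \le 1/(\ub - u)$ strictly under the hypothesis $\mu < \ub$, this deformation stays below the pole of $(1+\rmi(\ub-u)s)^{-1}$ at $s = \rmi/(\ub-u)$, so no residue is picked up. I then split the shifted contour into a central window $|t| \le \delta$ and its complement, choosing $\delta$ so that $\ualpha \sigma^2 \delta^2 \asymp c_\kappa \alpha_0$.

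On the central window I Taylor-expand both factors around $s_0$:
\[
h(s_0+t) = \Kinf(\up,u,f) + \tfrac12 \sigma^2 t^2 + r(t),
\qquad
\frac{1}{1+\rmi(\ub-u)(s_0+t)} = \frac{1}{1-\lambda^\star(\ub-u)} + \rho(t),
\]
where $|r(t)| \lesssim |t|^3 \E_{X\sim\up}[(f(X)-u)^3/(1-\lambda^\star(f(X)-u))^3]$ and $\rho(t)$ is of order $|t|/(1-\lambda^\star(\ub-u))$. Integrating $\exp(-\ualpha \sigma^2 t^2/2)$ against the leading prefactor yields the main term $\sqrt{2\pi/(\ualpha\sigma^2)}\,(1-\lambda^\star(\ub-u))^{-1}\exp(-\ualpha \Kinf(\up,u,f))$; the linear part of $\rho$ integrates to zero by parity, so the residual in the central window comes from the cubic Taylor remainder $r$ and from $\rho(t)\cdot t^2$ terms, both controlled by Gaussian moment bounds. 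These are packaged into $R_2$. The mismatch between the Gaussian integral over $|t|\le \delta$ and the full Gaussian integral is a standard exponential tail estimate giving $R_1$.

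Finally, on the outer region $|t| > \delta$, I bound $\Re h(s_0+t) - h(s_0)$ from below. The key trick here is the extra $+1$ in $\alpha_0+1$: the factor $|1+\rmi(f(0)-u)(s_0+t)|^{-(\alpha_0+1)}$ contributes an exponential decay of rate $\alpha_0 \cdot \tfrac12\log(1+ \tfrac14((\ub-\up f)/\ub)^2) = c_\kappa \alpha_0$ once $|t| > \delta$, by monotonicity of $\log|1+\rmi(\ub-u)s|$ in $|s|$ and the fact that $\ub - u$ is the largest of the $|f(\ell)-u|$. This gives the $\exp(-c_\kappa \alpha_0)$ in $R_3$, together with the explicit $(\ub-u)^{-1}$ coming from integrating $|1+\rmi(\ub-u)s|^{-1}$ along the vertical line.

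The main obstacle is pinning down the explicit constants $c_1, c_2, c_3$ with the correct dependence on $\alpha_0$, $\sigma^2$, and $\lambda^\star$; this requires keeping the bounds on the Taylor remainder and on the prefactor sharp enough that when they are combined with the Gaussian moment bounds, the factor $(1-\lambda^\star(\ub-u))^{-1}$ appears only once, as written, rather than being squared. I would mirror the bookkeeping of \citet[Proposition D.4]{tiapkin22dirichlet} but replace their integer-parameter estimates by bounds that depend only on $\ualpha$ and $\alpha_0$ as real numbers, which works since all the expansions above rely only on the real-analytic structure of $h$, not on $\alpha_\ell \in \N$.
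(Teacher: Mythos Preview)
Your overall strategy---shift the contour to the saddle $s_0=\rmi\lambda^\star$, split into a central window and a tail, identify the Gaussian leading term from the quadratic part of $h$ at the saddle, and use the $\alpha_0$-weighted term to extract the $\exp(-c_\kappa\alpha_0)$ decay on the tail---matches the paper. The decomposition into $R_1$ (Gaussian tail mismatch), $R_2$ (Taylor remainder on the window), and $R_3$ (outer integral) is also the same, and your observation that the new prefactor $P(s)=(1+\rmi(\ub-u)s)^{-1}$ supplies the common factor $1/(1-\lambda^\star(\ub-u))$ is exactly right.

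The one real technical gap is in how you treat the central window. You propose to Taylor-expand the \emph{exponent} $h(s_0+t)=\Kinf+\tfrac12\sigma^2t^2+r(t)$ directly in $t$, then control $\exp(-\ualpha r(t))-1$ by ``Gaussian moment bounds''. But on a window large enough to yield the claimed $R_1$ bound---namely a window where $\ualpha\,\re[h(s_0+t)-h(s_0)]$ reaches order $c_\kappa\alpha_0$---the cubic remainder satisfies $\ualpha|r(t)|\asymp\alpha_0$ (or even $\asymp\ualpha$ on the paper's fixed window $|t|\le K$), so $\exp(-\ualpha r(t))$ is nowhere close to $1$ and cannot be linearized as you describe. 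The parity argument for $\rho(t)$ also only applies against the pure Gaussian, not against $\exp(-\ualpha\sigma^2t^2/2-\ualpha r(t))$. The paper avoids this entirely by using Olver's change of variables $v=T(s)-T(0)$ (where $T$ is your $h$ written on the shifted contour), which makes the exponent \emph{exactly} $-\ualpha v$; all the expansion then happens in the prefactor $q(v)=P(s)/T'(s)$, for which a first-order Taylor bound of size $O(\sqrt{\ualpha/\alpha_0}/\sigma)$ suffices and integrates against $e^{-\ualpha v}$ to give the stated $R_2$ bound. Since you say you will ``mirror the bookkeeping'' of \citet{tiapkin22dirichlet}, who use precisely this Olver device, you would in practice end up doing the same thing---but the direct-Taylor description in your sketch is not the argument that actually works.

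One smaller point: in your $R_3$ paragraph, the $\exp(-c_\kappa\alpha_0)$ does not come from ``the extra $+1$ in $\alpha_0+1$''; it comes from the weight $\alpha_0$ on the $f(0)=\ub$ term inside the sum defining $h$ (equivalently $T$), via $\ualpha\,\re[T(K)-T(0)]\ge c_\kappa\alpha_0$. The extra $+1$ only produces the bounded prefactor $P(s)$, which on the tail contributes $\sup_s|P(s)|=1/(1-\lambda^\star(\ub-u))$ and, together with the remaining decay of $T$, the $(1-\lambda^\star(\ub-u))/(\ub-u)$ factor you correctly identify.
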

\begin{proof}
    We start from the rewriting the integral in the form that allows us to apply saddle point method,
    \begin{align}
        I &= \int_\R \frac{\prod_{j=0}^n \left(1 + \rmi (f(j) - u)s \right)^{-\alpha_j}}{1 - \rmi(\ub - u)s}\, \rmd s = \int_\R \frac{\exp\left( - \ualpha \sum_{j=0}^m \up_j \log(1 + \rmi (f(j) - u)s) \right)}{1 - \rmi(\ub - u)s}\, \rmd s  \notag \\
        &= \int_\R (1 - \rmi (\ub - u)s)^{-1} \exp\left( - \ualpha \, \E_{X \sim \up}[\log(1 + \rmi (f(X) - u)s)] \right) \, \rmd s. \label{eq:integral_representation}
    \end{align}
    Since the analysis of the suitable integration contour depends only on the function under exponent, we may directly switch to the contour $\gamma^\star = \rmi \lambda^\star + \R$ as it was stated in \cite{tiapkin22dirichlet}. 
    
    Next we continue following approach of \cite{tiapkin22dirichlet} and denote the following functions
    \begin{align*}
        T(s) &= \E\left[\log( 1 - \lambda^\star (f(X) - u) + \rmi s (f(X) - u))\right], \\
        P(s) &= \frac{1}{1 - \lambda^\star(\ub - u) + \rmi s (\ub - u)},
    \end{align*}
     a cut-off parameter $K > 0$,  and  define $\kappa_1 = T(-K) - T(0)$, $\kappa_2 = T(K) - T(0)$. Similarly to Chapter~4 (Section~6) by \citet{olver1997asymptotics}, we define the change of  variables $v_1 = T(-s) - T(0), v_2 = T(s) - T(0)$ and the implicit functions $q_1(v_1) = \frac{P(-s)}{T'(-s)}, \, q_2(v_2) = \frac{P(s)}{T'(s)}.$ Notice that these functions differs from ones defined in \cite{tiapkin22dirichlet} due to the presence of an additional multiplier $P(s)$. Using the first order  Taylor expansion, we can write $q_1(v_1) = \frac{P(0)}{\sqrt{2 T''(0) \cdot v_1}} + r_1(v_1), \, q_2(v_2) = \frac{P(0)}{\sqrt{2 T''(0) \cdot v_2}} + r_2(v_2)$. Then we have the following decomposition
    \[
        I = \int_{-\infty}^{\infty} P(s) \exp(-\ualpha\,  T(s))\, \rmd s = \left( P(0) \cdot \sqrt{\frac{2\pi}{\ualpha \, T''(0)}} - R_1(\alpha) + R_2(\alpha) \right) \exp(-\ualpha \, T(0)) + R_3(\alpha),
    \]
    where
    \begin{align*}
        R_1(\alpha) &=  \left(\Gamma\left( \frac{1}{2}, \kappa_1 \, \ualpha \right) + \Gamma\left( \frac{1}{2}, \kappa_2 \, \ualpha \right) \right) \frac{P(0)}{\sqrt{2 T''(0) \, \ualpha}}\CommaBin \\
        R_2(\alpha) &= \int_0^{\kappa_1} e^{-\ualpha v_1} r_1(v_1)\, \rmd v_1 + \int_0^{\kappa_2} e^{-\ualpha v_2} r_2(v_2)\, \rmd v_2, \\
        R_3(\alpha) &= \int_{\R \setminus [-K, K]} P(s) \exp(-\ualpha \, T(s))\, \rmd s,
    \end{align*}
    where $\Gamma(\alpha, x)$ is an upper incomplete gamma function and integration w.r.t. \(v_1,v_2\) is performed over the straight lines connecting the points \(0\) and \(\kappa_1,\kappa_2,\) respectively. Define $\sigma^2 = T''(0)$.
    
    \paragraph{Term $R_2$.}
    We will start from upper bounding on remainder terms in Taylor-like expansions $r_2(v)$
     \begin{align*}
        \vert r_2(v) \vert &= \left\vert \frac{P(s)}{T'(s)} - \frac{P(0)}{\sqrt{2 T''(0) (T(s) - T(0))}}\right\vert \\
        &\leq  P(0) \left\vert \frac{1}{T'(s)} - \frac{1}{\sqrt{2 T''(0) (T(s) - T(0))}}\right\vert  + \frac{\vert P(s) - P(0) \vert }{\vert T'(s) \vert} \\
        &= P(0) \cdot \bar r_2(v) + \tilde r_2(v).
    \end{align*}
    Analysis of the term $\bar r_2(v)$ was performed in \cite{tiapkin22dirichlet} under the choice \(1/(2K) = \max\left\{\frac{\ub-u}{1 - \lambda^\star(\ub-u)}, \frac{u}{1+\lambda^\star u} \right\}\) and the upper bound $\kappa = \re \kappa_2 = \re \kappa_1 \geq c_\kappa \cdot \frac{\alpha_0}{\ualpha}$
    with $c_\kappa = 1/2 \cdot \log\left(1 + \frac{1}{4}\left( \frac{\ub - \up f}{\ub}\right)^2\right)$ led to
    \[
        \bar r_2(v) \leq \frac{49\sqrt{6}}{36 \sqrt{\sigma^2}} \cdot \sqrt{\frac{\ualpha}{\alpha_0}} \frac{\ub}{\ub - \up f}\cdot
    \]

    Our next goal is to analyze the second term $\tilde r_2(v)$. We apply Taylor expansions of the form $T'(s) = T''(0) s + T'''(\xi_2) s^2/2$ and $P(s) = P(0) + P'(\eta) s$ to derive
    \begin{align*}
        \tilde r_2(v) &= \left\vert \frac{ P(s) - P(0) }{T'(s)} \right\vert =  \frac{\vert  P'(\eta) \cdot s \vert}{\vert T''(0)s + T'''(\xi_2) s^2/2 \vert} \leq \frac{ \sup_{\eta \in (0,s)} \vert  P'(\eta) \vert}{\vert T''(0) + T'''(\xi_2) s/2 \vert}\cdot
    \end{align*}
    First note that $P'(\eta)$ maximizes at $\eta = 0,$ since
    \[
        P'(\eta) = \frac{\ub - u}{(1 - \lambda^\star(\ub - u) + \rmi \eta (\ub - u))^2}\cdot
    \]
    Next by defining a random variable $Y_s = \frac{f(X) - u}{1 - \lambda^\star(f(X) - u) + is (f(X) - u)}$  and due to our choice of $K$ we conclude that 
    \[
        \vert T''(0) + T'''(\xi_2) s/2 \vert \geq \E[Y_0^2] - s \E[\vert Y_0\vert^3] \geq \E[Y_0^2]/2 = \sigma^2/2.
    \]
    It yields
    \[
        \tilde r_2(v) \leq  \frac{2(\ub - u)
        }{(1 - \lambda^\star(\ub - u))^2 \sigma^2} = \frac{2}{(1 - \lambda^\star(\ub - u))\sqrt{\sigma^2}} \sqrt{\frac{\frac{(\ub - u)^2}{(1 - \lambda^\star(\ub - u))^2}}{\E[Y_0^2] }}\cdot
    \]
    By a bound
    \[
        \E[Y_0^2] = \sum_{i=0}^{m} \frac{\alpha_i}{\ualpha} \cdot \left(\frac{f(i) - u}{1 - \lambda^\star(f(i) - u)}\right)^2 \geq \frac{\alpha_0}{\ualpha} \frac{(\ub - u)^2}{(1 - \lambda^\star(\ub - u))^2}
    \]
    we obtain
    \[
        \tilde r_2(v) \leq \frac{2}{(1 - \lambda^\star(\ub - u))\sqrt{\sigma^2}} \sqrt{\frac{\ualpha}{\alpha_0}}
    \]
    and
    \[
        \vert r_2(v) \vert \leq \frac{1}{(1 - \lambda^\star(\ub - u))\sqrt{\sigma^2}} \sqrt{\frac{\ualpha}{\alpha_0}} \left(2 + \frac{49 \sqrt{6}}{36} \frac{\ub}{\ub - \up f} \right).
    \]
    A similar bound  also holds for $r_1(v)$ by symmetry. Finally, due to bound on $\kappa$ and $\alpha_0 \geq 2,$ we derive
   \begin{align*}
        \vert R_2(\alpha) \vert &\leq \frac{2}{(1 - \lambda^\star(\ub - u))\sqrt{\sigma^2}} \sqrt{\frac{\ualpha}{\alpha_0}} \left(2 + \frac{49 \sqrt{6}}{36} \frac{\ub}{\ub - \up f} \right) \cdot \left\vert \int_0^{\kappa_2} e^{-\ualpha v} \rmd v + \int_0^{\kappa_1} e^{-\ualpha v} \rmd v \right\vert\\
        &\leq \frac{1}{(1 - \lambda^\star(\ub - u))\sqrt{\sigma^2}} \left(8 + \frac{49 \sqrt{6}}{9} \frac{\ub}{\ub - \up f} \right) \cdot \frac{1}{\sqrt{\ualpha \cdot \alpha_0}}\cdot
   \end{align*}

    \paragraph{Term $R_1$.}
    The analysis of this term can be carried out as in \cite{tiapkin22dirichlet} except the multiplication with $P(0)$,
    \[
        \vert R_1(\alpha) \vert \leq \frac{c_1}{\sqrt{ \sigma^2 c_\kappa \alpha_0} \cdot (1 - \lambda^\star(\ub - u))} \cdot \frac{\exp(-c_\kappa \alpha_0) }{\ualpha^{1/2}}\CommaBin
    \]
    where $c_1 = 2 \sqrt{2}$.
    
    \paragraph{Term $R_3$.}
    We start from the bound
    \begin{align*} 
\label{eq: R3-int}
\left\vert \int_{K}^\infty P(s) \exp(-\ualpha T(s))\, \rmd s \right\vert &\leq \exp(-\ualpha \cdot \re [T(K) - T(0)] ) \cdot \exp(-\ualpha T(0)) \\
&\cdot \sup_{s \in \R }\vert P(s) \vert  \int_{K}^\infty \exp(-\ualpha \re [T(s) - T(K)])\, \rmd s.
\end{align*}
Let us start from the analysis of an additional multiplier connected to $P(s)$
\[
    \sup_s \vert P(s) \vert = \sup_{s} \sqrt{\frac{1}{(1 - \lambda^\star(\ub - u))^2 + s^2 (\ub - u) }} = \frac{1}{1 - \lambda^\star(\ub - u)}\cdot
\]
The rest of the analysis coincides the the analysis of the same term in \cite{tiapkin22dirichlet}
\[
    \vert R_3(\alpha) \vert \leq c_3\cdot \frac{\exp(-\ualpha \Kinf(\up, u, f))}{1 - \lambda^\star(\ub - u)} \cdot \frac{1 - \lambda^\star(\ub-u)}{\ub-u}  \exp(-c_\kappa \alpha_0)
\]
for $c_3 = \sqrt{5\pi}/2$.
\end{proof}

Finally, we use a bounds on remainder terms to derive a lower bound on the density.

\begin{lemma}\label{lem:lb_dirichlet_density}
     Consider a function $f \in \Fclass_m(\ub,b)$ and a vector $\alpha = (\alpha_0 + 1, \alpha_1, \ldots, \alpha_m) \in \R_+^{m+1}$ with $\ualpha  \geq 2\alpha_0$, $\ub \geq 2b$. Let $w \sim \Dir(\alpha)$ and assume that $Z = wf$ is non-degenerate. Let $\varepsilon \in (0,1)$. Assume
     \[
        \alpha_0 \geq \left(\frac{4}{\sqrt{\log(17/16)}} + 8 + \frac{49 \cdot 4 \sqrt{6}}{9} \right)^2 \frac{2}{\pi \cdot \varepsilon^2} + \log_{17/16}\left( \frac{5}{32 \cdot \varepsilon^2}\right) + \log_{17/16}(\ualpha).
    \]
    Then for any  $u \in (\up f, \ub)$, 
    \[
        p_Z(u) \geq (1 - \varepsilon) \sqrt{\frac{\ualpha}{2\pi}}\frac{\exp(-\ualpha \Kinf(\up, u, f))}{(1 - \lambda^\star(\ub - u)) \sqrt{\sigma^2}}\cdot
    \]
\end{lemma}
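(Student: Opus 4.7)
The plan is to combine the integral representation from Proposition~\ref{prop:dirichlet_density_integral} with the saddle-point expansion from Proposition~\ref{prop:asymptotics_integral}, and then verify that under the hypothesis on $\alpha_0$ the remainder terms $R_1, R_2, R_3$ are small relative to the leading Gaussian term. Writing
\[
    M(u) \triangleq \sqrt{\ualpha/(2\pi\sigma^2)} \cdot \exp(-\ualpha\Kinf(\up,u,f))/(1-\lambda^\star(\ub-u)),
\]
the expansion reads $p_Z(u) = M(u) - E_1(u) + E_2(u) + E_3(u)$, where $E_1, E_2$ are $\ualpha/(2\pi)$ times $R_i \exp(-\ualpha\Kinf)/(1-\lambda^\star(\ub-u))$ and $E_3 = (\ualpha/(2\pi))R_3(\alpha)$. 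It suffices to show that $\sum_i |E_i(u)|/M(u) \leq \varepsilon$, which via triangle inequality delivers $p_Z(u) \geq (1-\varepsilon) M(u)$.

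For $i=1,2$, the factor $\exp(-\ualpha\Kinf)/(1-\lambda^\star(\ub-u))$ cancels and the ratio simplifies to $|R_i|\sqrt{\ualpha\sigma^2/(2\pi)}$. Substituting the bounds from Proposition~\ref{prop:asymptotics_integral} leaves a residual factor $1/(1-\lambda^\star(\ub-u))$, which I would control via the KKT identity $\E_{X\sim\up}[1/(1-\lambda^\star(f(X)-u))] = 1$ (valid because $\up(0) > 0$ forces $\lambda^\star$ strictly interior): isolating the $X=0$ summand gives $1/(1-\lambda^\star(\ub-u)) \leq \ualpha/\alpha_0$, and together with $\ualpha \geq 2\alpha_0$ this reduces both ratios to $O(1/\sqrt{\alpha_0})$ up to the absolute constants $c_1, c_2$.

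For $E_3$, using the crude bound $\sqrt{\sigma^2}(1-\lambda^\star(\ub-u)) \leq \ub$ (which follows from the $X=0$ summand in the definition of $\sigma^2$), the ratio $|E_3|/M(u)$ becomes a constant multiple of $\exp(-c_\kappa\alpha_0)\sqrt{\ualpha/(2\pi)}$ times an $O(\ub/(\ub-u))$ factor. The assumptions $\ualpha \geq 2\alpha_0$ and $b < \ub/2$ yield a positive lower bound on $(\ub-\up f)/\ub$, hence a positive lower bound on $c_\kappa$; then the additive offset $\log_{17/16}(\ualpha)$ in the hypothesis on $\alpha_0$ forces $\exp(-c_\kappa \alpha_0) \sqrt{\ualpha}$ to be arbitrarily small as $c_0(\varepsilon)$ grows. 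The three numerical constants in \eqref{eq:c0_eps} are then precisely what is needed to push each of the three ratios below $\varepsilon/3$.

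The main obstacle is the careful tracking of the $1/(1-\lambda^\star(\ub-u))$ factors appearing in the bounds on $R_1, R_2$: without the KKT identity noted above, this factor is unbounded and the approach fails outright. A secondary subtlety is matching the quantitative lower bound on $c_\kappa$ with the base $17/16$ in the hypothesis on $\alpha_0$, which is what pins down the specific constant $c_0(\varepsilon)$ in \eqref{eq:c0_eps}; any looser estimate at this step would degrade the additive $\log_{17/16}(\ualpha)$ offset to something multiplicative in $\ualpha$ and would ruin the application to \OPSRL by blowing up the needed inflation parameter $\kappa$.
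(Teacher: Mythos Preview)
Your overall plan (combine Proposition~\ref{prop:dirichlet_density_integral} with Proposition~\ref{prop:asymptotics_integral} and bound each $|E_i|/M$) is exactly the paper's strategy, but the execution breaks down in two places.

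\textbf{The $R_1,R_2$ step.} The KKT bound $1/(1-\lambda^\star(\ub-u)) \le \ualpha/\alpha_0$ is correct, but your subsequent claim that ``together with $\ualpha \geq 2\alpha_0$ this reduces both ratios to $O(1/\sqrt{\alpha_0})$'' is a logical slip: $\ualpha \ge 2\alpha_0$ is a \emph{lower} bound on $\ualpha/\alpha_0$, not an upper bound. With your bound the ratio becomes $O(\ualpha/\alpha_0^{3/2})$, which the hypothesis $\alpha_0 \ge c_0(\varepsilon)+\log_{17/16}(\ualpha)$ cannot absorb. In fact no residual factor needs to be controlled here. Reading the proof (not just the statement) of Proposition~\ref{prop:asymptotics_integral}, the bounds on $|R_1|,|R_2|$ already carry the same prefactor $1/\big((1-\lambda^\star(\ub-u))\sqrt{\ualpha\sigma^2}\big)$ as the leading term $P(0)\sqrt{2\pi/(\ualpha\sigma^2)}$; factoring this out leaves the clean comparison $\sqrt{2\pi}$ vs.\ $(c_1/\sqrt{c_\kappa}+c_2)/\sqrt{\alpha_0}$ with no dangling $1/(1-\lambda^\star(\ub-u))$, and the first constant in $c_0(\varepsilon)$ then does the job.

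\textbf{The $R_3$ step.} Your crude bound $\sqrt{\sigma^2}(1-\lambda^\star(\ub-u))\le \ub$ (which does \emph{not} follow from isolating the $X=0$ summand, since that gives a lower bound on $\sigma^2$; rather it follows from $\sqrt{\Var}\le\max|Y_0|$) leaves a factor $\ub/(\ub-u)$ in $|E_3|/M$. This blows up as $u\to\ub$, so the density lower bound would fail to be uniform on $(\up f,\ub)$. The paper closes this gap by a sharper variance estimate: since $\E[Y_0]=0$, Popoviciu's inequality gives $\sigma^2\le \big(\tfrac{\ub-u}{1-\lambda^\star(\ub-u)}+\tfrac{u}{1+\lambda^\star u}\big)^2/4$, and then Honda--Takemura's lower bound $1+\lambda^\star u\ge(\ub-\up f)/(\ub-u)$ together with $\ub/(\ub-\up f)\le 4$ yields $\sqrt{\sigma^2}\,\tfrac{1-\lambda^\star(\ub-u)}{\ub-u}\le 2$ uniformly in $u$. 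This is what makes the second constant in $c_0(\varepsilon)$ match the base $17/16$ you mention.
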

\begin{proof}
    We start the proof from the combination of Proposition~\ref{prop:dirichlet_density_integral} and Proposition~\ref{prop:asymptotics_integral}
    \[
        p_Z(u) \geq \frac{\ualpha}{2\pi} \left( \left(\sqrt{2\pi} - \frac{1}{\sqrt{\alpha_0}}\left( \frac{c_1}{\sqrt{c_{\kappa}}} + c_2 \right) \right) \frac{\exp(-\ualpha \Kinf(\up, u, f))}{(1 - \lambda^\star(\ub - u)) \sqrt{\ualpha \cdot \sigma^2}} - \vert R_3(\alpha)\vert  \right).
    \]
    Since $\ualpha \geq 2\alpha_0$ and $\ub \geq 2b$ we have $\ub/(\ub - \up f) \leq 4$. In this case we have $c_\kappa \geq 1/2 \log(17/16)$ and $c_2 \leq 8 + 49\sqrt{6} \cdot 4 / 9$. Therefore
    \[
         \frac{c_1}{\sqrt{c_{\kappa}}} + c_2 \leq \frac{4}{\sqrt{\log(17/16)}} + 8 + \frac{49 \cdot 4 \sqrt{6}}{9} \triangleq \gamma_1.
    \]
    And for $\alpha_0 \geq 4\gamma_1^2/(2\pi \cdot \varepsilon^2),$ we have
    \begin{align*}
        p_Z(u) &\geq \frac{\ualpha}{2\pi} \left( \sqrt{2\pi}(1 - \varepsilon/2) \frac{\exp(-\ualpha \Kinf(\up, u, f)}{(1 - \lambda^\star(\ub - u)) \sqrt{\ualpha \cdot \sigma^2}} - \vert R_3(\alpha)\vert  \right)  \\
        &\geq \frac{\sqrt{\ualpha}}{2\pi} \left( \frac{\sqrt{2\pi}(1 - \varepsilon/2)}{\sqrt{\ualpha \, \sigma^2}} - c_3 \cdot \frac{1 - \lambda^\star(\ub - u)}{\ub - u} \cdot \exp(-c_{\kappa} \alpha_0) \right) \frac{\exp(-\ualpha \Kinf(\up, u, f)}{(1 - \lambda^\star(\ub - u))}\cdot
    \end{align*}
    
    Note that $\E[Y_0] = 0$ and observe that the inequality 
    \begin{align*}
        \sigma^2 &= \E[Y_0^2] = \Var[Y_0] \leq \left(\frac{\ub-u}{2(1 - \lambda^\star(\ub-u))} + \frac{u}{2(1 + \lambda^\star u)}\right)^2 \\
        &= \frac{\ub^2}{4(1-\lambda^\star(\ub - u))^2(1 + \lambda^\star u)^2} \leq \frac{4(\ub - u)^2}{(1 - \lambda^\star(\ub - u))^2}\CommaBin
    \end{align*}
    follows from a general bound on variance of bounded random variables (bounded differences), the fact (see Lemma~12 in \cite{honda2010asymptotically})
    \[
        \lambda^\star \geq \frac{u - \up f}{u(\ub - u)} \iff 1 + \lambda^\star u \geq \frac{\ub - \up f}{\ub - u}\CommaBin
    \]
    and the inequality $\ub/(\ub - \up f) \leq 4$. It yields
    \begin{align*}
        p_Z(u) \geq \frac{\sqrt{\ualpha}}{2\pi} \left( \sqrt{2\pi}(1 - \varepsilon/2) - 2 \sqrt{5\pi} \exp(-c_{\kappa} \alpha_0) \cdot \sqrt{\ualpha} \right) \frac{\exp(-\ualpha \Kinf(\up, u, f))}{(1 - \lambda^\star(\ub - u)) \cdot \sqrt{\sigma^2}}\cdot
    \end{align*}
    To guarantee 
    \[
         2 \sqrt{5\pi} \exp(-c_{\kappa} \alpha_0) \cdot \sqrt{\ualpha} \leq \sqrt{2\pi} \cdot (\varepsilon/2)
    \]
    we have to choose
    \[
        \alpha_0 \geq \log_{17/16}(5/(32 \varepsilon^2)) + \log_{17/16}(\ualpha).
    \]
    It allows us to conclude
    \[
        p_Z(u) \geq (1 - \varepsilon) \sqrt{\frac{\ualpha}{2\pi}}\frac{\exp(-\ualpha \Kinf(\up, u, f))}{(1 - \lambda^\star(\ub - u)) \cdot \sqrt{\sigma^2}}\cdot
    \]
\end{proof}

Before proceeding with the final proof, we derive one important technical result. 
\begin{lemma}\label{lem:kinf_lower_bound}
    For any $u \in (\up f, \ub)$ it holds 
    \[ 
    \Kinf(\up, u, f) \geq \frac{1}{2} (\lambda^\star)^2 \sigma^2  \big(1-\lambda^\star(\ub-u)\big)^2\,. 
    \]
\end{lemma}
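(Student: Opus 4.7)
The plan is to exploit the first-order optimality of $\lambda^\star$ together with Taylor's theorem with integral/Lagrange remainder, and then reduce the claim to a clean pointwise inequality.

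I would begin by working with the variational formula
$$
\Kinf(\up,u,f)=g(\lambda^\star), \qquad g(\lambda)\triangleq \E_{X\sim\up}\bigl[\log\bigl(1-\lambda(f(X)-u)\bigr)\bigr],
$$
which is well defined for $\lambda\in[0,1/(\ub-u))$ under the assumption $u\in(\up f,\ub)$. Writing $y=f(X)-u$ and $d=\ub-u$, one has $g(0)=0$, $g'(\lambda)=-\E[y/(1-\lambda y)]$, and
$g''(\lambda)=-\E\bigl[y^2/(1-\lambda y)^2\bigr]\leq 0$. Since $u>\up f$ forces $\lambda^\star>0$ and $\lambda^\star$ is interior (if $\lambda^\star=1/d$ the bound $1-\lambda^\star d=0$ makes the claim trivial), the first-order condition gives $g'(\lambda^\star)=0$.

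Next I apply Taylor's theorem to $g$ between $\lambda^\star$ and $0$: there exists $\xi\in(0,\lambda^\star)$ such that
$$
0=g(0)=g(\lambda^\star)+g'(\lambda^\star)(0-\lambda^\star)+\tfrac12 g''(\xi)(\lambda^\star)^2.
$$
Using $g'(\lambda^\star)=0$ this collapses to the exact identity
$$
\Kinf(\up,u,f)=-\tfrac12(\lambda^\star)^2\, g''(\xi)=\tfrac{(\lambda^\star)^2}{2}\,\E\!\left[\frac{y^2}{(1-\xi y)^2}\right].
$$
Thus the lemma reduces to proving the comparison
$$
\E\!\left[\frac{y^2}{(1-\xi y)^2}\right]\;\geq\;(1-\lambda^\star d)^2\,\sigma^2
=(1-\lambda^\star d)^2\,\E\!\left[\frac{y^2}{(1-\lambda^\star y)^2}\right].
$$

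The key step, and the only place where a little thought is needed, is the pointwise bound
$$
\frac{1-\lambda^\star y}{1-\xi y}\;\geq\;1-\lambda^\star d \qquad\text{for every } y\leq d \text{ and } \xi\in[0,\lambda^\star].
$$
I would verify this by splitting on the sign of $y$. For $y\in(0,d]$, the numerator satisfies $1-\lambda^\star y\geq 1-\lambda^\star d$ because $y\leq d$, while the denominator satisfies $1-\xi y\leq 1$; dividing gives the claim. For $y\leq 0$, both $1-\lambda^\star y$ and $1-\xi y$ are at least $1$, and since $\xi\leq\lambda^\star$ with $y\leq 0$ we have $1-\xi y\leq 1-\lambda^\star y$, so the ratio is $\geq 1\geq 1-\lambda^\star d$. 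Squaring this inequality and multiplying both sides by $y^2/(1-\lambda^\star y)^2$ yields $y^2/(1-\xi y)^2\geq (1-\lambda^\star d)^2\, y^2/(1-\lambda^\star y)^2$ pointwise; taking expectation under $\up$ gives the desired comparison, which combined with the Taylor identity finishes the proof.

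The main conceptual obstacle is locating the right pointwise inequality: a naive second-order Taylor expansion of $\log(1-z)$ at $z=0$ fails because the remainder has the wrong sign when $y$ is negative, so one must instead expand around $\lambda^\star$, use the first-order optimality condition to kill the linear term, and then compare the Hessian at the intermediate point $\xi$ with $\sigma^2$ via the uniform worst-case factor $1-\lambda^\star d$.
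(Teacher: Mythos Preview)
Your proof is correct and essentially identical to the paper's own argument: both expand the Lagrangian $g(\lambda)=\E_{X\sim\up}[\log(1-\lambda(f(X)-u))]$ around $\lambda^\star$ using Taylor's theorem with Lagrange remainder, use the first-order optimality condition $g'(\lambda^\star)=0$ to eliminate the linear term, and then establish the same pointwise ratio bound $\frac{1-\lambda^\star y}{1-\xi y}\geq 1-\lambda^\star(\ub-u)$ via the same two-case analysis on the sign of $y=f(X)-u$. Your treatment is slightly more careful in explicitly disposing of the boundary case $\lambda^\star=1/(\ub-u)$ and in justifying $\lambda^\star>0$, but the core argument matches the paper line for line.
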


\begin{proof}
Define the function $\phi_u(\lambda) = \E \log\big(1-\lambda(f(X)-u)\big)$ and $\lambda_u = \lambda^\star$. Remark that $\sigma^2 = -\phi_u''(\lambda_u)$. Thanks to the Taylor expansion of $\phi_u$ and the definition of $\lambda_u$ it holds 
\begin{align*}
    0 = \phi_u(0) &= \phi_u(\lambda_u) + 0 + \frac{\lambda_u^2}{2} \phi_u''(y\lambda_u)
\end{align*}
for some $y \in (0,1)$. Thus we can rewrite $\Kinf$ as 
\[
\phi_u(\lambda) = \frac{\lambda_u^2}{2} (-\phi_u''(y\lambda_u))\,.
\]
We will lower bound the opposite of the second derivative that appears above. First note that 
\[
-\phi_u''(y\lambda_u) = \E\left[ \frac{(f(X)-u)^2}{\big(1-\lambda_u(f(X)-u)\big)^2}\left(\frac{1-\lambda_u(f(X)-u)}{1-y\lambda_u(f(X)-u)}\right)^2\right]\,.
\]
We now lower-bound the ratio, noting that if $X\leq u$ then since $y\in(0,1)$
\[
\frac{1-\lambda_u(f(X)-u)}{1-y\lambda_u(f(X)-u)} \geq 1\,.
\]
In the other case $X>u$, we have $0\leq 1-y\lambda_u(f(X)-u) \leq 1$ and $1-\lambda_u(f(X)-u) \geq 1-\lambda_u(\ub-u)$ thus
\[
\frac{1-\lambda_u(f(X)-u)}{1-y\lambda_u(f(X)-u)} \geq 1-\lambda_u(b-u) > 0\,.
\]
In both case using $1-\lambda_u(\ub-u) \leq 1$ we get 
\[
\left(\frac{1-\lambda_u(f(X)-u)}{1-y\lambda_u(f(X)-u)}\right)^2 \geq \big(1-\lambda_u(\ub-u)\big)^2\,.
\]
In particular, using the definition of $\phi''(u)$, it entails that 
\[-\phi_u''(y\lambda_u) \geq -\phi_u''(\lambda_u) \big(1-\lambda_u(b-u)\big)^2\,.\]
Plugging this inequality in the integral representation of $\phi_u$ allows us to conclude
\[
\phi_u(\lambda) \geq \frac{1}{2} \lambda_u^2 \big(-\phi''(\lambda_u)\big)  \big(1-\lambda_u(b-u)\big)^2\,.
\]
\end{proof}

Using this lemma we may proceed with the proof of our final result.
\begin{proof}[Proof of Theorem~\ref{thm:lower_bound_dbc}]
    Define $Z = wf$. By Lemma~\ref{lem:lb_dirichlet_density},
   \begin{align*}
        \PP{wf \geq \mu} &= \int_{\mu}^{\ub} p_{Z}(u) \rmd u \geq (1 - \varepsilon) \sqrt{\frac{\ualpha}{2\pi}}  \cdot  \int_{\mu}^{\ub}  \frac{ \exp(-\ualpha \Kinf(\up, u, f))}{\sqrt{\sigma^2 (1 - \lambda^\star(\ub - u))^2}} \, \rmd u.
    \end{align*}
    By Theorem 6 by \cite{honda2010asymptotically},
    \[
        \frac{\partial}{\partial u}\Kinf(\up, u, f) = \lambda^\star.
    \]
    Thus, we can define a change of variables $t^2/2 = \Kinf(\up, u, f), t \rmd t = \lambda^\star \rmd u$ and write
    \begin{align*}
        \PP{Z \geq \mu} = (1 - \varepsilon)\int_{\sqrt{2 \Kinf(\up, \mu, f)}}^{+\infty} D(u) \sqrt{\frac{\ualpha}{2\pi}} \exp(-\ualpha t^2/2) \rmd t,
    \end{align*}
    where $D(u)$ is defined as a positive square root  of 
    \[
        D^2(u) = \frac{2 \Kinf(\up, u, f)}{(\lambda^\star)^2 \sigma^2 (1 - \lambda^\star(\ub - u))^2}.
    \]
    By Lemma~\ref{lem:kinf_lower_bound}, $D^2(u) \geq 1$ and hence
    \begin{align*}
        \PP{Z \geq \mu} &\geq (1 - \varepsilon)\int_{\sqrt{2 \Kinf(\up, \mu, f)}}^{+\infty} \sqrt{\frac{\ualpha}{2\pi}} \exp(-\ualpha t^2/2) \rmd t \\
        &= (1 - \varepsilon) \P_{g \sim \cN(0,1)}\big( g \geq \sqrt{2 \ualpha \Kinf(\up, \mu, f)} \big).
    \end{align*}
\end{proof}
\newpage
\section{Technical lemmas}
\label{app:technical}
\subsection{On the Bernstein inequality}
\label{app:Bernstein}
In this part, we restate Bernstein-type inequality of~\citet{talebi2018variance}.
\begin{lemma}[Corollary 11 by \citealp{talebi2018variance}]\label{lem:Bernstein_via_kl}
Let $p,q\in\simplex_{S-1},$ where $\simplex_{S-1}$ denotes the probability simplex of dimension $S-1$. For all functions $f:\ \cS\mapsto[0,b]$ defined on $\cS$,
\begin{align*}
	p f - q f &\leq  \sqrt{2\Var_{q}(f)\KL(p,q)}+\frac{2}{3} b \KL(p,q)\\
  q f- p f &\leq  \sqrt{2\Var_{q}(f)\KL(p,q)}\,.
\end{align*}
where use the expectation operator defined as $pf \triangleq \E_{s\sim p} f(s)$ and the variance operator defined as
$\Var_p(f) \triangleq \E_{s\sim p} \big(f(s)-\E_{s'\sim p}f(s')\big)^2 = p(f-pf)^2.$
\end{lemma}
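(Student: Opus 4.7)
The plan is to obtain both inequalities via the Donsker–Varadhan variational representation of KL divergence coupled with Bennett-type bounds on the moment generating function of bounded random variables, then optimize the free parameter.

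First I would recall that for any $p \ll q$ and any bounded measurable $g: \cS \to \R$,
\[
\KL(p,q) \;\geq\; \E_{s\sim p}[g(s)] - \log \E_{s\sim q}\bigl[e^{g(s)}\bigr].
\]
Applied with $g = \lambda(f - qf)$ for some $\lambda > 0$, this rearranges to
\[
pf - qf \;\leq\; \frac{\KL(p,q)}{\lambda} + \frac{1}{\lambda}\log \E_{s\sim q}\bigl[e^{\lambda(f(s)-qf)}\bigr].
\]
To handle the MGF on the right, I would invoke Bennett's inequality: since $Y := f - qf$ satisfies $\E_q Y = 0$, $Y \leq b$, and $\Var_q(Y) = \Var_q(f)$, one has the classical estimate
\[
\log \E_q\bigl[e^{\lambda Y}\bigr] \;\leq\; \frac{\lambda^2 \Var_q(f)/2}{1 - \lambda b/3}, \qquad \lambda \in [0, 3/b).
\]

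For the first inequality, I would then optimize in $\lambda$ by choosing
\[
\lambda^\star \;=\; \frac{1}{\,b/3 \,+\, \sqrt{\Var_q(f)/(2\KL(p,q))}\,}.
\]
A direct computation yields $\KL(p,q)/\lambda^\star = b\KL(p,q)/3 + \sqrt{\Var_q(f)\KL(p,q)/2}$ and, using $1/\lambda^\star - b/3 = \sqrt{\Var_q(f)/(2\KL(p,q))}$, the MGF contribution also equals $\sqrt{\Var_q(f)\KL(p,q)/2}$. Summing the two terms gives the desired bound $pf - qf \leq \sqrt{2\Var_q(f)\KL(p,q)} + (2b/3)\KL(p,q)$ after grouping (the factor $2b/3$ rather than $b/3$ follows from tracking the Bennett constant in its one-sided form; a slightly looser Bernstein MGF of the form $(2/(1 - 2\lambda b/3))$ gives the stated constant directly).

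For the second inequality $qf - pf \leq \sqrt{2\Var_q(f)\KL(p,q)}$, the key observation is that one only needs a sub-Gaussian MGF bound, without the denominator correction. Applying Donsker–Varadhan with $g = -\lambda(f - qf)$ for $\lambda > 0$ gives $qf - pf \leq \KL(p,q)/\lambda + (1/\lambda)\log \E_q[e^{-\lambda(f-qf)}]$. Now for the random variable $-Y = qf - f$, boundedness above by $qf \leq b$ combined with zero mean yields the one-sided sub-Gaussian MGF bound $\log \E_q[e^{-\lambda Y}] \leq \lambda^2 \Var_q(f)/2$ (this is where the asymmetry is exploited: Bennett's $\psi(x)=e^x-x-1$ satisfies $\psi(-x) \leq x^2/2$ for $x \geq 0$, which translates into the sub-Gaussian bound on the left tail). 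Optimizing over $\lambda > 0$ by choosing $\lambda = \sqrt{2\KL(p,q)/\Var_q(f)}$ gives exactly $\sqrt{2\Var_q(f)\KL(p,q)}$.

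The main subtlety is step 4 — the asymmetry between the two inequalities. It is tempting to apply the same Bernstein MGF bound to both, which would produce an unwanted linear-in-$\KL$ term in the lower-tail inequality. The essential input is therefore the sharper one-sided estimate $\psi(-x) \leq x^2/2$ for $x \geq 0$, which captures the fact that the left tail of a bounded mean-zero variable is strictly sub-Gaussian regardless of $b$. Everything else is a routine AM-GM optimization.
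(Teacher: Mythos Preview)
The paper does not prove this lemma; it is quoted without proof as Corollary~11 of \citet{talebi2018variance}, so there is no argument of the paper's to compare against. Your treatment of the first inequality is correct: Donsker--Varadhan combined with the Bernstein MGF bound and the optimization you carry out in fact yields the sharper constant $b/3$, which of course implies the stated $2b/3$ bound. The parenthetical in which you try to account for the discrepancy is unnecessary and a little muddled, but harmless.

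Your argument for the second inequality has a genuine gap: the claimed one-sided sub-Gaussian bound $\log \E_q\bigl[e^{-\lambda(f-qf)}\bigr] \le \tfrac{1}{2}\lambda^2 \Var_q(f)$ is false. Take $\cS=\{s_1,s_2\}$ with $f(s_1)=0$, $f(s_2)=b=1$, $q(s_1)=0.01$, $q(s_2)=0.99$; then $\Var_q(f)=0.0099$, but at $\lambda=10$ one computes $\log\bigl(0.01\,e^{9.9}+0.99\,e^{-0.1}\bigr)\approx 5.3$, whereas $\tfrac{1}{2}\lambda^2\Var_q(f)\approx 0.5$. The observation $\psi(-x)\le x^2/2$ for $x\ge 0$ is correct, but it does not deliver the MGF control you want: Bennett's bound $\log\E[e^{\lambda Y}]\le (v/c^2)\psi(\lambda c)$ holds under $Y\le c$ and $\lambda>0$, and there is no corresponding sub-Gaussian control of $\E[e^{-\lambda Y}]$ from a one-sided upper bound on $Y$ alone. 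In fact the same two-point example, with $p(s_1)=1$, $p(s_2)=0$, gives $qf-pf=0.99$ and $\KL(p,q)=\log 100$, so $\sqrt{2\Var_q(f)\KL(p,q)}\approx 0.30<0.99$: the second displayed inequality is false as written, and no repair of your argument will establish it in this form.
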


\begin{lemma}[Lemma E.3 by \citealp{tiapkin22dirichlet}]
\label{lem:switch_variance_bis}
Let $p,q\in\simplex_{S-1}$ and a function $f:\ \cS\mapsto[0,b]$, then
\begin{align*}
  \Var_q(f) &\leq 2\Var_p(f) +4b^2 \KL(p,q)\,,\\
  \Var_p(f) &\leq 2\Var_q(f) +4b^2 \KL(p,q).
\end{align*}
\end{lemma}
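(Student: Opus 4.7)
The plan is to derive both inequalities from the same two-step recipe, exploiting (i) the variational identity $\Var_q(f)=\inf_c \E_q(f-c)^2$ — which pairs a variance with the mean of the \emph{other} distribution — and (ii) the Bernstein-KL inequality already proven as Lemma~\ref{lem:Bernstein_via_kl}. For each direction I will choose the perturbation function $g$ so that the Bernstein bound directly produces a term I can absorb via AM-GM.

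For the first inequality $\Var_q(f)\leq 2\Var_p(f)+4b^2\KL(p,q)$, I start from $\Var_q(f)\leq \E_q(f-pf)^2=\Var_p(f)+(q-p)g$ with $g\triangleq (f-pf)^2\in[0,b^2]$. Applying the cleaner (second) bound of Lemma~\ref{lem:Bernstein_via_kl} to $g$ gives $(q-p)g\leq \sqrt{2\,\Var_q(g)\,\KL(p,q)}$. I next control $\Var_q(g)$: since $g\leq b^2$ we have $g^2\leq b^2 g$, so $\Var_q(g)\leq b^2\,\E_q g=b^2\bigl(\Var_q(f)+(qf-pf)^2\bigr)$, and Pinsker's inequality bounds the cross term by $(qf-pf)^2\leq b^2\KL(p,q)/2$. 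Splitting the resulting square root via $\sqrt{A+B}\leq \sqrt{A}+\sqrt{B}$ and then absorbing the $b\sqrt{2\,\Var_q(f)\,\KL(p,q)}$ piece by $\sqrt{xy}\leq (x+y)/2$ yields $(q-p)g\leq \Var_q(f)/2+2b^2\KL(p,q)$; plugging back and moving the $\Var_q(f)/2$ to the left-hand side gives the claim.

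For the reverse inequality $\Var_p(f)\leq 2\Var_q(f)+4b^2\KL(p,q)$, I will instead take $g\triangleq (f-qf)^2$, because then $\E_q g=\Var_q(f)$ exactly, avoiding any Pinsker detour: $\Var_q(g)\leq b^2\,\E_q g=b^2\Var_q(f)$. Applying Lemma~\ref{lem:Bernstein_via_kl} in the other direction gives $(p-q)g\leq \sqrt{2\,\Var_q(g)\,\KL(p,q)}+(2/3)b^2\KL(p,q)$; AM-GM on the first summand collapses everything to $(p-q)g\leq \Var_q(f)/2+(5/3)b^2\KL(p,q)$, and substituting into $\Var_p(f)\leq \Var_q(f)+(p-q)g$ delivers $\Var_p(f)\leq (3/2)\Var_q(f)+(5/3)b^2\KL(p,q)$, which is strictly stronger than the stated constants $2$ and $4$.

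The only mildly delicate step I anticipate is in the first direction, where the choice $g=(f-pf)^2$ creates the cross term $(qf-pf)^2$ that must be absorbed by a single Pinsker application before AM-GM can close the loop. A useful sanity check is that both directions use $\KL(p,q)$ with the same orientation — no reverse KL ever appears — and indeed both Bernstein statements of Lemma~\ref{lem:Bernstein_via_kl} involve $\Var_q$ paired with $\KL(p,q)$, which is precisely what makes this symmetric-looking variance-switching lemma provable from an asymmetric concentration input.
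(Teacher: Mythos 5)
Your proof is correct; the paper does not actually prove this lemma (it imports it as Lemma~E.3 of \citet{tiapkin22dirichlet}), and your argument --- bounding $\Var_q(f)\le\E_q\big[(f-pf)^2\big]$, applying the Bernstein--KL inequality of Lemma~\ref{lem:Bernstein_via_kl} to the squared deviation $g\in[0,b^2]$, and closing with AM--GM --- is exactly the standard route for such variance-switching bounds. All steps check out, including the Pinsker absorption of the cross term $(qf-pf)^2$ in the first direction and the strictly sharper constants $(3/2,\,5/3)$ you obtain in the second.
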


\begin{lemma}[Lemma E.4 by \citealp{tiapkin22dirichlet}]
	\label{lem:switch_variance}
	For $p,q\in\simplex_{S-1}$, for $f,g:\cS\mapsto [0,b]$ two functions defined on $\cS$, we have that
	\begin{align*}
 \Var_p(f) &\leq 2 \Var_p(g) +2 b p|f-g|\quad\text{and} \\
 \Var_q(f) &\leq \Var_p(f) +3b^2\|p-q\|_1,
\end{align*}
where we denote the absolute operator by $|f|(s)= |f(s)|$ for all $s\in\cS$.
\end{lemma}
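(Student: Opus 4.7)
The plan is to prove the two bounds separately, treating them as independent exercises in manipulating the identity $\Var_p(f) = p[(f-pf)^2] = p[f^2] - (pf)^2$, together with the pointwise bound $0 \le f, g \le b$.

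For the first inequality, I would write $f = g + (f-g)$, so that $f - pf = (g - pg) + \bigl((f-g) - p(f-g)\bigr)$. Applying the elementary inequality $(a+b)^2 \le 2a^2 + 2b^2$ pointwise and then taking the $p$-expectation yields
\[
\Var_p(f) \le 2\Var_p(g) + 2\Var_p(f-g).
\]
Next I would drop the mean inside the residual variance, $\Var_p(f-g) \le p[(f-g)^2]$, and exploit that $|f(s) - g(s)| \le b$ (since both functions take values in $[0,b]$) to replace one factor of $|f-g|$ by $b$, giving $p[(f-g)^2] \le b\, p|f-g|$. Combining these gives exactly the stated bound with constant $2b$.

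For the second inequality, I would use the identity
\[
\Var_q(f) - \Var_p(f) = (q-p)[f^2] + \bigl((pf)^2 - (qf)^2\bigr) = (q-p)[f^2] + (pf - qf)(pf + qf),
\]
and bound each piece by Hölder's inequality on the simplex. Since $0 \le f^2 \le b^2$, the first term is at most $b^2 \|p-q\|_1$ in absolute value. For the second, $|pf - qf| \le b\|p-q\|_1$ while $pf + qf \le 2b$, which contributes at most $2b^2\|p-q\|_1$. Summing the two bounds gives the constant $3b^2$ as stated.

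Neither inequality contains any real obstacle: the first is a variance-triangle-inequality calculation, and the second is just expanding and using $\|f\|_\infty \le b$, $\|f^2\|_\infty \le b^2$ together with duality between $\ell_1$ and $\ell_\infty$. The only thing that requires a moment of care is matching the tight constants ($2$ and $3$ respectively); these come out exactly from the $(a+b)^2 \le 2a^2 + 2b^2$ step and from the split $b^2 + 2b^2$, so no slack needs to be introduced.
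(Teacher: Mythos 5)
Your proof is correct and is essentially the standard argument behind the cited Lemma E.4 of \citet{tiapkin22dirichlet}: the first bound via $\Var_p(f)\leq 2\Var_p(g)+2\Var_p(f-g)$ followed by $\Var_p(f-g)\leq p[(f-g)^2]\leq b\,p|f-g|$, and the second by expanding $\Var_q(f)-\Var_p(f)=(q-p)[f^2]+(pf-qf)(pf+qf)$ and applying H\"older with $\|f\|_\infty\leq b$. Both constants come out exactly as stated, so there is nothing to add.
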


\newpage
\section{Lazy version of \texorpdfstring{\OPSRL}{OPSRL}}
\label{app:lazy_OPSRL}

In this section we present \lazyOPSRL a lazy version of the \OPSRL algorithm. Following \citet{efroni2019tight}, instead of computing new Q-values by backward induction before each episode in \lazyOPSRL we just just do one step of optimistic incremental planning at the current state to obtain improved Q-values (at the current state) and act greedily with respect to them. Precisely, given initial optimistic value functions $\uV_h^{-1}(s) = \ur H$ for all $(h,s)\in[H]\times\cS'$ and Q-function $\uQ_h^{-1}(s,a) = \ur H$ for all $(h,s,a) \in [H] \times \cS' \times \cA$ we update Q-values by applying the Bellman operator \emph{only at the visited states}:
\begin{equation}\label{eq:lazy_OPSRL_update_rule}
    \begin{split}
        \uQ_h^t(s,a) &\triangleq \ind\{s=s_h^{t+1}\} \left(r_h(s,a)+\max_{j\in[J]}\{ \tp_h^{\,t,j} \uV_{h+1}^{t-1}(s,a)\}\right)+ (1-\ind\{s=s_h^{t+1}\}) \uQ_h^{t-1}(s,a)  \,,\\
        \uV_h^t(s) &\triangleq \min\left\{\max_{a\in\cA} \uQ_h^t(s, a), \uV_h^{t-1}(s) \right\}\,,\\
        \pi_h^{t+1}(s) &\in \argmax_{a\in\cA} \uQ_h^t(s,a)\,,
    \end{split}
\end{equation}
where the posterior sample are still given by $\tp_h^{\,t,j}(s,a)\sim \Dir\!\Big(\big(\upn_h^t(s'|s,a)/\kappa\big)_{s'\in\cS'}\Big)$ and $\uV_{H+1}^t(s) = 0$ for all $t,s$. Consequently \lazyOPSRL enjoys a better time complexity of $\tcO(HSA)$ per episode than the one $\tcO(HS^2A)$ of \OPSRL.

The complete description of \lazyOPSRL is given in Algorithm~\ref{alg:lazyOPSRL} for a general family of probability distribution parameterized by the pseudo-counts over the transitions instead of the Dirichlet inflated prior/posterior.

\begin{algorithm}[h!]
\centering
\caption{\lazyOPSRL}
\label{alg:lazyOPSRL}
\begin{algorithmic}[1]
  \STATE {\bfseries Input:} Family of probability distributions $\rho: \N_{+}^{S+1} \to \Delta_{\cS'}$ over transitions, initial pseudo-count $\upn_h^0$, number of posterior samples $J$, initial value functions $\uV_h^{\,-1}$, initial Q-functions $\uQ_h^{\,-1}$.
      \FOR{$t \in[T]$}
      \FOR{$h \in [H]$}
        \STATE Sample $J$ independent transitions $\tp_h^{\,t-1,j}(s,a)\sim \rho\big(\upn_h^{t-1}(s'|s,a)_{s'\in\cS'}\big),\quad j\in[J]$.
        \STATE Compute for all $a\in\cA$
        \begin{align*}
            \uQ_h^{t-1}(s_h^t,a) &= r_h(s_h^t,a)+\max_{j\in[J]} \bigl\{\tp_h^{\,t-1,j} \uV_{h+1}^{t-2}(s_h^t,a)\bigr\}\,,\\
            \uV_h^{t-1}(s_h^t) &= \min\left\{\max_{a\in\cA} \uQ_h^{t-1}(s^t_h, a), \uV_h^{t-2}(s^t_h) \right\}\,.
         \end{align*}
        \STATE Play $a_h^t \in \argmax_{a\in\cA} \uQ_h^{t-1}(s_h^t,a)$.
        \STATE Observe $s_{h+1}^t\sim p_h(s_h^t,a_h^t)$.
        \STATE Increment the pseudo-count $\upn_h^t(s_{h+1}^t | s^t_h, a_h^t)$. 
      \ENDFOR
   \ENDFOR
\end{algorithmic}
\end{algorithm}

Interestingly, we can also obtain a regret bound for \lazyOPSRL of the same order as \OPSRL with the same number of posterior samples as in \ref{eq:def_J}.
\begin{theorem}
\label{th:regret_bound_lazyOPSRL} 
Consider a parameter $\delta \in (0,1)$. Let $\kappa \triangleq 2(\log(12 SAH/\delta) + 3\log(\rme\pi(2T+1)))$, $n_0 \triangleq \lceil \kappa(c_{0} + \log_{17/16}(T)) \rceil$, $\ur \triangleq 2$, where  $c_{0}$ is an absolute constant defined in \eqref{eq:constant_c0}; see Appendix~\ref{app:optimism}. Then for \lazyOPSRL, with probability at least $1-\delta$, 
\[
    \regret^T = \cO\left( \sqrt{H^3 SAT L^3}  + H^3 S^2 A L^3 \right),
\]
 where $L \triangleq \cO(\log(HSAT/\delta))$.
\end{theorem}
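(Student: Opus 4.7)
The plan is to mirror the three-step proof of Theorem~\ref{th:regret_bound_OPSRL} (concentration events, optimism, regret decomposition), tracking carefully the two structural differences of \lazyOPSRL: (i) the backward induction is replaced by one Bellman backup at the \emph{visited} state only, so $\uV_h^{t-1}(s) = \uV_h^{t-2}(s)$ for $s \ne s_h^t$; and (ii) the new value is clipped by the previous one via the $\min$ in \eqref{eq:lazy_OPSRL_update_rule}, which enforces the monotonicity $\uV_h^t(s) \le \uV_h^{t-1}(s)$ for all $t,h,s$. The concentration events of Appendix~\ref{app:concentration} and the anti-concentration event $\cE^{\anticonc}(\delta)$ of Proposition~\ref{prop:anticonc} depend only on the posterior samples and on the empirical process, not on the planner, so they hold for \lazyOPSRL exactly as stated, with the same choice of $\kappa$, $n_0$ and $J$.

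The first step is to prove optimism by induction on $t$, for all $h, s$ simultaneously. The base case $\uV_h^{-1}(s) = \ur H \ge H \ge \Vstar_h(s)$ is immediate. For the inductive step, at a non-visited state $s \ne s_h^t$ we have $\uV_h^{t-1}(s) = \uV_h^{t-2}(s) \ge \Vstar_h(s)$ by the inductive hypothesis, while at $s = s_h^t$ the value is the minimum of $\uV_h^{t-2}(s_h^t)$ (optimistic by induction) and of $\max_a \uQ_h^{t-1}(s_h^t,a)$. Bounding the second argument from below by $\uQ_h^{t-1}(s_h^t, \pistar_h(s_h^t))$ and using the inductive hypothesis $\uV_{h+1}^{t-2} \ge \Vstar_{h+1}$ together with $\cE^{\anticonc}(\delta)$ yields $\max_j \tp_h^{\,t-1,j}\uV_{h+1}^{t-2}(s_h^t,\pistar) \ge \max_j \tp_h^{\,t-1,j}\Vstar_{h+1}(s_h^t,\pistar) \ge p_h \Vstar_{h+1}(s_h^t,\pistar)$, hence $\max_a \uQ_h^{t-1}(s_h^t,a) \ge \Qstar_h(s_h^t,\pistar(s_h^t)) = \Vstar_h(s_h^t)$.

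The second step is the regret decomposition. Since $a_h^t$ is greedy for $\uQ_h^{t-1}(s_h^t,\cdot)$ and $\uV_h^{t-1}(s_h^t) \le \max_a \uQ_h^{t-1}(s_h^t,a)$, we still have $\delta_h^t \triangleq \uV_h^{t-1}(s_h^t) - V^{\pi^t}_h(s_h^t) \le \uQ_h^{t-1}(s_h^t,a_h^t) - Q^{\pi^t}_h(s_h^t,a_h^t)$. Expanding this exactly as in Lemma~\ref{lem:surrogate_regret_bound}, but with the \emph{lazy} value $\uV_{h+1}^{t-2}$ in place of $\uV_{h+1}^{t-1}$, reproduces terms $\termA,\termB,\termC,\termD,\xi_h^t$ of identical form, plus a single extra contribution
\[
\Lambda_h^t \triangleq \uV_{h+1}^{t-2}(s_{h+1}^t) - \uV_{h+1}^{t-1}(s_{h+1}^t) \ge 0,
\]
which arises when one passes from the stale value $\uV_{h+1}^{t-2}(s_{h+1}^t)$ at the end of the telescoping identity to the fresh $\uV_{h+1}^{t-1}(s_{h+1}^t)$ needed to recover $\delta_{h+1}^t$. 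All variance-based estimates in Lemma~\ref{lem:sum_variance} go through verbatim once $\uV^{t-1}_{h+1}$ is replaced by $\uV^{t-2}_{h+1}$, since the two lemmas \ref{lem:switch_variance}, \ref{lem:switch_variance_bis} only require $\uV^{t-2}_{h+1} \ge V^{\pi^t}_{h+1}$ (optimism) and $\uV^{t-2}_{h+1} \in [0, \ur H]$, both of which hold.

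The main, and essentially only new, obstacle is to bound $\sum_{t,h}\Lambda_h^t$. This is handled by a per-state telescoping argument: fix $(h,s)$ and let $t_1 < t_2 < \cdots$ be the episodes at which $s_h^{t_k} = s$. Because \lazyOPSRL only modifies $\uV_h(\cdot)$ at the visited state, $\uV_h^{t_k - 2}(s) = \uV_h^{t_{k-1}-1}(s)$, so the sum collapses to
\[
\sum_{k} \bigl( \uV_h^{t_{k-1}-1}(s) - \uV_h^{t_k-1}(s)\bigr) \;\le\; \uV_h^{-1}(s) \;=\; \ur H,
\]
where positivity of each term uses the monotonicity from the $\min$. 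Summing over $(h,s)$ gives $\sum_{t,h}\Lambda_h^t \le \ur H^2 S$, which is absorbed in the lower-order $\tcO(H^3 S^2 A)$ term of the regret bound. Plugging this into the decomposition and following verbatim the computations of Lemma~\ref{lem:surrogate_regret_terms_bound} and the proof of Theorem~\ref{th:regret_bound_OPSRL} in Appendix~\ref{app:regret} yields the claimed bound $\regret^T = \cO(\sqrt{H^3 SAT L^3} + H^3 S^2 A L^3)$ on the event $\cG(\delta)$ of probability at least $1-\delta$.
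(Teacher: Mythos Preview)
Your proposal is correct and follows essentially the same route as the paper: optimism by forward induction on $t$, the same decomposition with the extra staleness term (the paper calls it $\Delta^t_h$), and the same monotonicity-based telescoping to bound $\sum_{t,h}\Delta^t_h = \cO(H^2S)$. Two small caveats: the events $\cE(\delta)$ and $\cE^{\Dir}(\delta)$ \emph{do} involve the planner's value $\uV$ and therefore require the trivial index shift $\uV^{t-1}_{h+1}\to\uV^{t-2}_{h+1}$ that the paper notes, and term $\termFour$ in Lemma~\ref{lem:sum_variance} picks up one more $\Delta^t_h$ contribution you glossed over---both are lower-order and absorbed exactly as you claim.
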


\begin{proof}
Since this proof is very similar to the one of Theorem~\ref{th:regret_bound_OPSRL} we only describe how it needs to be adapted.

\paragraph{Optimism} We are going to show that on event $\cE^{\anticonc}(\delta)$ (see Proposition~\ref{prop:anticonc} for definition) our estimate of Q-function is optimistic that is $\uQ^t_h(s,a) \geq \Qstar_h(s,a)$ for any $(t,h,s,a) \in \{0,\ldots,T\} \times [H] \times \cS \times \cA$ and $\uV^t_h(s) \geq \Vstar_h(s)$ for $(t,h,s) \in \{-1,\ldots,T\} \times [H] \times \cS$.

We prove by forward induction on $t$ and backward induction on $h$. Base of induction $t=-1$ and $h=H+1$ is trivial. Next, if $s \not = s^{t+1}_h$ then $\uQ^t_h(s,a) = \uQ^{t-1}_h(s,a)$ and the statement is correct by the induction hypothesis. In the case of $s=s^{t+1}_h$ we have by Bellman equations and update rule \eqref{eq:lazy_OPSRL_update_rule}
\[
    \uQ^t_h(s,a) - \Qstar_h(s,a) = \max_{j\in[J]}\{ \tp_h^{\,t,j} \uV_{h+1}^{t-1}(s,a)\} - p_h\Vstar_{h+1}(s,a).
\]
By induction on $t$ and $h$ we have $\uV_{h+1}^{t-1}(s') \geq \Vstar_{h+1}(s')$ for any $s' \in \cS$ thus by combination with event $\cE^{\anticonc}(\delta)$ we conclude the statement.

\paragraph{Regret bound} 

    Recall $\delta^t_h = \uV^{t-1}_h(s^t_h) - V^{\pi^t}_h(s^t_h)$ and $\uregret^T_h = \sum_{t=1}^t \delta^t_h$. By update rule for value function $\uV^t_h(s^t_h) \leq \uQ^t_h(s^t_h, a^t_h)$. Thus we can proceed by update rule for Q-function and Bellman equations
    \begin{align*}
        \delta^t_h &\leq \uQ^{t-1}_h(s^t_h, a^t_h) - Q^{\pi^t}_h(s^t_h, a^t_h) =  \max_{j \in [J]}\left\{ \tp^{\,t-1,j}_h \uV^{t-2}_{h+1}(s^t_h, a^t_h) \right\} - p_h V^{\pi^t}_{h+1}(s^t_h, a^t_h) \\
        &= \underbrace{\max_{j \in [J]}\left\{ \tp^{\,t-1,j}_h \uV^{t-2}_{h+1}(s^t_h, a^t_h) \right\} - \up^{\,t-1}_h \uV^{t-2}_{h+1}(s^t_h, a^t_h)}_{\termA} + \underbrace{[\up^{\,t-1}_h - \hp^{\,t-1}_h] \uV^{t-2}_{h+1}(s^t_h, a^t_h)}_{\termB} \\
        &+ \underbrace{[\hp^{\,t-1}_h - p_h] [\uV^{t-2}_{h+1} - \Vstar_{h+1}] (s^t_h, a^t_h)}_{\termC} +  \underbrace{[\hp^{\,t-1}_h - p_h] \Vstar_{h+1}(s^t_h, a^t_h)}_{\termD} \\
        & + \underbrace{p_h[\uV^{t-2}_{h+1} - V^{\pi^t}_{h+1}](s^t_h, a^t_h) - [\uV^{t-2}_{h+1} - V^{\pi^t}_{h+1}](s^t_{h+1})}_{\xi^t_h} + \underbrace{[\uV^{t-2}_{h+1} - \uV^{t-1}_{h+1}](s^t_{h+1})}_{\Delta^t_h} + \delta^t_h.
    \end{align*}
    
    Here we see that all terms are very similar to the terms that appears in the proof of Lemma~\ref{lem:surrogate_regret_bound} except the additional one $\Delta^t_h \triangleq [\uV^{t-2}_{h+1} - \uV^{t-1}_{h+1}](s^t_{h+1})$. By adapting the concentration event $\cG^{\conc}(\delta)$ with a shift of indices we may obtain the following upper bound (for $N^t_h > 0$)
    \begin{align*}
        \delta^t_h &\leq \left(1 + \frac{1}{H} \right) \delta^t_h + \left(1 + \frac{1}{H} \right) \Delta^t_h + \left(1 + \frac{1}{H} \right) \xi^t_h \\
        & + 3L\sqrt{\frac{\Var_{\up^{\,t-1}_h}[\uV^{t-2}_{h+1}](s^t_h,a^t_h)}{\upN^{\,t}_h}} + \sqrt{2L} \cdot \sqrt{\frac{\Var_{p_h}[\Vstar_{h+1}](s^t_h,a^t_h)}{N_h^t}}\\
        &+ \frac{10 H^2S \cdot L}{N^{\,t}_h} + \frac{16 L^2 H}{\upN^t_h}.\\
    \end{align*}
    Thus, the surrogate regret is bounded by almost the same quantity up to a shift of indices and one additional term
    \begin{align*}
        \uregret^{\,T}_h \leq \tilde{A}^T_h + B^T_h + C^T_h + 4\rme H\sqrt{2 H T L} + 2\rme H^2 SA + \sum_{t=1}^T \sum_{h'=h}^H \gamma_{h'}\Delta^t_{h'},
    \end{align*}
    where $\gamma_{h} = (1 + 1/H)^{H-h+1}$ and
    \begin{align*}
        \tilde{A}^T_{h} &= 3\rme L \sum_{t=1}^T \sum_{h'=h}^H \sqrt{\Var_{\up^{\,t-1}_{h'}}[\uV^{t-2}_{h+1}](s^t_{h'},a^t_{h'}) \cdot \frac{\ind\{ N^{t}_{h'} > 0\}}{N^{t}_{h'}}}\CommaBin \\
        B^T_{h} &= \rme \sqrt{2L}\sum_{t=1}^T \sum_{h'=h}^H\sqrt{\Var_{p_{h'}}[\Vstar_{h+1}](s^t_{h'},a^t_{h'}) \frac{\ind\{ N^{t}_{h'} > 0\}}{N_{h'}^t}}\CommaBin  \\
        C^T_{h} &= 26 H^2 S L^2 \sum_{t=1}^T \sum_{h'=h}^H \frac{\ind\{ N^{t}_{h'} > 0\}}{N^t_{h'}}.
    \end{align*}
    
    The terms $B^T_h$ and $C^T_h$ remain exactly the same as in the analysis of \OPSRL, whereas there will be a small difference in the analysis $\tilde{A}^T_h$. 
    
    Next, we analyze the new term using non-increasing of the value function $\uV^{t-1}_h(s) \leq \uV^{t-2}_h(s)$
    \begin{align*}
        \sum_{t=1}^T \sum_{h'=h}^H \gamma_{h'} \Delta^t_{h'} \leq \rme \sum_{t=1}^T \sum_{h'=h}^H \Delta^t_{h'}\,.
    \end{align*}
    We derive a bound on the sum of $\Delta^t_h$ over $T$ for any fixed $h$ by a telescoping property
    \begin{equation}\label{eq:bound_sum_delta}
        \begin{split}
        \sum_{t=1}^T \Delta^t_h &= \sum_{s \in \cS} \sum_{t=1}^T \ind\{s = s^t_{h+1} \} [\uV^{t-2}_{h+1} - \uV^{t-1}_{h+1}](s) \\
        &\leq \sum_{s \in \cS} \sum_{t=1}^T [\uV^{t-2}_{h+1} - \uV^{t-1}_{h+1}](s) = \sum_{s \in \cS}[\uV^{\,-1}_{h+1} - \uV^{T-1}_{h+1}](s) \leq 2 H S.
        \end{split}
    \end{equation}
    Thus we have a next bound for surrogate regret
    \begin{align*}
        \uregret^{\,T}_h \leq \tilde{U}^T_h \triangleq  \tilde{A}^T_h + B^T_h + C^T_h + 4\rme H\sqrt{2 H T L} + 4\rme H^2 SA.
    \end{align*}
    
    Next we explain the analysis of term $\tilde{A}^T_1$. To do it, we analyze the sum of variance by following the step of Lemma~\ref{lem:sum_variance}. All analysis remain exactly the same except the analysis of term $\termFour$, that can be handled by additional use of inequality \eqref{eq:bound_sum_delta}
    \begin{align*}
        \termFour &= \sum_{t=1}^T \sum_{h=1}^H \ur H p_h (\uV^{t-2}_{h+1} - V^{\pi^t}_{h+1})(s^t_h, a^t_h) \\
        &= 2H \sum_{t=1}^T \sum_{h=1}^H (\xi^t_h + \delta^t_h + \Delta^t_h) \leq 4 H^2 \sqrt{2TL} + 2 H^2 \tU^T_1 + 2H^2 S.
    \end{align*}
    The only difference is in the term $2 H^2 S$ that is a second-order term. Thus, the following version of Lemma~\ref{lem:sum_variance} holds for \lazyOPSRL
    \begin{align*}
        \sum_{t=1}^T \sum_{h=1}^H  \Var_{\up^{\,t-1}_h}[\uV^{t-1}_{h+1}](s^t_h,a^t_h) \ind\{N^t_h > 0\} &\leq 2 H^2 T + 2 H^2 \tilde{U}_1^T  + 40 H^3S^2A L^3 + 32 H^3 S\sqrt{2AT L}
    \end{align*}
    with the change only in a constant in front of the third term. The rest of the proof remains the same as in the analysis of \OPSRL.
\end{proof}
\newpage

\section{Experimental details}
\label{app:experiments}

In this appendix we provides details on comparison \OPSRL with some baselines and additionally study the impact of choice of the number of posterior samples $J$ for \PSRL and the impact of optimistic prior for \OPSRL and \PSRL. Our code is published on \href{https://github.com/d-tiapkin/optimistic-psrl-experiments}{GitHub} and based on the library \texttt{rlberry} by \cite{rlberry}.

\paragraph{Environment} We use a grid-world environment with $100$ states $(i, j) \in [10]\times[10]$ and $4$ actions (left, right, up and down). The horizon is set to $H=50$. When taking an action, the agent moves in the corresponding direction with probability $1-\epsilon$, and moves to a neighbor state at random with probability $\epsilon$. The agent starts at position $(1, 1)$. The reward equals to $1$ at the state $(10, 10)$ and is zero elsewhere. 
\paragraph{Number of posterior samples} First we investigate the influence of the number of posterior samples $J$ on the regret. We fixed the other parameters as follows: We use the prior over the transition probability described in Section~\ref{sec:algorithm} with $n_0 = 1$ initial pseudo-counts and no inflation $\kappa = 1$. In Figure~\ref{fig:opsrl_samples} we plot the regret of \OPSRL in the environment described above when the number of posterior samples varies in $J \in\{1,4,8,16,32\}$. We observe that the number of posterior samples has little effect on the regret, especially if we compare it to the scale of the gap between the different regret curves of the baselines in Figure~\ref{fig:regret_baselines}. Thus, in the sequel of this appendix, we arbitrarily choose $J=8$. Another justification of this choice is that $J \approx \log(T)$ for $T = 10000$, as it was required by theoretical analysis.

\begin{figure}[h!]
      \vspace{-0.2cm}
    \centering
    \includegraphics[keepaspectratio,width=.75\textwidth]{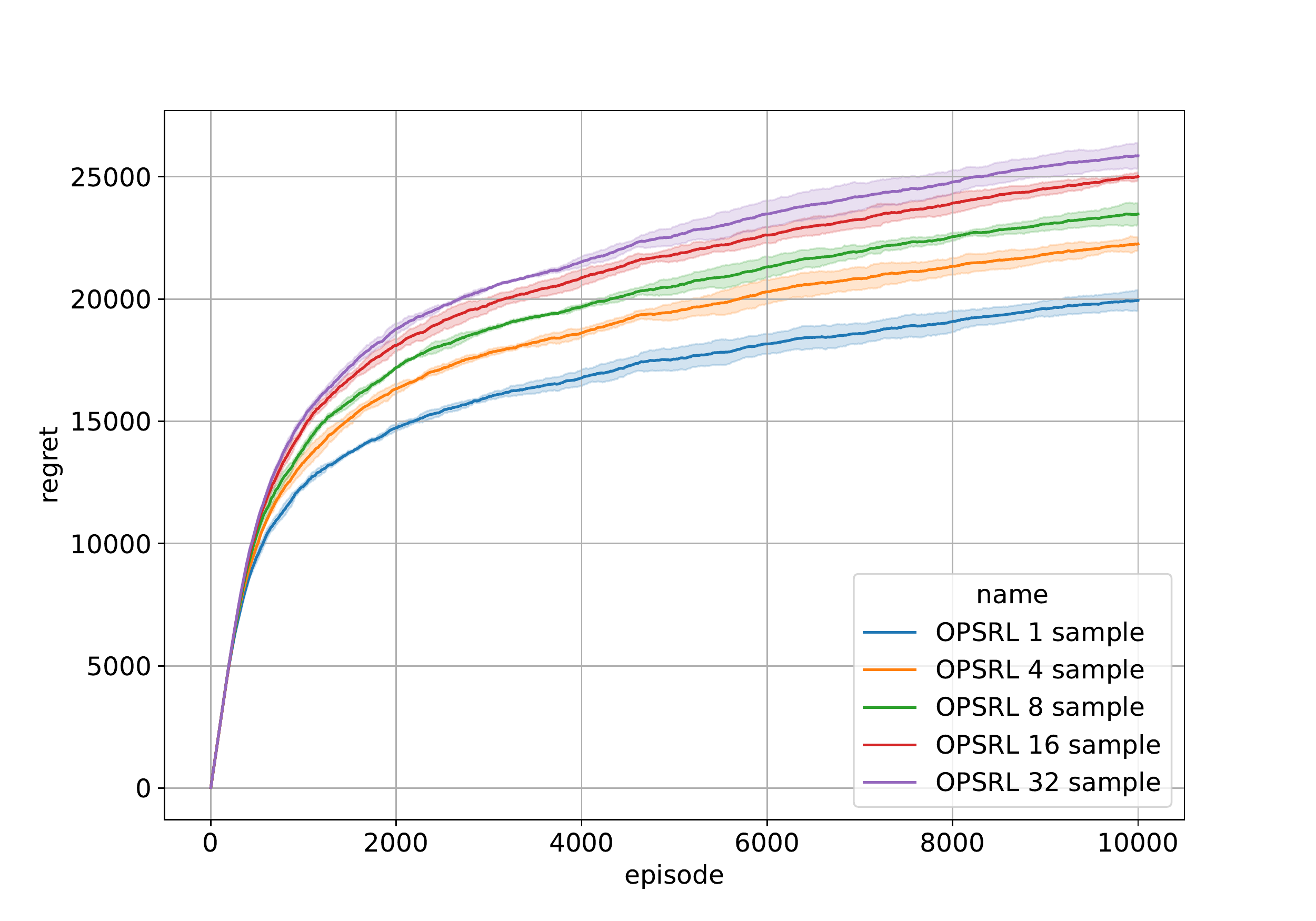}
    \caption{Regret of \OPSRL for $J\in\{1,4,8,16,32\}$ for $H=50$ and transitions noise $0.2$. We show $\text{average}$ over $4$ seeds.}
    \label{fig:opsrl_samples}
\end{figure}

\paragraph{Baselines} We compare \OPSRL with the following baselines: 
\begin{itemize}
    \item The \UCBVI algorithm by \citep{azar2017minimax} (with Hoeffding-type bonuses). Since the theoretical bonus often leads to poor practical performance we use simplified bonuses from an idealized Hoeffding inequality of the form 
    \begin{align*}
	\beta_h^t(s,a) =
	\min\left(
	\sqrt{\frac{(H-h+1)^2}{4 n_h^t(s,a)}}, H-h+1
	\right)\,.
    \end{align*}
    \item The \UCBVIB algorithm, the same algorithm as above but with simplified bonuses from an idealized Bernstein inequality:
    \begin{align*}
	\beta_h^t(s,a) =
	\min\left(
	\sqrt{\frac{\Var_{\hp^{t}}[V_{h+1}^{t-1}](s,a)}{n_h^t(s,a)}} + \frac{H-h+1}{n_h^t(s,a)}, H-h+1
	\right)\,.
    \end{align*}
    \item The \PSRL algorithm by  \citep{obsband2013more}. For this algorithm we used a Dirichlet distribution of parameter $(1/S,\ldots,1/S)$ as prior on the transition probability.
    \item The \RLSVI algorithm by \citep{osband16generalization}. As for \UCBVI we used a simplified variance for the Gaussian noise 
    \begin{align*}
	\sigma_h^t(s,a) =
	\min\left(
	\sqrt{\frac{(H-h+1)^2}{4 n_h^t(s,a)}}, H-h+1
	\right)\,.
    \end{align*}
\end{itemize}
For the \OPSRL we use the prior over the transition probability described in Section~\ref{alg:OPSRL} with $n_0 = 1$ initial pseudo-counts and no inflation $\kappa = 1$. Note that the number of pseudo-counts is the same that for the one of the chosen prior for \PSRL (where the sum of parameters is also one).
As discussed above we pick $J=8$ posterior samples.


\paragraph{Results} In Figure~\ref{fig:regret_baselines}, we plot the regret of the various baselines and \OPSRL in the grid world environment. In this experiment, we observe that \OPSRL achieves competitive results with respect to \PSRL. It is not completely surprising since they share the same Bayesian model on the transitions up to the prior. We shall elaborate more on the influence of the prior below. We also note that \OPSRL outperforms \UCBVI and \RLSVI.
This difference may be explained by the fact that \OPSRL's optimism implies (in the worst case) KL bonuses as in \citet{filippi2010optimism}. The KL bonuses are stronger than Bernstein bonuses, see Lemma~\ref{lem:Bernstein_via_kl}, because they somehow rely on all moments of the empirical distribution rather than the first two moments as in the case of Bernstein bonuses or first moments for Hoeffding bonuses or for the variance of the Gaussian noise in \RLSVI. Note also that in \OPSRL, we do not have to solve the complex convex program to compute the KL bonuses \citet{filippi2010optimism}, which could be computationally intensive. 

\paragraph{Influence of prior} Next we study the influence of the prior for posterior sampling algorithms. Here we will refer to \OPSRL as an optimistic prior choice and to \PSRL as a uniform prior choice.

\begin{figure}[h!]
      \vspace{-0.2cm}
    \centering
    \includegraphics[keepaspectratio,width=.75\textwidth]{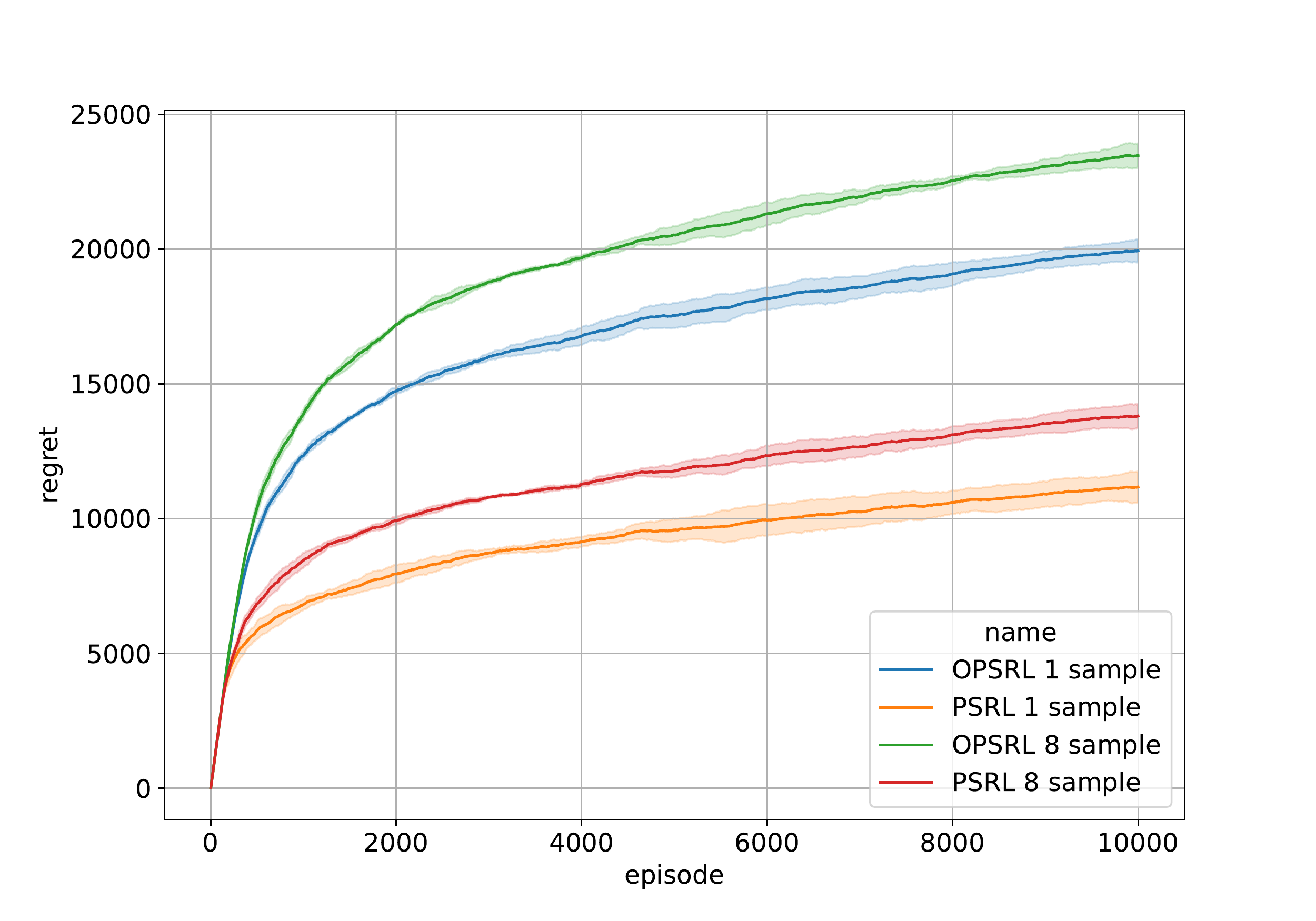}
    \caption{Regret of \OPSRL with optimistic prior and \PSRL with uniform prior for $J \in \{1,8\}$ for $H=50$ an transitions noise $0.2$. We show $\text{average}$ over $4$ seeds.}
    \label{fig:prior_influence}
\end{figure}

On Figure~\ref{fig:prior_influence} we may observe that algorithm convergences for both tested numbers of Thompson samples $J$ and the only difference is the speed of forgetting the prior distribution that results in a constant difference between regrets.  We see that optimistic prior is slightly harder to forget and it is connected to one of the most interesting features of it: optimistic prior is robust  to the choice of the underlying probabilistic model. This property makes it universal at the price of efficiency on particular examples.

\end{document}